\newcommand\cone{\mathcal{N}_{\X}}
\newcolumntype{Y}{>{\centering\arraybackslash}X}
\newcommand{\sumtsw}{\sum_{t=1}^{T-1}\|\vec x_{t} - \vec x_{t-1}\|}
\newcommand{\sumT}{\sum_{t=1}^T} 
\newcommand{\sumTO}{\sum_{t=1}^{T-1}}
\newcommand{\dt}{\|\vec{u}_{t+1} - \vec{u}_t\|}
\newcommand{\T}{\mathcal{T}}
\newcommand{\X}{\mathcal{X}}
\newcommand{\R}{\mathcal{R}}
\newcommand{\dtp}[2]{\langle {#1}, {#2} \rangle}
\renewcommand{\vec}[1]{\bm{#1}}
\DeclareMathOperator*{\argmin}{argmin}
  \renewcommand{\ALG@name}{Alg.}
\newtheorem{theorem}{Theorem}
\newtheorem{proposition}[theorem]{Proposition}
\newtheorem{lemma}[theorem]{Lemma}
\begin{document}

\twocolumn[
\runningtitle{Partially Lazy Gradient Descent}
\aistatstitle{Partially Lazy Gradient Descent for Smoothed Online Learning}
\runningauthor{Mhaisen, Iosifidis}
\aistatsauthor{ Naram Mhaisen$^{1}$ \And George Iosifidis$^{1}$}
\aistatsaddress{\vspace{2mm}$^{1}$Faculty of Electrical Engineering, Mathematics and Computer
Science.\\ TU Delft, Netherlands. } ]
\begin{abstract}
We introduce \textsc{$k$-lazyGD}, an online learning algorithm that bridges the gap between greedy Online Gradient Descent (OGD, for $k=1$) and lazy GD/dual-averaging (for $k=T$), creating a spectrum between reactive and stable updates. We analyze this spectrum in Smoothed Online Convex Optimization (SOCO), where the learner incurs both hitting and movement costs. Our main contribution is establishing that laziness is possible without sacrificing hitting performance: we prove that \textsc{$k$-lazyGD} achieves the optimal dynamic regret $\mathcal{O}(\sqrt{(P_T+1)T})$ for any laziness slack $k$ up to $\Theta(\sqrt{T/P_T})$, where $P_T$ is the comparator path length. This result formally connects the allowable laziness to the comparator's shifts, showing that \textsc{$k$-lazyGD} can retain the inherently small movements of lazy methods without compromising tracking ability. We base our analysis on the Follow the Regularized Leader (FTRL) framework, and derive a matching lower bound. Since the slack depends on $P_T$, an ensemble of learners with various slacks is used, yielding a method that is provably stable when it can be, and agile when it must be.
\end{abstract}

\section{INTRODUCTION}

We study the problem of \emph{Smoothed Online Convex Optimization} (SOCO)~\citep{cesa2013online, chen2018smoothed, zhao2020understand}, where a learner interacts with a possibly adversarial environment over $T$ rounds. At each round $t = 1, \dots, T$, the learner selects an action from a convex compact set $\vec{x}_t \in \mathcal{X}$, then incurs a \emph{hitting cost} $f_t(\vec{x}_t)$ and a \emph{movement cost} $m(\vec{x}_t, \vec{x}_{t-1})$ penalizing changes in successive decisions. We follow the OCO convention in which the hitting cost function $f_t(\cdot)$ is revealed only \emph{after} $\vec{x}_t$ is chosen. As is standard in SOCO, we set the movement cost to be the $\ell_2$ distance,
$m(\vec{x}_t, \vec{x}_{t-1}) = \|\vec{x}_t - \vec{x}_{t-1}\|$, though our analysis naturally extends to time-varying convex movement costs that are Lipschitz-continuous.

Our metric is \emph{dynamic regret with switching cost}:
\begin{align}
    \label{eq:regret}
    \hspace{-5mm}\mathcal{R}_T 
    \doteq 
    \sum_{t=1}^T \big(f_t(\vec{x}_t)-f_t(\vec{u}_t)\big)\;+\!\sumTO\|\vec{x}_{t+1} - \vec{x}_t\|.
\end{align}
The first sum is the \emph{dynamic regret} against an arbitrary comparator sequence $\{\vec{u}_t\}_{t=1}^T$, and the second one is the learner's \emph{switching cost}. The switching cost of the comparator is termed ``path length'' $
P_T \doteq \sumTO \|\vec{u}_{t+1} - \vec{u}_t\|.$\footnote{Some formulations subtract $P_T$ from $\R_T$, but this will not change the final order. See, e.g.,  \citet{zhang2021revisiting}.} 

The $\R_T$ metric jointly measures the learner’s hitting performance, that is, how well $\vec{x}_t$ competes with $\vec{u}_t$ in terms of $f_t(\cdot)$, and the stability of its actions over time. The goal is to design an algorithm \textsc{ALG} achieving
$\mathcal{R}_T^{\textsc{ALG}} = \mathcal{O}(\sqrt{(P_T+1)T})$, where $\mathcal{R}_T^{\textsc{ALG}}$ refers to the regret defined in \eqref{eq:regret} when the actions $\vec x_t$ are generated via \textsc{ALG}. 
Such a rate is minimax-optimal: it matches the best possible bound known for dynamic regret \citep[Thm. 2]{zhang2018adaptive},  which applies to the more difficult metrics that include switching costs.

\paragraph{Two variants.} Among the foundational methods in OCO, OGD~\citep{zinkevich2003online} plays a central role.
It admits two classical update rules: a greedy variant and a lazy variant.
Let $\vec{g}_t \in \partial f_t(\vec{x}_t)$ be a subgradient of $f_t(\cdot)$ at $\vec{x}_t$, and let $\sigma > 0$ denote a \emph{regularization} parameter. The greedy variant (often simply called gradient descent) takes a step from the previous iterate in the direction of $-\vec{g}_t$, scaled by $1/\sigma$,\footnote{Most presentations of~\ref{eq:gd} use a learning rate parameter $\eta$.
Here, we instead write $\sigma = 1/\eta$, which is more natural for our later derivations in the FTRL framework.} and then projects back onto the feasible set:
\begin{equation}
\vec{x}^{\text{G}}_{t+1}
= \Pi\left( \vec{x}^{\text{G}}_t - \frac{1}{\sigma} \vec{g}_t \right),
\label{eq:gd}
\tag{\textsc{GD}}
\end{equation}
where $\Pi(\cdot)$ denotes the Euclidean projection onto the convex set $\mathcal{X}$:
$\Pi(\vec{z}) \doteq \arg\min_{\vec{x} \in \mathcal{X}} \| \vec{x} - \vec{z} \|.$

In contrast, the lazy variant, known as dual averaging~\citep{xiao2009dual}, first accumulates \emph{all} past subgradients and then projects their scaled direction, making the update independent of the previous iterate:
\begin{align}
\vec{x}^{\text{L}}_{t+1}
= \Pi\left( - \frac{\vec{g}_{1:t}}{\sigma} \right),
\label{eq:lazy-gd}
\tag{\textsc{LazyGD}}
\end{align}
where $\vec g_{a:b}$ denotes the sum $\sum_{\tau=a}^b \vec g_\tau$. The two variants may coincide in both their update rules and regret guarantees.
In the unconstrained setting, $\mathcal{X} = \mathbb{R}^d$, the projection operator reduces to the identity, and the greedy update \eqref{eq:gd} can be simply unrolled into the lazy one \eqref{eq:lazy-gd}.\footnote{If $\sigma$ is not constant, this equivalence does not hold even in unconstrained settings. See \citet[Sec. 1]{liu2025dual}.}
In constrained domains, the iterates generally differ, yet both variants achieve the same optimal \emph{static} regret (i.e., when $\vec{u}_t = \vec{u}^\star, \forall t$)  of $\mathcal{O}(\sqrt{T})$. They also share the same \emph{worst-case} switching bound:
$\|\vec{x}_t - \vec{x}_{t+1}\| \leq G/\sigma$, where $G$ is the Lipschitz constant, $\|\vec g_t\| \leq G, \forall t$. This bound is a direct consequence of the non-expansiveness of the projection operator (see Appendix~\ref{app:non-expan}). Yet, empirical evidence shows that, in many cases, this switching cost bound is considerably looser for \ref{eq:lazy-gd} than it is for \ref{eq:gd}.
\paragraph{Switching behavior.} The lower switching cost of \ref{eq:lazy-gd} is arguably intuitive; accumulating gradients over time is inherently more stable than reacting to each new one individually. This distinction becomes evident in the illustrative examples of Fig.~\ref{fig:example}. In example~$(i)$, driven by an initial set of fixed gradients, both variants incur the same switching cost required to reach the boundary of the $\ell_1$-ball, paying a switching cost of $1$. However, once the gradients begin rotating thereafter, the greedy update of~\ref{eq:gd} continues to chase the most recent direction, leading to a continually increasing, though sublinear, movement cost.
In contrast, such rotations do not alter the optimum when appended to the aggregated gradients, causing the lazy iterates to stall. As a result, the lazy variant pays \emph{zero} switching cost after reaching the boundary. Moreover, since gradients are orthogonal, chasing the most recent one yields no advantage in hitting costs.

\begin{figure}
    \centering
    \includegraphics[width=0.98\linewidth]
    {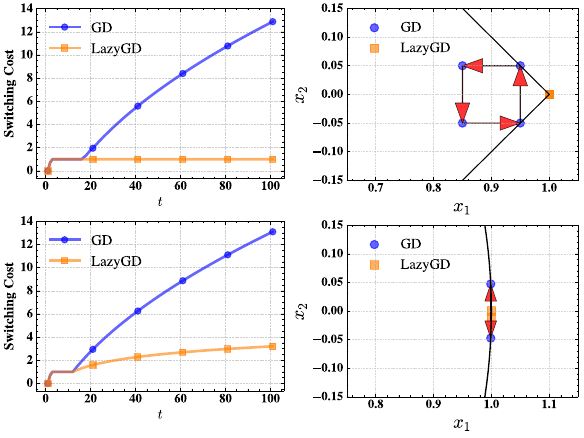}
    \label{fig:example}
    \vspace{-5mm}
\caption{Switching in example~($i$,\! top), showing \emph{staleness}, and example~($ii$,\! bottom), showing \emph{stability}. Left: switching cost. Right: Snapshots over 4 (top) and 2 (bottom) rounds: greedy updates move continuously, whereas lazy updates remain still or move minimally.}
    \vspace{-4mm}
\end{figure}
Example~$(ii)$ highlights another advantage. Here, the feasible set is the $\ell_2$-ball, and every new gradient forces both variants to move. Gradients have a fixed horizontal component and an alternating vertical one. The greedy update responds fully to each incoming gradient, as dictated by its update rule. The lazy update, however, attenuates movement in proportion to the magnitude of the \emph{aggregated} gradients, effectively leveraging their correlation. Under sufficient gradient correlation, as in this example, the movement of the lazy variant is greatly reduced. As in Example~$(i)$, chasing the most recent gradient yields no improvement in hitting cost since only the fixed horizontal component contributes to loss reduction. Additional details on both examples are provided in Appendix~\ref{app:more-details}.

These two examples illustrate each a key property underlying \textsc{LazyGD}’s superior switching behavior:
$(i)$ \emph{iterate staleness}: updates halt as long as the incoming gradient does not alter the minimizer of their accumulated sum; and
$(ii)$ \emph{iterate stability}: even when the minimizer is altered, the movement's magnitude is attenuated proportionally to the size of the accumulated sum. And while in the worst case  \ref{eq:gd} and ~\ref{eq:lazy-gd} share the same movement bound of $G/\sigma$, the difference lies in sensitivity: in the former, each new gradient can trigger a movement of such size \emph{regardless} of its effect on the \emph{direction} of accumulated gradients, or on the \emph{size} thereof, whereas \textsc{LazyGD} suppresses movement whenever either of the above properties is in effect. 

\begin{figure*}[t]
    \centering
    \includegraphics[width=0.825\linewidth]{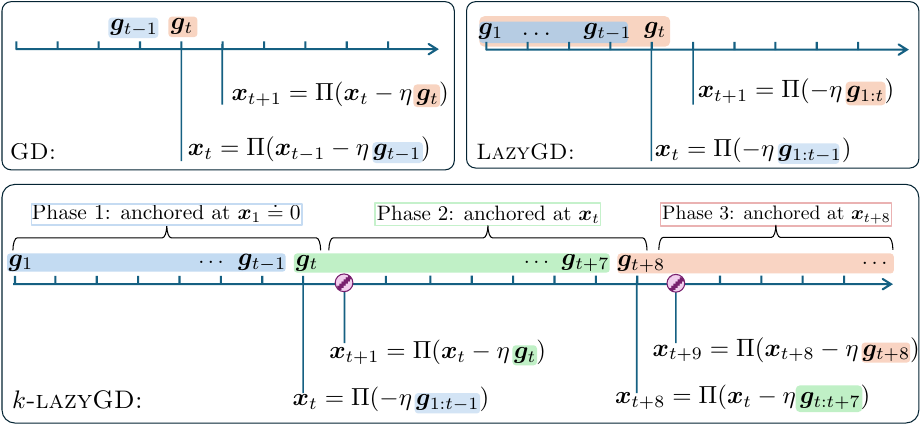}
\caption{
Top-left: \textsc{GD} updates from the previous iterate using only the most recent gradient. 
Top-right: \textsc{LazyGD} updates from the origin (assumed $0$) using cumulative gradients. 
Bottom: \textsc{$k$-lazyGD} combines these two views: it partitions time into phases (of length $k=8$ here), anchors each phase at its starting iterate, and updates from the anchor using the cumulative gradients within that phase. We use $\eta = 1/\sigma$ for clarity.
}
    \label{fig:fig}
\end{figure*}

\paragraph{Limits of lazy updates.} Despite their stable behavior and minimal movement, lazy methods have not been used in SOCO, and for a good reason. An impossibility result by~\citet{jacobsen2022parameter} shows that \emph{any} method of the~\ref{eq:lazy-gd} form suffers \emph{linear} dynamic regret, hence $\R_T=\Omega(T)$. This lower bound holds for \emph{any} non-zero path length. That is, even when the environment changes minimally, lazy methods \emph{fail} dramatically in the hitting cost. As recently observed in~\citep{pruning-icml}, the root cause is that accumulating past gradients can mask subtle but important changes in the gradient direction, rendering the lazy iterate effectively unresponsive to shifts that are critical for minimizing dynamic regret. 

This barrier highlights a tension between stability and responsiveness. Minimizing switching favors aggregating gradients over time, whereas minimizing dynamic regret requires continual adaptation to each new gradient.  
Neither extreme is ideal: greedy updates are overly reactive, while fully lazy ones can fail to track even simple non-stationary benchmarks. This paper therefore asks: \emph{what is the maximum laziness compatible with optimal dynamic regret?} That is, how far can a learner accumulate past gradients while still tracking non-stationary comparators.

\paragraph{Unifying view.} Motivated by this question, we explore the intermediate space of \emph{partial} laziness, interpolating between greedy and lazy updates.  
One natural approach is to study an update of the form: 
\begin{equation}
\begin{aligned}
    & \vec x_{t+1} = \Pi\left(\vec x_{t-n_t} - \frac{\vec g_{t-n_t:t}}{\sigma}\right) \\
    & n_t = (t-1) \!\!\!\mod (k),
\end{aligned}
\tag{\textsc{$k$-lazyGD}}
\label{k-lazy}
\end{equation}
for a given \emph{laziness slack} $k\in \{1, \dots, T\}$. Recall that $\vec g_{t-n_t:t}$ is the sum of $\vec g_\tau$ from $\tau=t-n_t$ to $\tau=t$.
This scheme recovers the greedy update of ~\ref{eq:gd} when $k = 1$ (since in this case $n_t=0, \forall t$), and the lazy update of ~\ref{eq:lazy-gd} when $k = T$,\footnote{Assuming the null initialization $\vec x_1=0$.} (since in this case $n_t=t-1, \forall t$). For intermediate values of $k$, the rule \ref{k-lazy} partitions the horizon into $T/k$ phases of length $k$ each\footnote{Only the last phase is of length $T \mod k$.}. Within each phase, gradients are accumulated as in~\ref{eq:lazy-gd}, but, crucially, the accumulation is reinitialized at the (dual mapping of) the last iterate from the preceding phase. Fig.\ref{fig:fig} visualizes an example of how the iterates are updated. To our knowledge, this phased form, and the advantages thereof, were not investigated in the literature. 

\paragraph{Contributions.} Our first result is showing that the non-standard update of~\ref{k-lazy} can be cast as an instance of the FTRL framework using the ``pruning'' trick from~\cite{pruning-icml}, who showed that responsiveness is achieved via a principled way of pruning past gradients at a given $t$. 
Here, we generalize this mechanism: we accumulate as many gradients as possible, hence preserving the staleness and stability properties, and \emph{only} trigger pruning when further accumulation would otherwise hinder the algorithm's agility and affect the dynamic regret. This FTRL reduction also enables us to reuse existing analyses in a modular way, yielding a cleaner proof structure.
  
We then turn to the question of how ``much'' laziness (stability) can be tolerated without compromising the dynamic regret. To that end, we derive a universal lower bound for \ref{k-lazy} methods that extends the impossibility result of~\citet[Thm.~1]{jacobsen2022parameter}, which corresponds to the extreme case $k = T$.
Our result reveals that the laziness slack may be as large as $k^\star=\Theta(\sqrt{T / P_T})$ while still remaining responsive enough to maintain optimal dynamic regret. Beyond this threshold, the learner becomes insufficiently responsive to track non-stationary comparators, an insight that is new to the OCO literature.

Next, we quantify the benefit of allowing slacks larger than the current $k{=}1$ standard in SOCO. We formalize the two properties of lazy updates: \emph{staleness}, which prevents unnecessary movement, and \emph{stability}, which attenuates unavoidable movement, both of which benefit from a larger $k$. We then show that \ref{k-lazy} still achieves optimal dynamic regret, while having these two properties. Our analysis extends recent work on the FTRL framework with dynamic comparators, and reveals that optimal regret can be attained without \emph{fully} discarding the structural benefits of laziness.

Lastly, the optimal choice of laziness slack depends on the comparator's non-stationarity, quantified in $P_T$. While one could specify an upper bound on $P_T$ a priori, achieving simultaneous optimality across all comparator sequences necessitates adaptivity. This challenge is addressed by the ensemble (meta-learning) framework of~\citet{zhao2024adaptivity}, 
where multiple instances of~\ref{eq:gd} with different \emph{learning rates} are run in parallel and aggregated. The dependence of the \emph{laziness slack} on $P_T$ is analogous. Accordingly, we adopt the meta-learning principle, but 
across an ensemble of~\ref{k-lazy} instances, each with a distinct $(k,\sigma)$ pair.

\section{\ref{k-lazy} as an FTRL INSTANCE}
We introduce \ref{k-lazy} as an instance of the general FTRL framework. This is achieved via a flexible pruning rule, which enables us to control the amount of history (past gradients) retained. Formally, the FTRL update, executed on the linearized extended real value function $\bar f_t(\cdot) = f_t(\cdot) + I_\X(\cdot)$, with an $\ell_2$ regularizer is: 
\begin{align}
    \label{eq:k-lazy-update-FPRL}
    \hspace{-7mm}{\vec{\hat x}}_{t+1} \!= \Pi(\vec y_{t+1}),\ \ \vec{y}_{t+1}\!= \argmin_{\vec{x}\in\mathbb{R}^{d}} \dtp{\vec p_{1:t}}{\vec x} \!+\! \frac{\sigma}{2}\|\vec x\|^2,
\end{align}
where $\vec{p}_{1:t}$ is referred to as the state vector and is calculated as the aggregation of the subgradients of $\bar f_t(\vec x_t)$:
\begin{align}
&\vec{p}_t = \vec g_t + \vec g_t^{I},\\
&\vec{g}_t \in \partial f_t(\vec {\hat x}_t),\quad \vec{g}^{I}_t \in \partial I_\mathcal{X}(\vec{\hat x}_{t}) = \cone(\vec{\hat x}_{t}).\\[-4ex]
\end{align}
\vspace{0.5mm}$\cone(\vec{x})$ denotes the normal cone of the set $\X$ at $\vec x$.\footnote{The \emph{normal cone} at $\vec{x} \in \mathcal{X}$ is the set of vectors that form a non-acute angle with every feasible direction from $\vec{x}$: $\cone(\vec{x}) \doteq \{\vec{g} \big| \dtp{\vec g}{\vec y - \vec x} \leq 0, \forall \vec y \in \mathcal{X}\}.$}
The solve-then-project routine in \eqref{eq:k-lazy-update-FPRL} can also be equivalently written as a direct minimization over $\X$ \citep[Thm. 11]{mcmahan-survey17}:
\begin{align}
    \vec{\hat x}_{t+1}\!= \argmin_{\vec{x}\in\X} \dtp{\vec p_{1:t}}{\vec x} \!+\! \frac{\sigma}{2}\|\vec x\|^2. \label{eq:k-lazy-update-FPRL2}
\end{align}
Note that the standard choice is $\vec g_t^I=0, \forall t$, which leads to the well-known FTRL iteration.
However, we observe that this is not the only choice. Specifically, we propose the following: 
for $t=1$, set $\vec{g}_1^I = 0$. $\forall t \geq 2$:
\begin{align}
\label{eq:pruning-cond-n}
 \!\!\!\!\vec g_t^{I} = 
    \begin{cases}
       -\vec p_{1:t-1} - \sigma \vec {\hat{x}}_{t} & \text{if }(\vec y_{t} \notin \mathcal{X}  \ \land\  n_t=0), 
       \\
        0 & \text{otherwise,}
    \end{cases}
\end{align}
where $n_t$ is a counter used to control the \emph{rate} of pruning to be at most every $k$ steps: $n_t = (t-1) \!\!\!\mod (k)$. 

Note that the choices made in \eqref{eq:pruning-cond-n} for  $\vec g_t^I$ are always valid; when $\vec y_t \notin \mathcal{X}$, the projection lands on the boundary: $\vec x_t \in \textbf{bd}(\mathcal{X})$, since $\mathcal{X}$ is compact. The normal cone $\cone(\vec {\hat x}_t)$ then contains nonzero vectors. In particular, it includes $-(\vec p_{1:t-1}  + \sigma \vec {\hat x}_t)$. This follows directly from the optimality condition for the constrained update in \eqref{eq:k-lazy-update-FPRL2} (see \citet[Thm. 3.67]{beck-book}). In all cases, $0$ is always a valid subgradient since all cones must contain the zero vector.

The choice of $\vec g_t^I$ makes explicit what we mean by \emph{pruning}: discarding the accumulated state vector $\vec p_{1:t-1}$ and replacing it with a representative one $\sigma \vec {\hat x}_t$.\footnote{This is the dual mapping of $\vec x_t$, with  map $\nabla \tfrac{1}{2}\|\vec x\|^2$.}

\begin{theorem}
    \label{thm:equivalence}
    The iterates of \ref{k-lazy}, $\{\vec x_t\}_{t=1}^T$, coincide with those of the FTRL routine defined above in \eqref{eq:k-lazy-update-FPRL} and \eqref{eq:pruning-cond-n}, $\{\vec {\hat x}_t\}_{t=1}^T$ . Namely:
    \begin{align}
        \vec{\hat{x}}_{t+1} = \Pi\big(\vec x_{t-n_t} - \frac{\vec g_{t-n_t:t}}{\sigma}\big), \quad  n_t = (t-1) \!\!\!\mod (k).
    \end{align} 
\end{theorem}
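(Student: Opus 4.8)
The plan is to prove the statement by induction on $t$, maintaining the hypothesis that the FTRL and \ref{k-lazy} iterates have already agreed, $\vec{\hat x}_s = \vec x_s$ for all $s \le t$, and showing that the next FTRL iterate reproduces the \ref{k-lazy} recurrence. The first step is to solve the unconstrained subproblem in \eqref{eq:k-lazy-update-FPRL} in closed form: since $\dtp{\vec p_{1:t}}{\vec x} + \tfrac{\sigma}{2}\|\vec x\|^2$ is a strongly convex quadratic, its unique minimizer is $\vec y_{t+1} = -\vec p_{1:t}/\sigma$. Everything then reduces to tracking the state vector $\vec p_{1:t}$, and in particular to showing that at the end of a phase it equals $\vec g_{t-n_t:t} - \sigma\vec{\hat x}_{t-n_t}$, so that $\vec y_{t+1} = \vec{\hat x}_{t-n_t} - \vec g_{t-n_t:t}/\sigma$.

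The crux is a \emph{reset identity} at phase boundaries. Writing $t_0 = t - n_t$ for the first index of the phase containing $t$ (so $n_{t_0}=0$), I would show that the accumulated state collapses to $\vec p_{1:t_0} = \vec g_{t_0} - \sigma\vec{\hat x}_{t_0}$, regardless of the pruning indicator. If $\vec y_{t_0}\notin\X$, the pruning branch of \eqref{eq:pruning-cond-n} sets $\vec g_{t_0}^{I} = -\vec p_{1:t_0-1} - \sigma\vec{\hat x}_{t_0}$, and the telescoping sum $\vec p_{1:t_0} = \vec p_{1:t_0-1} + \vec g_{t_0} + \vec g_{t_0}^{I}$ cancels to leave exactly $\vec g_{t_0} - \sigma\vec{\hat x}_{t_0}$. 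If instead $\vec y_{t_0}\in\X$, then $\vec{\hat x}_{t_0} = \Pi(\vec y_{t_0}) = \vec y_{t_0} = -\vec p_{1:t_0-1}/\sigma$, so $\vec p_{1:t_0-1} = -\sigma\vec{\hat x}_{t_0}$; since no pruning fires ($\vec g_{t_0}^{I}=0$) we again get $\vec p_{1:t_0} = \vec g_{t_0} - \sigma\vec{\hat x}_{t_0}$. The base case $t_0 = 1$ matches the same identity via $\vec g_1^{I}=0$, $\vec p_{1:0}=\vec 0$, and the null initialization $\vec{\hat x}_1 = \Pi(\vec 0) = \vec 0$ (assuming $\vec 0\in\X$, as in the footnote).

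With the reset established, the within-phase propagation is immediate: for each $t$ with $t_0 < t \le t_0 + k - 1$ we have $n_t\neq 0$, so the pruning branch is inactive and $\vec p_t = \vec g_t$. Adding these plain gradients on top of the reset value gives $\vec p_{1:t} = \vec g_{t_0:t} - \sigma\vec{\hat x}_{t_0}$, hence $\vec y_{t+1} = \vec{\hat x}_{t_0} - \vec g_{t_0:t}/\sigma$. Projecting, $\vec{\hat x}_{t+1} = \Pi(\vec{\hat x}_{t_0} - \vec g_{t_0:t}/\sigma)$; invoking the induction hypothesis $\vec{\hat x}_{t_0} = \vec x_{t_0}$ (and noting the same subgradients are observed once the iterates agree) yields exactly the \ref{k-lazy} update with $t_0 = t - n_t$, closing the induction.

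I expect the main obstacle to be the conditional indicator $\vec y_t\notin\X$ in \eqref{eq:pruning-cond-n}, which makes the pruning, and hence the reset, appear to happen only intermittently. The resolution is the observation used above: when $\vec y_{t_0}\in\X$ the candidate pruning vector $-\vec p_{1:t_0-1} - \sigma\vec{\hat x}_{t_0}$ is identically $\vec 0$, so the pruned and unpruned updates coincide and the reset identity holds at \emph{every} phase boundary unconditionally. Checking this equivalence carefully, together with the edge cases — the explicit $t=1$ initialization and the possibly shorter final phase of length $T \bmod k$, which the argument covers verbatim since it never uses that a phase has full length $k$ — is where the bookkeeping demands the most care.
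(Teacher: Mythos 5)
Your proposal is correct and follows essentially the same route as the paper's proof: induction on $t$, the closed-form unconstrained minimizer $\vec y_{t+1} = -\vec p_{1:t}/\sigma$, a two-case analysis at phase starts (either pruning fires and telescopes the state, or the pruning vector is identically zero because $\vec y_{t_0}\in\X$), followed by plain gradient accumulation within the phase. Your ``reset identity'' $\vec p_{1:t_0} = \vec g_{t_0} - \sigma\vec{\hat x}_{t_0}$ is exactly the paper's consequence \eqref{eq:0-means}, and your explicit strong-induction hypothesis merely makes precise what the paper uses implicitly when it invokes its Case 1 at the earlier round $\tau = t - n_t$.
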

\vspace{-6mm}
\begin{proof}
We proceed by induction. Let the initialization be equal $\vec{\hat x}_1 = \vec{x}_1 = 0 $, and let the Induction Hypothesis (IH) be $\vec{\hat x}_t = \vec{x}_t$, for some $t$. We now prove the inductive step: $\vec{\hat x}_{t+1} = \vec{ x}_{t+1}$.

Note that both, \ref{k-lazy} and FTRL in \eqref{eq:k-lazy-update-FPRL}, perform a Euclidean projection of a given vector. Since the projection is unique, it suffices to show the equivalence of those vectors. That is, we want to show that:
\begin{align}
    \label{eq:unconstrained-vectors}
    \vec y_{t+1} = \vec x_{t-n_t} - \frac{\vec g_{t-n_t:t}}{\sigma}.
\end{align}
By the optimality conditions of the unconstrained minimization in  \eqref{eq:k-lazy-update-FPRL} we know that, for all $t$, 
\begin{align}
    \label{eq:FPRL-sol-unconst}
    \vec y_{t+1} = - \frac{\vec p_{1:t}}{\sigma}.
\end{align}
Hence, by substituting \eqref{eq:FPRL-sol-unconst} in \eqref{eq:unconstrained-vectors}, it suffices to show: 
\begin{align}
    \label{eq:to-show}
    - \frac{\vec p_{1:t}}{\sigma} \;=\; \vec x_{t-n_t} - \frac{\vec g_{t-n_t:t}}{\sigma}.
\end{align}
We distinguish two cases based on $n_t$.

\textbf{Case 1:} $n_t = 0$.
\textit{subcase 1.a:} $\vec g_t^I = 0$. From the condition in \eqref{eq:pruning-cond-n}, this implies $\vec y_t \in \X$, and hence the projection $\vec {\hat x}_t$ is $\vec y_t$. It follows that
\begin{align}
    \vec {\hat x}_t \stackrel{\eqref{eq:FPRL-sol-unconst}}{=} \frac{\vec p_{1:t-1}}{\sigma} \stackrel{\text{IH}}{=} \vec x_t.
\end{align}
\eqref{eq:to-show} follows by adding $-\frac{\vec{g}_t}{\sigma}$ to both sides and recalling that in this subcase, we have $ n_t = 0$ and $\vec g_t^I = 0$.

\textit{subcase 1.b:} $\vec g_t^I = -\vec p_{1:t-1} - \sigma \vec {\hat x}_t$. 
In this case, we write
\begin{align}
    \hspace{-2mm}\frac{-\vec{p}_{1:t}}{\sigma}= \frac{-\vec{p}_{1:t-1}}{\sigma} -\frac{\vec g_t}{\sigma} -\frac{\vec{g}_t^I}{\sigma} = \vec{\hat x}_{t} - \frac{\vec{g}_t}{\sigma} \stackrel{\text{IH}}{=} \vec x_{t} - \frac{\vec{g}_t}{\sigma},
\end{align}
showing that \eqref{eq:to-show} follows from the subcase conditions.
From both subcases, we have that for any index $\tau$, 
\begin{align}
    \label{eq:0-means}
    n_\tau = 0\;\Rightarrow\;
    \vec x_\tau - \frac{\vec{g}_\tau}{\sigma} = \frac{-\vec p_{1:\tau}}{\sigma}.
    \\[-4ex]
\end{align}
\textbf{Case 2:} $0<n_t\leq k-1$
In this case, let $\tau \doteq t - n_t$ denote the last round in which pruning is attempted. By construction, we have $n_\tau = 0$, so Case 1 applies at round $\tau$. Hence, from \eqref{eq:0-means}, we obtain:
\begin{align}
    \label{eq:to-show-case2}
    - \frac{\vec p_{1:t-n_t}}{\sigma} \;=\; \vec x_{t-n_t} - \frac{\vec g_{t-n_t}}{\sigma}.
\end{align}
Now observe that by the pruning rule \eqref{eq:pruning-cond-n}, no pruning occurs in the $k-1$ steps following $\tau$. That is, pruning is inactive on the interval $\{\tau+1, \ldots, \tau+(k-1)\} $. Substituting $\tau = t - n_t$, we find that pruning is inactive during the interval $\{t - n_t + 1, \ldots, t\}$.
Thus
\begin{align}
    \vec{g}_{t-n_t+1}^I = \cdots = \vec{g}_t^I = 0.
\end{align}
Using this fact, and adding $-{\vec g_{t-n_t+1:t}}/{\sigma}$ to both sides of \eqref{eq:to-show-case2}, we arrive at  \eqref{eq:to-show}.
\end{proof}
In summary, \ref{k-lazy} is an FTRL instance with a specific choice of linearization $\vec p_t$. This perspective aids both the analysis and further generalization.

\section{A LOWER BOUND FOR LAZY ALGORITHMS}
Since the presented FTRL variant is not covered by existing lower bounds\footnote{Except the universal $\sqrt{(P_T{+1})T}$ (independent of $k$).},
we construct an adversarial instance showing that any~\ref{k-lazy} algorithm must incur dynamic regret linear in $k$. Building on~\citep[Thm.~2]{jacobsen2022parameter}, 
we generalize their argument for any laziness slack $k$ and path length $P_T$.
\begin{theorem}\label{thm:lb}
For any \ref{k-lazy} learner and any $\tau \in [1, 2R(T-1)]$, where $R$ is the radius of $\X$, there exists  a  comparator $\{\vec u_t\}_{t=1}^T$, with $P_T=\lfloor \tau \rfloor$, and a sequence $\{f_t(\cdot)\}_{t=1}^T$ such that $\mathcal{R}_T \;=\; \Omega\!\left(k\,P_T\right)$.
\end{theorem}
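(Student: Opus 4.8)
The plan is to build an explicit one‑dimensional adversary and then tile it across the algorithm's phases. It suffices to take $\X=[-R,R]$ (the general case follows by restricting the instance to a diameter of $\X$ and feeding gradients along it, so that by symmetry the iterates stay on that segment), and to use linear losses $f_t(\vec x)=g_t\vec x$ with $|g_t|\le G$, so the revealed subgradient $g_t$ is independent of the played action and fully under the adversary's control. Since the learner's switching cost in \eqref{eq:regret} is nonnegative, it suffices to lower bound the hitting term $\sum_t g_t(\vec x_t-\vec u_t)$. Crucially, the phase structure of \ref{k-lazy} is fixed and known: pruning is attempted exactly at rounds $1,k+1,2k+1,\dots$, so the adversary can align its construction with these phases. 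Within one phase starting at $\tau_0$ (with $\vec x_{\tau_0}=0$), Theorem~\ref{thm:equivalence} gives $\vec x_{\tau_0+j}=\Pi\!\big(-g_{\tau_0:\tau_0+j-1}/\sigma\big)$, i.e.\ the iterate is a projection of the running within‑phase gradient sum, reset to the base point $0$ at each phase boundary.

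The core building block is a single ``trap'' phase of length $k$ that forces $\Omega(k)$ hitting regret while the comparator moves only $O(R)$. I would split the phase in half: feed $g_t=-G$ for the first $k/2$ rounds (driving the learner toward $+R$, with the comparator parked at the then‑optimal corner $+R$), and $g_t=+G$ for the remaining $k/2$ rounds, jumping the comparator to the new optimal corner $-R$. The key and $\sigma$‑independent observation is that the within‑phase partial sum $g_{\tau_0:\tau_0+j-1}$ stays $\le 0$ throughout the second half, so every second‑half action is a projection of a nonnegative quantity and hence satisfies $\vec x_{\tau_0+j}\ge 0$; meanwhile the comparator sits at $-R$, so each of these $k/2$ rounds contributes $g_t(\vec x_t-\vec u_t)=G(\vec x_t+R)\ge GR$, regardless of the learner's step size. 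Equally important, the net within‑phase gradient sum is exactly $0$, so the learner returns precisely to the base point $0$ at the boundary, making the trap self‑resetting and therefore tileable.

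With this block in hand, I would tile $m$ traps back‑to‑back, alternating the driving direction so that the comparator's end‑of‑phase corner coincides with the next phase's starting corner; this yields exactly one comparator move of length $2R$ per phase and total path length $P_T=\Theta(mR)$, while the accumulated hitting regret is $\ge (GRk/2)\,m=\Omega(k\,P_T)$. Choosing $m\asymp P_T/R$ subject to the round budget $mk\le T$, then tuning the magnitude of the final move (and leaving any remaining rounds stationary with zero gradients) to realize exactly $P_T=\lfloor\tau\rfloor$, completes the construction. The special case $m=1$, $k=T$ recovers the single long trap of \citet[Thm.~2]{jacobsen2022parameter}.

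The main obstacle, which this construction is designed to neutralize, is making the bound hold for every regularization $\sigma$ while remaining cleanly tileable: I must guarantee both that the learner is provably stuck on the wrong side of the origin for a full $\Theta(k)$ rounds independent of $\sigma$ (secured by the sign of the partial sums together with non‑expansiveness of $\Pi$, never relying on the learner actually reaching a corner) and that it returns exactly to the base point so phases do not interfere (secured by the zero net within‑phase sum). A secondary point is the bookkeeping of the exact path length $\lfloor\tau\rfloor$ and the round budget; since each trap consumes $k$ rounds and contributes $\Theta(R)$ path, the construction covers the regime $P_T=O(RT/k)$, which is exactly the range in which $\Omega(kP_T)$ is not already absorbed by the trivial $O(T)$ ceiling on $\R_T$.
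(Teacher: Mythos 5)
Your proposal is correct and takes essentially the same route as the paper's proof: period-$k$ square-wave gradients aligned with the algorithm's phase boundaries, a zero net gradient sum per phase so the iterate provably resets at each block start, a partial-sum sign argument trapping the learner on the wrong side of the origin for half of every block (independently of $\sigma$), and a corner-hopping comparator spending $\Theta(R)$ path per active block, yielding $\Omega(kP_T)$. The only differences are cosmetic: you argue directly in the projection form while the paper works in the FTRL form and verifies that the pruning correction $\vec g_t^I$ never fires (so the method coincides with \textsc{LazyGD} on the constructed sequence), your alternation of the driving direction saves the comparator one reset switch per block, and the regime restriction you explicitly flag ($P_T = O(RT/k)$, so that enough blocks exist) is implicitly present, though unstated, in the paper's own active-block accounting.
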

\begin{proof}
    Let the decision space be the line segment $\mathcal{X} = [-1, 1]$.
Let the set of sub-intervals be defined as
\[
    \mathcal{T}_i = [(i-1)k+1, \dots, ik] \quad \text{for } i = 1, \dots, \lfloor T/k\rfloor .
\]
For every interval $\mathcal{T}_i$, the adversary defines the cost as $f_t(x) = g_t\:x$, where  $g_t$ is a simple square wave:
\begin{align}
\label{eq:cost-const}
\!\!\!\!g_t = \begin{cases}
+1, & \text{for } t \in \left[(i-1)k+1,\; (i-1)k + \frac{k}{2}\right], \\
-1, & \text{for } t \in \left[(i-1)k + \frac{k}{2}+1,\; ik\right],
\end{cases}
\end{align}
where $k$ is assumed even w.l.o.g.\footnote{For an odd $k$, we construct the square wave using the even $k-1$, and set the $k^{\text{th}}$ cost to $0$. We defer this detail to Appendix~\ref{app:LB-details} since it is mainly index manipulation.}. Concatenating all intervals $\{\mathcal{T}_i\}_i$ yields the full horizon.

By construction, the cost sequence is $k$-periodic ($g_t=g_{t+k}$) with zero sum per base period. Hence, any window of length $ik$ sums to zero:
\begin{align}
    \label{eq:sum-base-period}
    \hspace{-8mm}\sum_{t=j+1}^{j+ik}\! g_t = 0,\,  
    i{\in}\{1,\dots,\lfloor T/k\rfloor\},\, j{\in}\{0,\dots,T{-}ik\}.
\end{align}
Moreover, since the sequence begins with the positive half, sums starting at $1$ are nonnegative: $g_{1:t}\ge0,~\forall t$.

We claim that, with the cost sequence above, and for any~\eqref{k-lazy} variant, the iterates satisfy $x_t \le 0$ for all $t$. 
It is enough to show that $g^I_t = 0$ for every $t$, because in such case the update becomes
\begin{align}
    &\!\!\!x_t = \Pi(y_t) \stackrel{(a)}{=} \min\big(1, \max(-1, y_t)\big)\stackrel{\eqref{eq:k-lazy-update-FPRL}}{=}
    \min\big(1,
    \\
    \,
    &\max(-1, \frac{-g_{1:t-1}}{\sigma})\big)
    \stackrel{(b)}{=}\max\big(-1, \frac{-g_{1:t-1}}{\sigma}\big) \leq 0.
\end{align}
Here, $(a)$ uses that projecting onto $[-1,1]$ is a sign-preserving clipping, and $(b)$ follows from $g_{1:t}\geq0$.

\begin{figure}[t]
    \!\!\!\!\begin{minipage}{0.67\linewidth}
        \includegraphics[width=\linewidth]{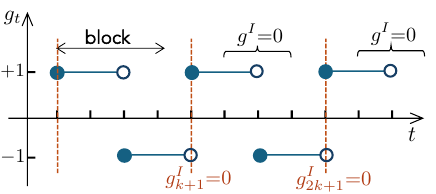}
    \end{minipage}%
    \hfill
    \begin{minipage}{0.33\linewidth}
    \vspace{-4mm}\caption{The constructed sequence ($k=4$). $g_t^I$ is zero within blocks by \eqref{eq:pruning-cond-n}, and at block starts since $y=0$ then.}    \label{fig:LB}
    \end{minipage}
    \vspace{-2mm}
\end{figure}

It remains to show that $g_t^I=0$ for all $t$. 
Indeed, the construction in~\eqref{eq:pruning-cond-n} implies that $g_t^I$ can only be nonzero at indices of the form $t=jk{+}1 \le T$ for some $j \ge 0$:
\begin{align}
\label{eq:lb-rest-is-0}
\hspace{-8mm}g^I_t = 0 \quad \forall\, t \notin \mathcal{I} \doteq \{\,jk{+}1 : j \ge 0,\, jk{+}1 \le T\,\}.
\end{align}
The proof therefore reduces to showing $g^I_t=0$ for all $t \in \mathcal{I}$. These indices mark the start of each block (Fig.~\ref{fig:LB}). We use induction over blocks.

\textit{base case:} $g_1^I=0$ is true by definition. 
\textit{Induction Hypothesis (IH)}: $g^I_{lk+1} = 0, \ l \in\{1,\dots,j\}$. 
Now we show the inductive step. Intuitively, we show that at each pruning opportunity  (the dashed lines in Fig.~\ref{fig:LB}), the unconstrained iterate in \eqref{eq:k-lazy-update-FPRL} has already returned to the feasible set, so no pruning occurs: $y_{(j+1)k+1}$
\vspace{1mm}
\begin{align}
    \!\!&{=}\frac{-p_{1:(j+1)k}}{\sigma} 
    \stackrel{\eqref{eq:lb-rest-is-0}}{=}\frac{1}{\sigma}\big(
        -\underbrace{p_{1:jk+1}}_{\mathclap{\substack{\text{inside \& start}\\\text{of prev. blocks}}}}
        \,-\,\overbrace{g_{jk+2:(j+1)k}}^{\smash{\text{inside last block}}}\!\!\big)
    \\&\stackrel{\text{(IH$\And$\eqref{eq:lb-rest-is-0})}}{\;=\;}\frac{-g_{1:(j+1)k}}{\sigma} 
    \stackrel{\eqref{eq:sum-base-period}}{\;=\;} 0. 
\end{align}
This shows that $y_{(j+1)k+1} \in \X$, implying that $g^I_{(j+1)k+1} = 0 $, and completing the inductive step.

We proceed to calculate the regret of these non-positive actions.
Let $\tau' \doteq \lfloor \tau \rfloor$ denote our comparator’s switch budget.
We declare $\T_1$ \emph{active}, and then mark a block active if we can spend \emph{two} switch units inside it. Thus, the number of active blocks $A$ is
\begin{equation}
\label{eq:num-active}
A = 1 + \left\lfloor \frac{\tau'-1}{2} \right\rfloor
= \left\lceil \frac{\tau'}{2} \right\rceil
\end{equation}
Fix an active block $\T_i$.
By the argument above, $x_t\le 0$.
\begin{align}
\sum_{t\in\T_i} g_t x_t
&\ge \sum_{t=(i-1)k+1}^{(i-1)k+k/2} g_t x_t
\;\ge\; -\sum_{t=1}^{k/2} 1
\;=\; -\frac{k}{2}.
\label{eq:learner-half}
\end{align}
where the first equality is because $g_tx_t$ is positive in the second half of $\T_i$.

For the comparator cost, in $\T_1$, the comparator starts at $u_t=-1$ and switches to $+1$ at the midpoint, hence incurring loss $-1$ at every step:
$\sum_{t\in\T_1} g_t u_t = -k.$
In every subsequent active block $\T_i$ ($i\ge 2$), the comparator spends \emph{two} switches: reset $u_t=-1$ at the block’s first step and flip to $+1$ at its midpoint. This again yields
$\sum_{t\in\T_i} g_t u_t = -k.$
Summing over active blocks
\[
\R_T
\ge
\frac{k}{2}\,A
=
\frac{k}{2}\,\left\lceil \frac{\tau'}{2} \right\rceil
\ge
\frac{k}{4}\,\tau',
\]
from which the lower bound follows.
\end{proof}
\textbf{Remarks.} The result clarifies why any $P_T \geq 1$ forces linear regret for \ref{eq:lazy-gd} ($k{=}T$). It also reveals a gap in admissible laziness: while \ref{eq:gd} ($k{=}1$) achieves $\sqrt{(P_T{+}1)T}$ rate, the lower bound \emph{suggests} that this rate may still be attainable with a larger slack of up to $\Theta(\sqrt{T/P_T})$. We establish next a matching upper bound, showing that this threshold indeed marks the true tradeoff between laziness and regret. But first, we formalize how a larger $k$ improves switching cost. 


\section{ANALYSIS OF \ref{k-lazy}}
\textbf{Assumptions.} Throughout, we make the standard Lipschitzness and compact-domain assumptions: \(\mathcal{X}\subset\mathbb{R}^d\) is convex and \(\mathcal{X}\subseteq\{\vec x:\|\vec x\|\le R\}\); (sub)gradients satisfy \(\|\vec g_t\|\le G\) for all \(t\) with \(\vec g_t\in\partial f_t(\vec x_t)\).

We formalize the staleness and stability properties mentioned in the introductory example. Fig.~\ref{fig:lazy-vs-greedy} depicts a simple case that contrasts a lazy and a greedy update in polyhedral and strongly convex domains. In the former, large normal cones at the vertices cause a stall in the lazy update, while the greedy update still moves. In the latter, cones reduce to rays, yet the same perturbation induces a smaller displacement when the unconstrained iterate is farther from the boundary.

\begin{figure}[t]
    \centering
    \includegraphics[width=0.99\linewidth]{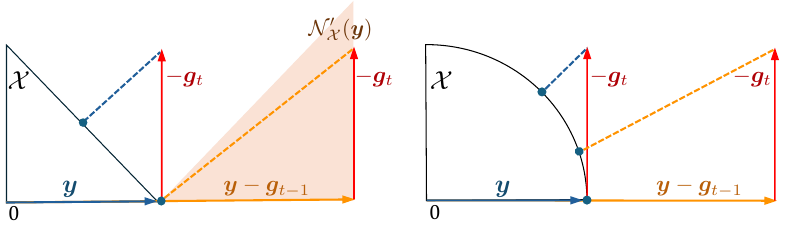}
\caption{Geometric intuition for the effect of lazy vs. greedy updates in $\ell_1$ (left) and $\ell_2$ (right) ball domains. 
From $y$, greedy projects $y-g_t$, while lazy projects $y-g_t-g_{t-1}$ ($k=2$, $\sigma=1$). 
Blue dots are projections.}
    \label{fig:lazy-vs-greedy}
    \vspace{-3mm}
\end{figure}

\subsection{Iterate Staleness}
Denote by $\cone(\vec x_t)$ a closed convex cone associated with $\vec x_t$, and define its translation $\cone'(\vec x_t)$ as: 
\begin{align}
\cone'(\vec x_t) \doteq \vec x_t + \cone(\vec x_t) \;=\; \{ \vec x_t + \vec v : \vec v \in \cone(\vec x_t)\}.
\end{align}
\begin{proposition}[Staleness of Lazy Iterates]\label{prop:staleness}
For any $t$ at which the gradient $\vec g_{t}$ satisfies the inclusion
\begin{equation}\label{eq:st1}
     \vec y_{t+1}=\vec x_{t-n_t} -\frac{1}{\sigma} \vec g_{t-n_t:t} \;\in\; \cone'(\vec x_t),
\end{equation}
for some $0 < n_t\leq k-1$, we have that
\[\|\vec x_{t+1} - \vec x_{t}\| = 0.\]
\end{proposition}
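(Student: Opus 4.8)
The plan is to reduce the claim to the standard variational characterization of the Euclidean projection, because the hypothesis \eqref{eq:st1} is precisely the optimality condition certifying that $\vec x_t$ is the projection of $\vec y_{t+1}$. First I would invoke Theorem~\ref{thm:equivalence}: it guarantees that the next iterate is $\vec x_{t+1} = \Pi(\vec y_{t+1})$ with $\vec y_{t+1} = \vec x_{t-n_t} - \tfrac{1}{\sigma}\vec g_{t-n_t:t}$, exactly the vector appearing in \eqref{eq:st1}. So it suffices to show $\Pi(\vec y_{t+1}) = \vec x_t$, since then $\|\vec x_{t+1}-\vec x_t\| = 0$.

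Next I would recall the projection characterization for a closed convex set $\X$: for any $\vec z$ and any $\vec w \in \X$, one has $\Pi(\vec z) = \vec w$ if and only if $\dtp{\vec z - \vec w}{\vec y - \vec w} \le 0$ for all $\vec y \in \X$, which by the footnote definition of the normal cone is exactly $\vec z - \vec w \in \cone(\vec w)$. The translated-cone hypothesis $\vec y_{t+1} \in \cone'(\vec x_t) = \vec x_t + \cone(\vec x_t)$ rewrites as $\vec y_{t+1} - \vec x_t \in \cone(\vec x_t)$, i.e.\ this very optimality condition with $\vec z = \vec y_{t+1}$ and $\vec w = \vec x_t$. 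Since $\vec x_t \in \X$ (it is itself a projected iterate), the characterization immediately yields $\Pi(\vec y_{t+1}) = \vec x_t$, hence $\vec x_{t+1} = \vec x_t$ and $\|\vec x_{t+1} - \vec x_t\| = 0$.

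I do not expect a substantial obstacle: the argument is a one-line consequence of the projection optimality condition, and the entire content is the observation that \eqref{eq:st1} is that condition in disguise. The only points needing care are bookkeeping. I would (i) confirm via Theorem~\ref{thm:equivalence} that the displayed $\vec y_{t+1}$ is genuinely the vector being projected — the restriction $0 < n_t \le k-1$ places the step strictly inside a phase, so the accumulated-gradient form holds with no pruning term ($\vec g_t^I$) interfering; and (ii) check that $\vec x_t \in \X$ so that the characterization applies. Rather than re-deriving the projection characterization, I would cite it as the optimality condition already referenced for \eqref{eq:k-lazy-update-FPRL2} (e.g.\ \citet{beck-book}).
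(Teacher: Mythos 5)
Your proposal is correct and follows essentially the same route as the paper's proof: both recognize that the hypothesis \eqref{eq:st1} is precisely the variational (first-order optimality) characterization of the Euclidean projection, so that $\vec x_t = \Pi(\vec y_{t+1})$, and then identify $\Pi(\vec y_{t+1})$ with $\vec x_{t+1}$ via the \ref{k-lazy} update rule. Your added bookkeeping (checking $\vec x_t \in \X$ and unwinding $\cone'(\vec x_t)$ into the inner-product condition) is just a more explicit rendering of the same argument.
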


\begin{proof}
From the first order optimality conditions of the Euclidean projection, and the uniqueness of the projection, we have that for any $\vec z$:
\begin{align}
    \label{eq:st2}
    \vec x_t = \Pi( \vec z) \iff \vec z \in \cone'(\vec x_t). 
\end{align}
Comparing \eqref{eq:st1} and \eqref{eq:st2}, we conclude that 
\begin{align}
    \vec x_t = \Pi\big(\vec x_{t-n_t} -\frac{1}{\sigma}\vec g_{t-n_t:t}\big) = \vec{x}_{t+1}.
\end{align}
The last equality by \ref{k-lazy}'s update rule.
\end{proof}
\textbf{Remarks.}
The condition in \eqref{eq:st1} becomes more likely as the laziness parameter $k$ increases. In the lazy case ($k>1$), the newest gradient $\vec g_t$ is added to an \emph{existing aggregation} of up to $k-1$ previous gradients. Its individual influence is therefore dampened; the single most recent gradient is less likely to change the aggregate direction enough to \emph{leave} the cone $\cone'(\vec x_t)$. As a result, the iterate is more likely to become ``stale'' ($\vec x_{t+1} = \vec x_t$), as stated in the proposition.

In contrast, for the greedy case ($k=1$), the update depends entirely on the single, most recent gradient. For the iterate to remain unchanged, we would need $\vec x_{t} - \frac{1}{\sigma} \vec g_{t} \in \cone'(\vec x_t)$, which can be a much stricter condition than \eqref{eq:st1} with a general $k$. This illustrates the ``laziness advantage'': a state that includes a primal point \emph{plus} accumulated gradients is less sensitive to the variance of a single new gradient than a state consisting of \emph{only} a bare primal point.

\subsection{Iterate Stability}
\begin{proposition}[Stability of Lazy Iterates]
    \label{prop:stability}
    For any $t$ at which the gradient $\vec g_{t}$ satisfies the inequality:    
    \begin{align}
        \hspace{-8mm}\|\vec y_{t+1}\| = \|\vec x_{t-n_t} - \frac{\vec g_{t-n_t:t}}{\sigma} \| \geq R+\frac{\alpha G\,n_t+\|\vec g_t\|}{\sigma}, 
        \label{eq:prop-2-cond}
    \end{align}
    for  some $0 <n_t\leq k-1, \alpha\in[0,1]$, and assuming an $\ell_2$-ball domain, we have that 
\begin{align}
    \label{eq:stability1}
    \hspace{-12mm}\|\vec x_{t+1} - \vec x_t\|
    &\le
    \frac{R\,G}{R\sigma + \alpha\,G\,{n_t}}.
\end{align}
That is, when the unconstrained iterate is outside $\X$,\footnote {Otherwise, the projection is not active, and the lazy and greedy states $\vec y_{t+1}$ coincide. The prop. focuses on the interesting case where they differ and affect switching.}and gradients have some correlation parametrized by $\alpha$, the movement is further attenuated. 
\end{proposition}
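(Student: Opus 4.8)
The plan is to reduce the claim to a purely geometric statement about the \emph{radial} projection onto the $\ell_2$-ball, and then to exploit that, under hypothesis \eqref{eq:prop-2-cond}, the two relevant unconstrained iterates lie strictly outside $\X$ at distance at least $\rho \doteq R + \alpha G n_t/\sigma$ from the origin. The attenuation factor $R/\rho$ in the target bound is exactly $\tfrac{R}{\rho}\cdot\tfrac{G}{\sigma}$, which signals that the right route is to show the projection \emph{contracts} distances by $R/\rho$ in the region $\{\vec z:\|\vec z\|\ge\rho\}$.

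First I would make the iterates explicit. By Theorem~\ref{thm:equivalence}, $\vec x_t = \Pi(\vec y_t)$ and $\vec x_{t+1} = \Pi(\vec y_{t+1})$. Since $0<n_t\le k-1$, no pruning is triggered at round $t$ (the condition in \eqref{eq:pruning-cond-n} requires $n_t=0$), so $\vec g_t^{I}=0$ and the states telescope as $\vec y_{t+1}=\vec y_t-\vec g_t/\sigma$; in particular $\|\vec y_{t+1}-\vec y_t\|=\|\vec g_t\|/\sigma$. The hypothesis gives $\|\vec y_{t+1}\|\ge R+(\alpha G n_t+\|\vec g_t\|)/\sigma\ge\rho$, and a reverse–triangle step, $\|\vec y_t\|\ge\|\vec y_{t+1}\|-\|\vec g_t\|/\sigma\ge\rho$, shows both pre-images have norm at least $\rho\ge R$. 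Hence both projections land on the boundary sphere and coincide with the radial map $\Pi(\vec z)=R\vec z/\|\vec z\|$.

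The crux is the contraction lemma: if $\|\vec a\|,\|\vec b\|\ge\rho\ge R$, then $\|\Pi(\vec a)-\Pi(\vec b)\|\le(R/\rho)\,\|\vec a-\vec b\|$. I would prove it through the angle $\theta$ between $\vec a$ and $\vec b$. Because the radial projection preserves directions, $\Pi(\vec a)$ and $\Pi(\vec b)$ lie on the sphere of radius $R$ and subtend the same angle $\theta$, so their chord is \emph{exactly} $\|\Pi(\vec a)-\Pi(\vec b)\|=2R\sin(\theta/2)$. On the other hand, $\|\vec a-\vec b\|^2=\|\vec a\|^2+\|\vec b\|^2-2\|\vec a\|\|\vec b\|\cos\theta$ is increasing in each norm on $\{\|\vec a\|,\|\vec b\|\ge\rho\}$, hence minimized at $\|\vec a\|=\|\vec b\|=\rho$, giving $\|\vec a-\vec b\|\ge 2\rho\sin(\theta/2)$. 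Dividing the two bounds yields the ratio $R/\rho$. Applying this with $\vec a=\vec y_{t+1}$, $\vec b=\vec y_t$ gives $\|\vec x_{t+1}-\vec x_t\|\le(R/\rho)\,\|\vec g_t\|/\sigma\le RG/(R\sigma+\alpha G n_t)$, where the last step uses $\|\vec g_t\|\le G$ and substitutes $\rho$.

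The main obstacle is obtaining the \emph{clean} constant in the contraction lemma. Plain nonexpansiveness of $\Pi$ only gives a Lipschitz factor of $1$, i.e. the useless greedy bound $G/\sigma$, while the naive normalized-vector inequality $\|\hat{\vec a}-\hat{\vec b}\|\le 2\|\vec a-\vec b\|/\|\vec a\|$ loses a spurious factor of two. The attenuation $R/\rho<1$ that encodes the \emph{stability} phenomenon is recovered only by using that both pre-images sit at distance at least $\rho$ and arguing through the \emph{shared} subtended angle as above; the chord identity $2R\sin(\theta/2)$ paired with the lower bound $2\rho\sin(\theta/2)$ is what makes the $\sin(\theta/2)$ factors cancel and produces the exact constant $R/\rho$.
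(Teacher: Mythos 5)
Your proof is correct and reaches the paper's exact bound, but the key contraction step is obtained by a genuinely different argument. The paper proves the contraction analytically: it uses that the radial map $\Pi(\vec y)=R\vec y/\|\vec y\|$ has Jacobian with operator norm $R/\|\vec y\|$ (its Appendix~I lemma), applies a mean-value/integral bound along the segment joining $\vec y_t$ and $\vec y_{t+1}$ to get $\|\Pi(\vec y_{t+1})-\Pi(\vec y_t)\|\le R\,\|\vec y_{t+1}-\vec y_t\|/\min_{\vec z}\|\vec z\|$, and then lower-bounds the minimum norm along that segment by $R+\alpha G n_t/\sigma$ via the reverse triangle inequality. You instead prove a standalone, calculus-free contraction lemma through the subtended angle: the projections lie on the $R$-sphere with chord exactly $2R\sin(\theta/2)$, while the pre-image distance is at least $2\rho\sin(\theta/2)$, and the $\sin(\theta/2)$ factors cancel to give the factor $R/\rho$. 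Your route only requires the two endpoints (not every point of the segment) to have norm at least $\rho$, a small structural advantage since $\{\vec z:\|\vec z\|\ge\rho\}$ is not convex; the paper's differential route, on the other hand, is the one that extends to general strongly convex domains (as the paper remarks after the proposition), where no exact chord identity is available. Both proofs handle the telescoping $\vec y_{t+1}=\vec y_t-\vec g_t/\sigma$ in essentially the same way: you via $\vec g_t^I=0$ whenever $n_t\neq 0$ from \eqref{eq:pruning-cond-n}, the paper via index arithmetic on the projection form.

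One step in your contraction lemma needs repair. The claim that $\|\vec a-\vec b\|^2=\|\vec a\|^2+\|\vec b\|^2-2\|\vec a\|\|\vec b\|\cos\theta$ is ``increasing in each norm'' on $\{\|\vec a\|,\|\vec b\|\ge\rho\}$ is false when $\cos\theta>0$: the partial derivative in $\|\vec a\|$ equals $2\|\vec a\|-2\|\vec b\|\cos\theta$, which is negative once $\|\vec b\|>\|\vec a\|/\cos\theta$. The conclusion you need (minimum attained at the corner $\|\vec a\|=\|\vec b\|=\rho$) is still true, and follows in one line from
\begin{align}
\|\vec a-\vec b\|^2=\big(\|\vec a\|-\|\vec b\|\big)^2+2\|\vec a\|\|\vec b\|\big(1-\cos\theta\big)\;\ge\; 2\rho^2\big(1-\cos\theta\big)=4\rho^2\sin^2(\theta/2),
\end{align}
which gives $\|\vec a-\vec b\|\ge 2\rho\sin(\theta/2)$ directly. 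With that substitution, your argument is complete and yields \eqref{eq:stability1} exactly as claimed.
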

\begin{proof}
Note that for $\|\vec y\|>R$, projection onto the Euclidean ball is
$\Pi(\vec y)=R\,\vec y/\|\vec y\|$. This map is differentiable on
$\{\vec y:\|\vec y\|>R\}$, and its Jacobian has operator norm
$\|D\Pi(\vec y)\|_{\mathrm{op}}=R/\|\vec y\|$
(shown in Appendix~\ref{app:contractive-property}).
By the mean value inequality for vector mappings,
\begin{align}
&\|\vec x_{t+1} - \vec x_t\|
=
\|\Pi(\vec{y}_{t+1})-\Pi(\vec{y}_{t})\|
\notag\\
&\le
\sup_{\vec z \in [\vec y_t,\vec y_{t+1}]}
\|D\Pi(\vec z)\|_{\mathrm{op}} \,\|\vec{y}_{t+1}-\vec{y}_{t}\|.
\label{eq:radial-contraction-pre}
\end{align}

Next, noting that $t+1-n_{t+1}=t-n_t$, we have
\begin{align}
\label{eq:prop-p1}
\|\vec y_{t+1} - \vec y_t\| = \frac{\|\vec g_t\|}{\sigma} \le \frac{G}{\sigma}.
\end{align}

Moreover, for any $s\in[0,1]$,
\begin{align}
\|\vec y_t+s(\vec y_{t+1}-\vec y_t)\|
\ge
\|\vec y_{t+1}\|-\|\vec y_{t+1}-\vec y_t\|
\\
\ge
\|\vec y_{t+1}\|-\frac{\|\vec g_t\|}{\sigma}
\ge
R+\frac{\alpha G n_t}{\sigma}, \label{eq:prop-p2}
\end{align}
where the first inequality is the reverse triangle inequality and the last
uses~\eqref{eq:prop-2-cond}. Hence the entire segment
$[\vec y_t,\vec y_{t+1}]$ lies outside the ball, and therefore
\[
\sup_{\vec z \in [\vec y_t,\vec y_{t+1}]}\|D\Pi(\vec z)\|_{\mathrm{op}}
=
\frac{R}{\min_{\vec z \in [\vec y_t,\vec y_{t+1}]}\|\vec z\|}.
\]
Substituting into~\eqref{eq:radial-contraction-pre} gives
\begin{equation}
\|\vec x_{t+1} - \vec x_t\|
\le
\frac{R\,\|\vec y_{t+1}-\vec y_t\|}{
\min_{\vec z \in [\vec y_t,\vec y_{t+1}]} \|\vec z\| }.
\label{eq:radial-contraction}
\end{equation}
Using \eqref{eq:prop-p1} and the lower bound on
$\min_{\vec z \in [\vec y_t,\vec y_{t+1}]} \|\vec z\|$ implied by \eqref{eq:prop-p2}
yields the stated bound.
\end{proof}
\textbf{Remark.}
For $k>1$, the bound in~\eqref{eq:stability1} makes explicit the extra attenuation gained within each lazy phase through the $1/n_t$ factor, which reduces the switch size. For example, greedy OGD ($k{=}1$) maintains switches of order $G/\sigma$, whereas in the fully lazy case ($k{=}T$) they shrink to order $G/(\sigma+t)$, which is much stronger.
We note that our use of the $\ell_2$-ball domain is for notational simplicity. The strict contraction property in~\eqref{eq:radial-contraction} holds when projecting for any strongly convex domain, albeit with different constants.

Props.~\ref{prop:staleness} and \ref{prop:stability} formalize the structure exploited by laziness: \eqref{eq:st1} shows that if the unconstrained iterate $\vec y_{t+1}$ remains inside the cone, the update halts; \eqref{eq:prop-2-cond} shows that when $\|\vec y_{t+1}\|$ is large, movement is attenuated. Both effects strengthen as $k$ increases. Appendix~\ref{app:variance-based-sc} further ties switching cost to the deviation of $\vec g_t$ from its running mean, without domain or sequence assumptions. Next, we show that these stability gains are compatible with order-optimal dynamic regret when $k$ is chosen according to the lower bound.

\subsection{Regret Analysis}
\begin{theorem}
\label{thm:DR}
For any sequence of convex losses and comparators, the dynamic regret with switching cost of \ref{k-lazy} satisfies
\begin{align}
    &\R_T \leq \sumT \min\big( G\delta_t,\, \frac{G^2}{2\sigma}\big) + (2R\sigma+kG)P_T 
    \\
    &\quad + \frac{\sigma R^2}{2} + \sumT \min(\delta_t, \frac{G}{\sigma}), 
\end{align}
Moreover, setting\footnote{This choice of $k$ assumes $P_T{>}0$. Otherwise, we may take the largest admissible phase length, $k=T$ (i.e., recover \ref{eq:lazy-gd}).}
\begin{align}& \sigma = \sigma^{\star}\doteq\sqrt{(G^2{+}2G)T/(4RP_T{+}R^2)}, \\&k = \lfloor c\sqrt{2RT/P_T}\rfloor,
\end{align} 
for some $c\geq1$, gives $\R_T \;=\;\mathcal{O}\big(\sqrt{(P_T+1)T}\big).$
\end{theorem}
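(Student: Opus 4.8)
The plan is to split $\R_T$ into its hitting and switching parts and bound each using the FTRL reformulation of Theorem~\ref{thm:equivalence}. First I would linearize the hitting cost by convexity, $f_t(\vec x_t)-f_t(\vec u_t)\le\dtp{\vec g_t}{\vec x_t-\vec u_t}$, reducing the task to bounding $\sumT\dtp{\vec g_t}{\vec x_t-\vec u_t}$. Using Theorem~\ref{thm:equivalence} I would partition the horizon into the phases $\T_i$ with start $s_i\doteq(i{-}1)k{+}1$, end $e_i\doteq ik$, and $c_i\doteq\vec x_{s_i}$, and complete the square in \eqref{eq:k-lazy-update-FPRL2} to show that on phase $i$ the iterate solves $\vec x_{t+1}=\argmin_{\vec x\in\X}\dtp{\vec g_{s_i:t}}{\vec x}+\tfrac{\sigma}{2}\|\vec x-c_i\|^2$, i.e.\ a dual-averaging recursion whose quadratic regularizer is re-centered at the phase-start iterate. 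The quantitative fact I reuse throughout is the state bound $\|\vec g_{s_i:t}\|\le kG$, so the dual iterate stays within $R+kG/\sigma$ of the origin, since it drifts by at most $G/\sigma$ per step away from $c_i\in\X$.

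On each phase I would run the conjugate/smoothness analysis of dual averaging. With $\vec z_t\doteq-\vec g_{s_i:t-1}$ and $\phi_i(\vec z)\doteq\max_{\vec x\in\X}\{\dtp{\vec z}{\vec x}-\tfrac{\sigma}{2}\|\vec x-c_i\|^2\}$, the gradient $\nabla\phi_i(\vec z_t)=\vec x_t$ is $1/\sigma$-Lipschitz, so the descent lemma yields $\dtp{\vec g_t}{\vec x_t}\le\phi_i(\vec z_t)-\phi_i(\vec z_{t+1})+\tfrac{\|\vec g_t\|^2}{2\sigma}$. Summing telescopes the potential inside the phase (with $\phi_i(\vec z_{s_i})=\phi_i(0)=0$), and an Abel summation of the comparator sum $\sum_{t\in\T_i}\dtp{\vec g_t}{\vec u_t}=-\sum_{t\in\T_i}\dtp{\vec u_t}{\vec z_{t+1}-\vec z_t}$ produces a boundary term $+\dtp{\vec u_{e_i}}{\vec z_{e_i+1}}$ together with $-\sum_t\dtp{\vec u_{t+1}-\vec u_t}{\vec z_{t+1}}$. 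Bounding the latter by $\|\vec z_{t+1}\|\,\|\vec u_{t+1}-\vec u_t\|\le kG\|\vec u_{t+1}-\vec u_t\|$ and summing over all phases produces exactly the laziness penalty $kG\,P_T$, while the per-round remainders accumulate into the stability term, which I would then sharpen to $\sumT\min(G\delta_t,\tfrac{G^2}{2\sigma})$ by bounding the same Bregman remainder alternatively via $\dtp{\vec g_t}{\vec x_t-\vec x_{t+1}}\le G\delta_t$ and nonexpansiveness of $\Pi$.

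The crux is controlling the re-centering cost. The within-phase telescopes leave $-\sum_i\phi_i(\vec z_{e_i+1})$, which I combine with the Abel boundary terms $+\dtp{\vec u_{e_i}}{\vec z_{e_i+1}}$. Since $c_{i+1}=\vec x_{e_i+1}$ is the constrained maximizer defining $\phi_i(\vec z_{e_i+1})$, its variational inequality $\dtp{\vec z_{e_i+1}-\sigma(c_{i+1}-c_i)}{\vec u_{e_i}-c_{i+1}}\le0$ together with the polarization identity turns each boundary contribution $\dtp{\vec u_{e_i}}{\vec z_{e_i+1}}-\phi_i(\vec z_{e_i+1})$ into the projected-OGD form $\tfrac{\sigma}{2}\|\vec u_{e_i}-c_i\|^2-\tfrac{\sigma}{2}\|\vec u_{e_i}-c_{i+1}\|^2$. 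Because the center of phase $i{+}1$ is the terminal iterate of phase $i$, these differences telescope over the centers $c_1,c_2,\dots$; the only obstruction is the drift of the reference comparator $\vec u_{e_i}$ across phases, which I absorb with the diameter into $2R\sigma\sum_i\|\vec u_{e_{i+1}}-\vec u_{e_i}\|\le 2R\sigma\,P_T$, leaving the single initial term $\tfrac{\sigma}{2}\|\vec u_{e_1}-c_1\|^2\le\tfrac{\sigma R^2}{2}$ (as $c_1=\vec x_1=0$). Carrying out this cross-phase telescoping cleanly — establishing that re-centering at the learner's own iterate makes the phased dual-averaging potential collapse to the projected-OGD dynamic-regret bound instead of accumulating $O(\sigma R^2)$ per phase — is the step I expect to be the main obstacle.

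For the switching cost I would invoke nonexpansiveness of $\Pi$ with $\|\vec y_{t+1}-\vec y_t\|=\|\vec g_t\|/\sigma$ (Appendix~B, as in Proposition~\ref{prop:stability}) to get $\delta_t\le G/\sigma$, hence $\sumTO\delta_t\le\sumT\min(\delta_t,G/\sigma)$, the last sum in the statement. Collecting the four pieces establishes the displayed bound. For the rate, I would upper bound the two $\min$-sums by their worst cases $\tfrac{G^2T}{2\sigma}$ and $\tfrac{GT}{\sigma}$, giving $\R_T\le\tfrac{(G^2+2G)T}{2\sigma}+\sigma\tfrac{4RP_T+R^2}{2}+kG\,P_T$; balancing the first two terms over $\sigma$ yields exactly $\sigma^\star$ with value $\mathcal{O}(\sqrt{(P_T{+}1)T})$, while $k=\lfloor c\sqrt{T/P_T}\rfloor$ makes $kG\,P_T\le cG\sqrt{TP_T}=\mathcal{O}(\sqrt{(P_T{+}1)T})$, completing the proof.
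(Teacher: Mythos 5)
Your proposal is correct, but it takes a genuinely different route from the paper's. The paper never analyzes phases separately: it invokes the FTRL-with-pruning equivalence (Theorem~\ref{thm:equivalence}), applies a single global dynamic-FTRL decomposition (Lemma~6), and bounds its three terms in Lemma~\ref{lem:supporting}; in particular, the comparator-shift term $h_{0:t}(\vec u_{t+1})-h_{0:t}(\vec u_t)$ is charged \emph{per step} against the pruned-state norm bound $\|\vec p_{1:t}\|\le R\sigma+kG$, which is where $(2R\sigma+kG)P_T$ comes from. You instead work directly with the projection form, view each block as dual averaging re-centered at $c_i=\vec x_{s_i}$, run the conjugate-smoothness telescope within blocks, and handle the comparator by Abel summation: within-phase comparator movement is charged against $\|\vec z_{t+1}\|\le kG$ (giving $kG\,P_T$), while the re-centering cost collapses, via the variational inequality and polarization, into a projected-OGD-style telescope over the centers, whose cross-phase drift costs $2R\sigma P_T$ and whose initial term is $\sigma R^2/2$. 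I verified the key steps (the completion of the square, $\nabla\phi_i(\vec z_t)=\vec x_t$, the summation by parts, the boundary identity $\dtp{\vec u_{e_i}}{\vec z_{e_i+1}}-\phi_i(\vec z_{e_i+1})\le\tfrac{\sigma}{2}\|\vec u_{e_i}-c_i\|^2-\tfrac{\sigma}{2}\|\vec u_{e_i}-c_{i+1}\|^2$, and the drift absorption at rate $4R\cdot\tfrac{\sigma}{2}$ per phase transition); they are sound, and the resulting bound coincides with the theorem's (your split of $P_T$ into within-phase and cross-phase movement is, if anything, marginally tighter). Your route is also self-contained: it needs neither Theorem~\ref{thm:equivalence}, nor the normal-cone bookkeeping for $\vec g_t^I$, nor the auxiliary characterization of $\vec x_t$ as minimizer of the linearized objective (Lemma~9). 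What the paper's route buys in exchange is modularity: its Lemma 6/7 decomposition produces the per-expert bound that is reused verbatim in the ensemble analysis (Theorem~8), and the $\vec g_t^I$ formalism is shared with the lower-bound construction.

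One minor imprecision to fix: you assert $\|\vec y_{t+1}-\vec y_t\|=\|\vec g_t\|/\sigma$ uniformly, but this identity holds only inside a phase; at a phase start $t=s_{i+1}$ the two unconstrained iterates are anchored at different centers ($c_i$ versus $c_{i+1}$), so their difference is not $-\vec g_t/\sigma$. The conclusion $\delta_t\le G/\sigma$ still holds there: since $\vec x_{s_{i+1}}=\Pi(\vec x_{s_{i+1}})$ and $\vec x_{s_{i+1}+1}=\Pi\big(\vec x_{s_{i+1}}-\vec g_{s_{i+1}}/\sigma\big)$, nonexpansiveness applied to this pair gives the bound directly (the paper sidesteps this boundary case through conjugate smoothness in Lemma~\ref{lem:supporting}.b). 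With that one-line repair, your argument, including the sharpening of the per-step remainder to $\min(G\delta_t,\tfrac{G^2}{2\sigma})$ and the final tuning of $\sigma$ and $k$, is complete.
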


\textbf{Remarks.} 
The general bound in terms of $(k,\sigma)$ is a result of key lemmas that we present hereafter. 
Ideally, we would optimize $(k,\sigma)$ to minimize this upper bound. However, this is impractical here: the dependence of $\delta_t$ on $k$ is domain-specific and lacks a closed form. Instead, we exploit two key facts established earlier: $(i)$ the switching cost is decreasing in $k$ (from the previous propositions), and $(ii)$ the lower bound restricts $k$ to at most $\Theta(\sqrt{T/P_T})$ to retain optimality. We therefore fix $k$ at this threshold, i.e., use the largest admissible laziness slack. This allows us
to focus only on the $1/\sigma$ branch of the $\min$. Then, we obtain a bound that is convex in $\sigma$, optimizing over $\sigma$ then yields the claimed rate.

The starting point of the proof is the \emph{dynamic} FTRL lemma of~\citet[Lem.~4.1]{pruning-icml}, adapted to include switching cost.  
\begin{lemma}
Let $\{f_{t}(\cdot), \vec u_t\}_{t=1}^T$ be an arbitrary set of functions and comparators in $\mathcal{X}$, respectively. Consider the extended value functions $\bar f_t(\cdot) = f_t(\cdot) + I_\X(\cdot)$ and let $r(\cdot)$ be a strongly convex regularizer such that 
\[
\vec{x}_{t+1} \doteq \argmin_{\vec x} h_{0:t}(\vec{x}) 
\] is well-defined, where $h_0(\vec{x}) \!\doteq\!{I}_{\mathcal{X}}(\vec x)+r(\vec x)$,  $\forall t \geq 1$:
\begin{align}
       h_t(\vec{x}) \doteq \dtp{\vec p_t}{\vec x} \quad \vec p_t \in \partial \bar f_t (\vec x_t).
\end{align}
Then, the algorithm that selects the actions $\vec{x}_{t+1}, \forall t $ achieves the following dynamic regret bound:
    \begin{align}
    &\mathcal{R}_T \leq \sum_{t=1}^T\big(\overbrace{ h_{0:t}(\vec x_t) - h_{0:t}(\vec x_{t+1}) }^{(\mathbf{I})}\big) \,+\sumTO\overbrace{\|\vec x_{t+1} - \vec x_t\|}^{(\mathbf{II})} 
    \\[-3ex]
    &\qquad +\sum_{t=1}^{T-1} \big(\overbrace{h_{0:t}(\vec u_{t+1}) - h_{0:t}(\vec u_{t})}^{(\mathbf{III})}\big) 
    \ +\  r(\vec{u}_1).
\end{align}
\end{lemma}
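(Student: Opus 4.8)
The plan is to reduce the statement to the convexity-plus-telescoping argument behind the ``strong FTRL lemma,'' carrying the switching cost through untouched. First I would split the metric \eqref{eq:regret} into its hitting part $\sum_{t=1}^T (f_t(\vec x_t) - f_t(\vec u_t))$ and its switching part $\sumTO\|\vec x_{t+1} - \vec x_t\|$. The switching part is \emph{identical} to term $(\mathbf{II})$ in the claimed bound, so it simply moves across and requires no work; all the effort concentrates on the hitting part.

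For the hitting part I would linearize via convexity. Since $\vec p_t \in \partial \bar f_t(\vec x_t)$ with $\bar f_t = f_t + I_\X$, and both $\vec x_t,\vec u_t \in \X$ so that the indicator terms vanish, convexity of $\bar f_t$ gives $f_t(\vec x_t) - f_t(\vec u_t) = \bar f_t(\vec x_t) - \bar f_t(\vec u_t) \le \dtp{\vec p_t}{\vec x_t - \vec u_t} = h_t(\vec x_t) - h_t(\vec u_t)$. Summing reduces the task to bounding the \emph{linearized} dynamic regret $\sum_{t=1}^T\big(h_t(\vec x_t) - h_t(\vec u_t)\big)$ by $(\mathbf{I}) + (\mathbf{III}) + r(\vec u_1)$.

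The core step is a telescoping identity for this linearized regret. Writing $h_{0:t} = h_{0:t-1} + h_t$ and using $\vec x_{t+1} = \argmin_{\vec x} h_{0:t}(\vec x)$, I would first re-express term $(\mathbf{I})$, namely $\sum_{t=1}^T\big(h_{0:t}(\vec x_t) - h_{0:t}(\vec x_{t+1})\big)$, by an index shift into $\sum_{t=1}^T h_t(\vec x_t) + h_0(\vec x_1) - h_{0:T}(\vec x_{T+1})$. In parallel I would telescope the comparator terms: combining $\sum_{t=1}^T h_t(\vec u_t)$ (again via $h_t = h_{0:t} - h_{0:t-1}$) with term $(\mathbf{III})$ collapses to $h_{0:T}(\vec u_T) - h_0(\vec u_1)$, with all intermediate $h_{0:t}(\vec u_{t+1})$ terms cancelling. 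Substituting both expansions, and using $h_0(\vec u_1) = r(\vec u_1)$ (the indicator vanishes on $\X$), the gap between the claimed right-hand side and $\sum_{t=1}^T\big(h_t(\vec x_t) - h_t(\vec u_t)\big)$ collapses exactly to the residual $h_0(\vec x_1) + \big(h_{0:T}(\vec u_T) - h_{0:T}(\vec x_{T+1})\big)$.

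It then remains to show this residual is nonnegative, which certifies the inequality. The bracket is nonnegative because $\vec x_{T+1}$ minimizes $h_{0:T}$, so $h_{0:T}(\vec x_{T+1}) \le h_{0:T}(\vec u_T)$; and $h_0(\vec x_1) \ge 0$ since $\vec x_1$ minimizes $h_0 = I_\X + r$ with $r \ge 0$ at the initialization (e.g.\ $r = \tfrac{\sigma}{2}\|\cdot\|^2$ and $\vec x_1 = 0$). I expect the main obstacle to be the dynamic-comparator bookkeeping in the telescoping: unlike the static strong FTRL lemma, the comparator changes each round, so the index shifts that produce $(\mathbf{III})$ must be tracked carefully to ensure the cross terms cancel and the boundary terms land precisely as $h_{0:T}(\vec u_T) - h_0(\vec u_1)$. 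Notably, strong convexity of $r$ is not needed here—only well-definedness of the argmins—since strong convexity is invoked only downstream to bound $(\mathbf{I})$ and $(\mathbf{II})$.
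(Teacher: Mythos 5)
Your proposal is correct and takes essentially the same route as the paper's proof: the same convexity linearization $f_t(\vec x_t)-f_t(\vec u_t)\le \dtp{\vec p_t}{\vec x_t-\vec u_t}$, the same $h_t = h_{0:t}-h_{0:t-1}$ telescoping with index shifts, and the same two closing facts $h_{0:T}(\vec x_{T+1})\le h_{0:T}(\vec u_T)$ and $h_0(\vec u_1)=r(\vec u_1)$, with the nonnegative residual $h_0(\vec x_1)\ge 0$ that you isolate explicitly being dropped implicitly in the paper as well. Your side remark that strong convexity of $r(\cdot)$ is not used in this lemma beyond well-definedness of the argmin is accurate and matches the paper, which invokes strong convexity only downstream when bounding $(\mathbf{I})$--$(\mathbf{III})$.
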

Note that the lemma's update matches the FTRL form of \ref{k-lazy}, in \eqref{eq:k-lazy-update-FPRL2}, with $r(\cdot) = \nicefrac{1}{2}\|\cdot\|^2$.

This dynamic regret decomposition of FTRL yields three terms: 
($\mathbf{I}$) the penalty for not knowing $f_t(\cdot)$ at decision time, 
($\mathbf{II}$) the switching cost, and 
($\mathbf{III}$) the penalty from comparator shift, which is mainly driven by the state's size $\|\vec p_{1:t}\|$. $r(\vec u_1)$ is the minimum regularization penalty (i.e., when $P_T=0$). These terms can be individually bounded in a way that leads to the result in Thm.~\ref{thm:DR}, as the next lemma states.
\begin{lemma}
\label{lem:supporting}
Under the Lipschitzness of the loss functions and compactness of the domain, \ref{k-lazy} iterates guarantee the following bounds on each part of the dynamic FTRL lemma for any $t$:
\vspace{-1mm}
\begin{itemize}
\renewcommand\labelitemi{}
  \item $\mathbf{(I)} \le \min\!\big(G\delta_t,\, \tfrac{G^2}{2\sigma}\big)$,
  \item $\mathbf{(II)} \le \tfrac{G}{\sigma}$,
  \item $\mathbf{(III)} \le (2R\sigma + kG)\,\dt$.
\end{itemize}
\end{lemma}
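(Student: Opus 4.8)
The plan is to bound the three quantities $\mathbf{(I)}$, $\mathbf{(II)}$, $\mathbf{(III)}$ pointwise in $t$, handling the two regimes of the update separately: the \emph{accumulation} steps ($\vec g_t^I=0$), where the running objective merely grows by $\dtp{\vec g_t}{\cdot}$, and the \emph{pruning} steps of \eqref{eq:pruning-cond-n}, where the state collapses to $\vec p_{1:t}=\vec g_t-\sigma\vec x_t$. Write $\phi_t(\vec x)\doteq\dtp{\vec p_{1:t}}{\vec x}+\frac{\sigma}{2}\|\vec x\|^2$ for the FTRL objective restricted to $\X$ (the indicator vanishes there), so that $\vec x_{t+1}=\argmin_{\vec x\in\X}\phi_t(\vec x)$ and $\mathbf{(I)}=\phi_t(\vec x_t)-\phi_t(\vec x_{t+1})$. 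The unifying observation for $\mathbf{(I)}$ is that in \emph{both} regimes $\phi_t$ decomposes as a $\sigma$-strongly convex surrogate whose constrained minimizer is the \emph{previous} iterate $\vec x_t$, plus the single linear term $\dtp{\vec g_t}{\cdot}$: at accumulation steps this surrogate is $\phi_{t-1}$ (minimized by $\vec x_t$ by construction), while at pruning steps it is the reset quadratic $\frac{\sigma}{2}\|\vec x-\vec x_t\|^2$ (minimized by $\vec x_t\in\X$), up to an additive constant that cancels in the difference.

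For $\mathbf{(I)}$ I would feed the fact that $\vec x_t$ minimizes this surrogate over $\X$ into the strong-convexity inequality evaluated at $\vec x_{t+1}\in\X$, which kills the first-order term and yields, in both regimes, $\mathbf{(I)}\le\dtp{\vec g_t}{\vec x_t-\vec x_{t+1}}-\frac{\sigma}{2}\delta_t^2$. Dropping the quadratic and applying Cauchy--Schwarz with $\|\vec g_t\|\le G$ gives the $G\delta_t$ branch; maximizing $G\delta_t-\frac{\sigma}{2}\delta_t^2$ over $\delta_t$ gives the $\frac{G^2}{2\sigma}$ branch, whence the claimed $\min$. For $\mathbf{(II)}$ I would invoke Theorem~\ref{thm:equivalence} to write $\vec x_{t+1}=\Pi(\vec x_{t-n_t}-\vec g_{t-n_t:t}/\sigma)$ and observe that the preimage of $\vec x_t$ under the \emph{same} projection is $\vec x_{t-n_t}-\vec g_{t-n_t:t-1}/\sigma$ when $n_t\ge1$ (same phase), and $\vec x_t=\Pi(\vec x_t)$ when $n_t=0$; in either case the two preimages differ by exactly $\vec g_t/\sigma$, so non-expansiveness of $\Pi$ gives $\delta_t\le\|\vec g_t\|/\sigma\le G/\sigma$.

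The crux is $\mathbf{(III)}$, as this is the only place where the laziness slack $k$ must be exploited; without it the argument degenerates to the linear regret of the fully lazy method. Expanding $\mathbf{(III)}=\frac{\sigma}{2}(\|\vec u_{t+1}\|^2-\|\vec u_t\|^2)+\dtp{\vec p_{1:t}}{\vec u_{t+1}-\vec u_t}$, the quadratic part is factored as $\frac{\sigma}{2}\dtp{\vec u_{t+1}+\vec u_t}{\vec u_{t+1}-\vec u_t}$ and controlled by $\|\vec u_{t+1}+\vec u_t\|\le 2R$, contributing $\sigma R\,\dt$. The linear part is at most $\|\vec p_{1:t}\|\,\dt$, so everything reduces to controlling the \emph{size of the state} $\|\vec p_{1:t}\|$. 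Here I would use the FTRL identity $\vec p_{1:t}=-\sigma\vec y_{t+1}$ together with $\vec y_{t+1}=\vec x_{t-n_t}-\vec g_{t-n_t:t}/\sigma$ from Theorem~\ref{thm:equivalence}: since the reset point $\vec x_{t-n_t}$ lies in the ball of radius $R$ and at most $k$ gradients of norm $\le G$ have accumulated since the last prune ($n_t\le k-1$), the triangle inequality gives $\|\vec y_{t+1}\|\le R+kG/\sigma$, hence $\|\vec p_{1:t}\|\le\sigma R+kG$. Combining the two parts yields $\mathbf{(III)}\le(2R\sigma+kG)\dt$. I expect this state-norm bound to be the main obstacle: it is precisely the pruning reset of Theorem~\ref{thm:equivalence} that caps the retained history at $k$ gradients rather than $t$, making the $kG$ dependence (rather than $TG$) possible, and it is what later forces the slack ceiling $k=\Theta(\sqrt{T/P_T})$ once this bound is traded off against the $\sqrt{(P_T+1)T}$ target in Theorem~\ref{thm:DR}.
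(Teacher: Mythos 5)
Your proof is correct, and its core substance coincides with the paper's: for $\mathbf{(I)}$, the decisive fact in both arguments is that the \emph{effective} (sub)gradient of the running objective at $\vec x_t$ reduces to $\vec g_t$ alone, and for $\mathbf{(III)}$ both proofs hinge on the identical state-norm bound $\|\vec p_{1:t}\|\le R\sigma+kG$ obtained from the identity $\vec p_{1:t}=-\sigma\vec y_{t+1}=-\sigma(\vec x_{t-n_t}-\vec g_{t-n_t:t}/\sigma)$ of Theorem~\ref{thm:equivalence}. The differences are in packaging and, for $\mathbf{(II)}$, in technique. For $\mathbf{(I)}$, the paper isolates your two-regime observation into a standalone helper lemma (Lemma~\ref{lemma:min-of-linearized}): it shows $\vec x_t$ also minimizes $h_{0:t-1}(\cdot)+\dtp{\vec g_t^I}{\cdot}$, hence $\vec g_t=-\vec g_t^I+\vec p_t\in\partial h_{0:t}(\vec x_t)$ by the sum rule; your explicit surrogates ($\phi_{t-1}$ at accumulation steps, the reset quadratic $\tfrac{\sigma}{2}\|\vec x-\vec x_t\|^2$ at pruning steps via $\vec p_{1:t}=\vec g_t-\sigma\vec x_t$) establish exactly the same fact by direct case analysis. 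For $\mathbf{(II)}$ the routes genuinely differ: the paper works on the dual side, writing $\vec x_{t+1}=\nabla h_0^\star(-\vec p_{1:t})$ and, crucially, $\vec x_t=\nabla h_0^\star(-\vec p_{1:t-1}-\vec g_t^I)$ (again via Lemma~\ref{lemma:min-of-linearized}) so that the two dual arguments differ by exactly $\vec g_t$, and then invokes $1$-smoothness of the conjugate; you work on the primal side, using the projection form of Theorem~\ref{thm:equivalence} and non-expansiveness, disposing of the phase-boundary case $n_t=0$ (where $\vec g_t^I$ may be nonzero) by the observation $\vec x_t=\Pi(\vec x_t)$. Your route is more elementary (it needs only non-expansiveness, cf.\ Sec.~\ref{sec:non-expan}) but is tied to the Euclidean projection; the paper's conjugate-smoothness argument is the one that extends verbatim to other strongly convex regularizers and dual norms. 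For $\mathbf{(III)}$, you use the exact difference-of-quadratics identity where the paper upper-bounds via a subgradient at $\vec u_{t+1}$ and drops a negative quadratic; both yield $(2R\sigma+kG)\dt$ with the same constants.
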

The proof of each part is deferred to Appendix~\ref{app:proofs} (Lemma \ref{lem:supporting}.a--c). 
The arguments rely on standard tools from FTRL analysis, which we reuse thanks to Thm.~\ref{thm:equivalence}. However, an extra difficulty arises from the non-standard $\vec g_t^I$ term, whose behavior depends on the slack $k$, requiring a refinement of the classical analysis.

\section{LEARNING THE OPTIMAL SLACK $k$}
Thm.~\ref{thm:DR} assumes optimal tuning of the rate $\sigma$ and slack $k$, requiring knowledge of $P_T$. While we can use an upper bound on it, achieving adaptivity to all comparators requires the unknown $P_T$. We adopt the ensemble framework, which discretizes the $P_T$ space and uses a meta-learner to find the near-optimal choice. For simplicity, we impose the convention $k = \max(1,\lfloor\tfrac{2R}{G}\sigma\rfloor)$. Since the optimal values of both parameters is $\Theta(\sqrt{T{/}P_T})$, this restriction affects only constant factors. The advantage is that the search can be carried out solely over $\sigma$, while implicitly tuning $k$.  

From the bound $P_T \leq 2R(T-1)$, the optimal regularization $\sigma^\star$ is guaranteed to lie in the interval
\begin{align}
    \sqrt{\tfrac{(G^2+2G)T}{R^2}}
    \;\geq\; \sigma^\star
    \;\geq\; \sqrt{\tfrac{(G^2+2G)T}{8R^2T-8R^2}}.
\end{align}
Let $B \doteq \sqrt{\tfrac{(G^2+2G)T}{R^2}},$
and construct the grid:
\begin{align}
    \mathcal H \doteq \Bigl\{ \tfrac{B}{2^{i-1}} : i=1,\dots,N \Bigr\}, 
    \, N \doteq \Bigl\lceil \tfrac{1}{2}\log(8T-7)\Bigr\rceil+1.
\end{align}
Consider the exponent $s^\star \doteq \log\!\big(\sqrt{1+4P_T/R}\big)+1$ and its floor $s \doteq \lfloor \log\!\big(\sqrt{1+4P_T/R}\big)\rfloor+1$. Then,
\begin{align}
    \frac{B}{2^{s-1}} \,\geq\, \frac{B}{2^{s^\star-1}} \,\geq\, \frac{B}{2^s},
    \;\Rightarrow\; 
     \sigma^s \doteq \tfrac{B}{2^{s-1}} \,\geq\, \sigma^\star \;\geq\; \tfrac{B}{2^s}.
\end{align}
By construction, $\sigma^s \in \mathcal H$, and hence the candidate set always contains a near-optimal choice.
\begin{theorem}
For any comparator sequence, running the meta-learner of \cite[``SAder'']{zhang2021revisiting} over a set of \ref{k-lazy} experts from $\mathcal H$, guarantees
\[
    \R_T = \mathcal{O}\big(\sqrt{(P_T+1)T}\big).
    \vspace{-3mm}
\]
\end{theorem}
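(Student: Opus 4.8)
The plan is to use the standard two-layer regret decomposition for meta-learning in the smoothed setting, isolating the cost of the meta-learner's aggregation from the dynamic regret of the single best expert in the pool. Write $\{\vec x_t\}$ for the actions produced by SAder and $\{\vec x_t^s\}$ for those of the expert instantiated with $\sigma^s\in\mathcal{H}$ (and $k^s=\max(1,\lfloor\frac{2R}{G}\sigma^s\rfloor)$). Since SAder's guarantee bounds the meta-learner's \emph{total} hitting-plus-switching cost relative to any fixed expert's total cost, adding and subtracting expert $s$'s full cost inside $\R_T$ yields the clean split
\begin{align}
\R_T^{\text{meta}}\;\le\;\underbrace{\R_T^{\text{meta vs. }s}}_{\text{meta-regret}}\;+\;\underbrace{\R_T^{\text{expert }s}}_{\text{expert dynamic regret}},
\end{align}
where the first term is the regret of the aggregation measured against the fixed expert $s$ (including the switching cost of the aggregated iterate) and the second is the dynamic-regret-with-switching of expert $s$ against $\{\vec u_t\}$. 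The two contributions can then be bounded independently.

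For the expert term I would invoke the general $(k,\sigma)$ bound of Thm.~\ref{thm:DR}, which is valid for every admissible pair and not only the optimum, and evaluate it at $\sigma=\sigma^s$. Restricting attention to the $1/\sigma$ branch of each $\min$ and using the convention $k^sG\le 2R\sigma^s+G$, the bound collapses to the convex form $a/\sigma+b\,\sigma$ whose minimizer is exactly $\sigma^\star$. The grid construction guarantees $\sigma^s\in[\sigma^\star,2\sigma^\star]$, and for a function of the form $a/\sigma+b\,\sigma$ a constant-factor perturbation of the argument changes its value by at most a constant factor. Since Thm.~\ref{thm:DR} already certifies $\R_T=\mathcal{O}(\sqrt{(P_T+1)T})$ at $\sigma^\star$, evaluating at the factor-$2$ approximation $\sigma^s$ only inflates constants, giving $\R_T^{\text{expert }s}=\mathcal{O}(\sqrt{(P_T+1)T})$.

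For the meta term I would cite the regret guarantee of SAder \citep{zhang2021revisiting}, first verifying that our experts meet its prerequisites (compact domain and $G$-Lipschitz losses, both granted by our assumptions). Measured against any fixed expert and including the switching cost its reweighting introduces, SAder's meta-regret is $\mathcal{O}(\sqrt{T\log N})$; with $N=\lceil\frac12\log(8T-7)\rceil+1=O(\log T)$ this is $\mathcal{O}(\sqrt{T\log\log T})$, a doubly-logarithmic overhead absorbed into the $\mathcal{O}(\sqrt{(P_T+1)T})$ rate. Adding the two bounds yields $\R_T^{\text{meta}}=\mathcal{O}(\sqrt{(P_T+1)T})$.

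The principal obstacle lies in the meta layer rather than the expert layer: because the played action is a weighted combination of experts, reweighting can inject switching cost even when each \ref{k-lazy} expert is stale or stable in the sense of Props.~\ref{prop:staleness}--\ref{prop:stability}. The crux is therefore confirming that SAder controls the switching cost of its \emph{own} aggregated iterate, not merely its hitting regret, which is precisely the property it was engineered to provide for smoothed online learning. Once that control is in hand, the two-term decomposition above together with the constant-factor optimality of $\sigma^s$ on the grid closes the argument.
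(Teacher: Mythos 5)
Your architecture is the same as the paper's: split $\R_T$ into the meta-learner's regret against a fixed expert plus that expert's dynamic regret with switching, bound the expert term by evaluating the general $(k,\sigma)$ bound of Thm.~\ref{thm:DR} at the grid point $\sigma^s\in[\sigma^\star,2\sigma^\star]$ (with $k$ tied to $\sigma$ so that $k G P_T\le \max(GP_T,2R\sigma P_T)$), and bound the meta term by SAder's comparison lemma, noting that the expert's total hitting-plus-switching cost cancels in the sum. The expert half of your argument is sound and matches the paper's Theorem 8 proof essentially line by line.

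The gap is in the meta term. You invoke a uniform bound $\mathcal{O}(\sqrt{T\log N})$ with $N=\mathcal{O}(\log T)$ and claim the resulting $\mathcal{O}(\sqrt{T\log\log T})$ overhead is ``absorbed'' into $\mathcal{O}\big(\sqrt{(P_T+1)T}\big)$. That absorption fails exactly in the regime the theorem must also cover: for a static or near-static comparator ($P_T=\mathcal{O}(1)$) the target rate is $\mathcal{O}(\sqrt{T})$, and $\sqrt{T\log\log T}\neq\mathcal{O}(\sqrt{T})$, so your bound does not establish the stated theorem for small path lengths. The point of SAder's construction (Lemma~\ref{lemma:zhang-revisiting}) is that its meta-regret is \emph{not} uniform over experts: with the prior $w_1^{\sigma^{(i)}}=\tfrac{C}{i(i+1)}$, the regret against expert $i$ scales as $\sqrt{T}\,\big(1+\ln(1/w_1^{\sigma^{(i)}})\big)=\mathcal{O}\big(\sqrt{T}(1+\ln i)\big)$. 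The paper's proof compares only against the expert with index $s=\mathcal{O}\big(\log\sqrt{1+4P_T/R}\big)+1$, so the meta overhead is $\mathcal{O}\big(\sqrt{T}\,\ln\log(P_T+2)\big)$ --- a quantity tied to $P_T$, not to $T$ --- and $\ln\log(P_T+2)=\mathcal{O}(\sqrt{P_T+1})$ for every $P_T\ge 0$, so it genuinely is dominated by $\sqrt{(P_T+1)T}$. To close your proof, replace the uniform $\log N$ bound by this index-dependent bound; as written, the step ``doubly-logarithmic overhead absorbed into the rate'' is false.
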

\paragraph{Remarks.}  
While this bound is known for ensembles of greedy OGD learners~\citep{zhang2021revisiting}, our construction differs in a key respect: the experts vary not only in their learning rate but also in the laziness slack, i.e., how many past gradients are aggregated \emph{and} how they are weighted in the \ref{k-lazy} update. This design allows our ensemble to exploit the structural advantages captured by Propositions~\ref{prop:staleness} and~\ref{prop:stability}. Moreover, if one were to optimize $k$ and $\sigma$ independently, the search would expand to their Cartesian product; by tying them together, we avoid this extra complexity while still achieving order-optimal guarantees. We detail the proof of this result in Appendix~\ref{app:meta}.

\section{RELATED WORK}
SOCO admits two information models: a \emph{look-ahead} setting where the cost is known before acting, usually studied through \emph{competitive ratio} (CR) metric, and a \emph{fully online} setting where the cost is revealed only after the decision, usually studied through regret $\R_T$.

\paragraph{Look-ahead}
An important result in the look-ahead category is Online Balanced Descent (OBD)~\citep{OBD} and its follow-ups. OBD projects the previous iterate onto a carefully chosen level set of the \emph{known} hitting cost to balance movement and hitting costs. This attains dimension-free CR, and its variants attain an optimal CR for specific families of costs ~\citep{R-OBD}. OBD also admits dynamic-regret guarantees with switching costs, but these hold only against a pre-fixed upper bound on $P_T$. The guarantees of OBD family were further generalized to models with look-ahead \emph{window}~\citep{lin2020online} with possibly imperfect predictions ~\citep{rutten2023smoothed}. Because these methods have access to the current hitting cost, they differ from our lazy update, which targets regret guarantees in the online setting.

\paragraph{Fully online.}  
In this setting, \citet{li2020leveraging} studied SOCO with prediction windows under smoothness and strong convexity assumptions. In contrast, \citet{zhang2021revisiting} achieved optimal dynamic regret with switching-cost guarantees for general convex functions,\footnote{They also develop models for the look-ahead setting.} which, along with its generalization to dynamic regret over any sub-interval~\citep{zhang2022smoothed}, represents the current state of the art in both CR and dynamic regret.  
Their analysis relies on an ensemble of \ref{eq:gd} base learners with different $\sigma$ but fixed $k$ ($k{=}1$ for all learners), reinforcing the common view that gradient accumulation (dual-averaging) cannot yield dynamic-regret guarantees. Our work shows that a \emph{rationed} form of accumulation does achieve such guarantees, and can indeed be instantiated within the same ensemble framework.  
Overall, dual-averaging and related lazy methods remain unexplored in SOCO, whether under look-ahead or fully online models, and under either CR or dynamic-regret criteria.

\paragraph{FTRL/OMD interplay.}  
\ref{eq:gd} is typically identified with Online Mirror Descent (OMD) using the squared $\ell_2$ norm as the distance-generating function (see, e.g., \citet[Sec .~6.2]{orabona2021modern}). In contrast, \ref{eq:lazy-gd} corresponds to FTRL with the same function as a regularizer (see, e.g., \citet[Sec.~3.2]{mcmahan-survey17}). 
More recently, \cite{jacobsen2022parameter} introduced ``centered OMD'', a method that can reproduce both the lazy and greedy forms of OGD. However, their work does not investigate nor optimize the interpolation between these two extremes. 
\cite{pruning-icml} 
introduced the pruning perspective as a unifying lens for greedy and lazy updates, showing its role in recovering dynamic-regret guarantees.
However, they do not incorporate laziness slack, which is the quantity we identified, and optimized, as the key driver of low switching cost in SOCO. 

\textbf{Other ``lazy'' forms.}  
The term ``lazy'' has also appeared in the literature with a different meaning. In the related work of \cite{sherman2021lazy} and \cite{agarwal2023differentially}, a method is called lazy if the expected \emph{number} (not magnitude) of switches is bounded. Their construction relies on FTRL, augmented with a \emph{lazy sampling} scheme that couples consecutive decisions maximally, thereby ensuring such switching guarantees. However, these works restrict attention to the weaker static regret metric. Moreover, since our method naturally operates in lazy phases, it is in fact possible in principle to combine \ref{k-lazy} with their lazy-sampling scheme, yielding the same bounded switching number within each phase, while also having dynamic regret guarantees. Finally, we discuss in Appendix~\ref{app:related-work} the related framework and algorithms of ``OCO with memory''\citep{anava2015online}. 

\section{CONCLUSION}
We introduced \textsc{$k$-lazyGD}, a \emph{partially}-lazy learner via an FTRL–pruning view. It achieves minimax dynamic regret while partially retaining the switching behavior of lazy methods, with a matching lower bound. Promising extensions include non-Euclidean and adaptive-rate schemes, incorporating predictions, 
and competitive ratio or adaptive regret analysis.

\section*{Acknowledgment}
This work was supported by the Dutch National Growth Fund project ``Future Network Services'', and by the European Commission through Grants No. 101139270 “ORIGAMI” and No. 101192462 “FLECON-6G”.

\bibliography{References.bib}

\clearpage
\appendix
\thispagestyle{empty}
\onecolumn
\aistatstitle{Partially Lazy Gradient Descent for Smoothed Online Learning \\
Supplementary Materials}

\section{SUMMARY OF \textsc{$k$-lazyGD} FORMS}
\label{app:summary}
We summarize the two formulations of the \textsc{$k$-lazyGD} algorithm, detailed in Algorithms \ref{alg:k-lazy-proj} and \ref{alg:k-lazy-ftrl}. Theorem.~$1$ establishes that the projection and FTRL forms produce identical iterates. Hence, for notational simplicity, we omit the distinction between $\vec{x}_{t}$ and $\hat{\vec{x}}_{t}$ below.  

\paragraph{Projection form.}  
The $k$-lazy update can be expressed directly in terms of projected gradient steps:
\begin{equation}
\tag{\textsc{$k$-lazyGD}}
\label{k-lazy-app}
\begin{aligned}
    & \vec x_{t+1} = \Pi\left(\vec x_{t-n_t} - \frac{\vec g_{t-n_t:t}}{\sigma}\right) \\
    & n_t = (t-1) \!\!\!\mod (k)
\end{aligned}
\end{equation}
where $\Pi$ denotes Euclidean projection onto $\X$.
\begin{algorithm}[h!]
  \caption{\small{\ref{k-lazy} (projection form)}}
  \label{alg:k-lazy-proj}
  \textbf{Input}: compact set $\X$, regularization $\sigma>0$, laziness slack $k\in\mathbb{N}$, horizon $T$.\\
  \textbf{Output}: $\{\vec x_t\}_{t=1}^T$.
  \begin{algorithmic}[1]
    \STATE Initialize $\vec x_1 =0$
    \FOR{$t=1,2,\ldots,T$}
      \STATE Use action $\vec x_t$.
      \STATE \textit{$f_t(\cdot)$ is revealed}
      \STATE Compute a subgradient $\vec g_t \in \partial f_t(\vec x_t)$.
      \STATE Set the counter $n_t \doteq (t-1)\!\!\!\mod(k)$. (within-phase index)
      \STATE Accumulate the within-phase gradients $\vec g_{t-n_t:t} \doteq \sum_{\tau=t-n_t}^{t}\vec g_\tau$.
      \STATE Form the unconstrained step $\vec y_{t+1} \doteq \vec x_{t-n_t} - \frac{1}{\sigma}\,\vec g_{t-n_t:t}$.
      \STATE Project back to the domain $\vec x_{t+1} = \Pi\!\left(\vec y_{t+1}\right)$.
    \ENDFOR
  \end{algorithmic}
\end{algorithm}

\paragraph{FTRL form.}  
Equivalently, \ref{k-lazy} admits the following Follow-the-Regularized-Leader formulation:
\begin{align}
    \vec x_{t+1} &= \Pi(\vec y_{t+1}), \quad 
    \vec y_{t+1} = \argmin_{\vec x \in \mathbb{R}^d}\, \dtp{\vec p_{1:t}}{\vec x} + \frac{\sigma}{2}\|\vec x\|^2,
    \label{eq:k-lazy-update-FPRL-app}
\end{align}
or, equivalently, as a constrained minimization:
\begin{align}
    \vec x_{t+1} = \argmin_{\vec x \in \X}\, \dtp{\vec p_{1:t}}{\vec x} + \frac{\sigma}{2}\|\vec x\|^2. 
    \label{eq:k-lazy-update-FPRL2-app}
\end{align}

\paragraph{State.}  
The state vector $\vec p_t$ accumulates both the loss subgradients and a possible correction term arising from the normal cone:
\begin{align}
&\vec p_t = \vec g_t + \vec g^{I}_t, \\
&\vec g_t \in \partial f_t(\vec x_t), 
\quad 
\vec g^I_t \in \partial I_\X(\vec x_t) = \cone(\vec x_t).
\end{align}

The correction term implements pruning at the prescribed rate:
\begin{align}
\label{eq:pruning-cond-n-app}
 \vec g^I_t = 
    \begin{cases}
       -\vec p_{1:t-1} - \sigma \vec x_t, & \text{if }(\vec y_{t} \notin \X \ \land\  n_t=0), \\[1ex]
        0, & \text{otherwise.}
    \end{cases}
\end{align}
The above provides a complete and equivalent characterization of the \textsc{$k$-lazyGD} update rule in both projection and FTRL form.
\begin{algorithm}[h]
  \caption{\small{\ref{k-lazy} as FTRL with pruning (equivalent to Alg. \ref{alg:k-lazy-proj})}}
  \label{alg:k-lazy-ftrl}
  \textbf{Input}: compact set $\X$, regularization $\sigma>0$, laziness slack $k\in\mathbb{N}$, horizon $T$.\\
  \textbf{Output}: $\{\vec x_t\}_{t=1}^T$.
  \begin{algorithmic}[1]
    \STATE Initialize $\vec x_1 \in \X$, set $\vec p_{1:0} \doteq \vec 0$.
    \FOR{$t=1,2,\ldots,T$}
      \STATE Use action $\vec x_t$.
      \STATE \textit{$f_t(\cdot)$ is revealed}
      \STATE Compute a subgradient $\vec g_t \in \partial f_t(\vec x_t)$.
      \STATE Set the counter $n_t \doteq (t-1)\!\!\!\mod(k)$
      \STATE Compute the unconstrained FTRL center $\vec y_t = -\frac{1}{\sigma}\,\vec p_{1:t-1}$.
      \STATE Set the normal-cone correction $\vec g_t^I$ (pruning rule, cf.~\eqref{eq:pruning-cond-n-app}).
      \STATE Set  $\vec p_t \doteq \vec g_t + \vec g^I_t$ and update the state $\vec p_{1:t} \doteq \sum_{\tau=1}^{t}\vec p_\tau$.
      \STATE Solve the (unconstrained) FTRL subproblem
      \vspace{-3mm}
      \[
        \vec y_{t+1} \doteq \argmin_{\vec x\in\mathbb{R}^d}\;\dtp{\vec p_{1:t}}{\vec x} + \frac{\sigma}{2}\|\vec x\|^2
        \;=\; -\frac{1}{\sigma}\,\vec p_{1:t}.
      \]
        \vspace{-3mm}
      \STATE Project to the feasible set (equivalently, solve the constrained form \eqref{eq:k-lazy-update-FPRL2-app})
    \vspace{-3mm}
      \[
        \vec x_{t+1} = \Pi(\vec y_{t+1})
        \quad\big(\;\text{i.e., }\;\vec x_{t+1} \doteq \argmin_{\vec x\in\X}\;\dtp{\vec p_{1:t}}{\vec x} + \frac{\sigma}{2}\|\vec x\|^2\big).
      \]
    \vspace{-3mm}

    \ENDFOR
  \end{algorithmic}
\end{algorithm}

\section{SWITCHING COST}
\label{app:non-expan}
For both greedy and lazy variants of OGD, the switching cost follows directly from the non-expansiveness of the Euclidean projection (see, e.g., \citet[Thm. 5.4]{beck-book}).  
Formally, for any $\vec u,\vec v \in \mathbb{R}^d$ and convex set $\X$, the projection $\Pi(\cdot)$ satisfies
\begin{align}
\|\Pi(\vec u) - \Pi(\vec v)\| \;\leq\; \|\vec u-\vec v\|.
\label{eq:nonexpansive}
\end{align}

\subsection{\textsc{GD}}
Recall the update rule:
\begin{equation}
\vec{x}_{t+1}
= \Pi\left( \vec{x}_t - \frac{1}{\sigma} \vec{g}_t \right).
\label{eq:gd-app}
\tag{\textsc{GD}}
\end{equation}
Applying~\eqref{eq:nonexpansive} gives
\begin{align}
\|\vec x_{t+1} - \vec x_t\|
&= \Bigl\|\Pi\!\left(\vec x_t - \frac{1}{\sigma}\vec g_t\right) - \vec x_t\Bigr\| \\
&\stackrel{(a)}{\leq} \Bigl\|\vec x_t - \frac{1}{\sigma}\vec g_t - \vec x_t\Bigr\|
= \frac{1}{\sigma}\|\vec g_t\|
\;\leq\; \frac{G}{\sigma}.
\end{align}
The first inequality is by non-expansiveness of $\Pi$, and the second by $G$-Lipschitzness of the losses.

\subsection{\textsc{LazyGD}}
Recall the update rule:
\begin{align}
\vec{x}_{t+1}
= \Pi\left( - \frac{\vec{g}_{1:t}}{\sigma} \right).
\label{eq:lazy-gd-app}
\tag{\textsc{LazyGD}}
\end{align}
Similarly, for the lazy variant, we obtain
\begin{align}
\|\vec x_{t+1} - \vec x_t\|
&= \Bigl\|\Pi\!\left(-\frac{1}{\sigma}\vec g_{1:t}\right) - \Pi\!\left(-\frac{1}{\sigma}\vec g_{1:t-1}\right)\Bigr\| \\
&\stackrel{(a)}{\leq}  \Bigl\|-\frac{1}{\sigma}\vec g_{1:t} + \frac{1}{\sigma}\vec g_{1:t-1}\Bigr\|
= \frac{1}{\sigma}\|\vec g_t\|
\;\leq\; \frac{G}{\sigma}.
\end{align}
Again, the first inequality is due to non-expansiveness of $\Pi$, and the last from Lipschitzness.

\subsection{Discussion}
When the projection is inactive, the inequalities $(a)$ in both \textsc{GD} and \textsc{LazyGD} become equalities. In this case, and assuming $\|\vec g_t\| = G$ for all $t$,  both greedy and lazy variants incur identical per-round movement of $G/\sigma$.

The distinction emerges only when the projection is \emph{active}, the two updates behave differently. For the lazy variant, the term 
\[
\bigl\|\Pi(-\tfrac{1}{\sigma}\vec g_{1:t}) - \Pi(-\tfrac{1}{\sigma}\vec g_{1:t-1})\bigr\|
\]
is often small, or even zero, because the state is represented indirectly through the aggregate $\vec g_{1:t}$. As a result, many consecutive gradient-sum pairs $(\vec g_{1:t-1}, \vec g_{1:t})$ can map to nearly identical primal points, a phenomenon previously noted by \citet[Sec.~6]{mcmahan-survey17}.
In this work, we formalize this observation through our \emph{laziness propositions}: the projection operator of the two expressions (which differ only in a single gradient) can be shown to be multiple-to-one mapping (Prop. \ref{prop:staleness}), or strict contraction (Prop. \ref{prop:stability}). Moreover, in \textsc{$k$-lazyGD}, we partially recover this stabilizing effect by introducing controlled pruning: the switching term now takes the form:
\[
\Bigl\|\Pi\!\left(\vec x_{t-n_t} - \tfrac{1}{\sigma}\vec g_{t-n_t:t-1}\right)
- \Pi\!\left(\vec x_{t-n_t} - \tfrac{1}{\sigma}\vec g_{t-n_t:t}\right)\Bigr\|.
\]
This design ensures that while movements remain bounded by the same $G/\sigma$ worst-case limit, the effective switching cost is often smaller in practice, reflecting the same suppression that characterizes fully lazy methods.

Finally, it is worth noting that when projection operator is inactive $\forall t$, \ref{k-lazy} produces that same iterates for all $k$, implying the equivalence between \eqref{eq:gd} and \eqref{eq:lazy-gd} (recall the constant learning/regularization rate assumption).

\section{MORE DETAILS ON THE ILLUSTRATIVE EXAMPLES}
\label{app:more-details}
We provided two stylized examples in the introduction, and we detail the setup below. 

\noindent\textbf{Example $(i)$.} We consider a 2-dimensional test with horizon \(T=101\). The cost sequence \(g_1,\dots,g_T\in\mathbb{R}^2\) is axis-aligned and unit-norm: \(g_t=(-1,0)\) for \(t=1,\dots,12\), and for \(t=13,\dots,101\) the costs cycle with period four through \((1,0),(0,1),(-1,0),(0,-1)\), producing a periodic drift with direction flips. Decisions \(\vec{x}_t\) lie in the \(\ell_1\)-ball \(\{\vec{x}\in\mathbb{R}^2:\|\vec{x}\|_1\le 1\}\). The regularization rate is fixed to \(\sigma=\sqrt{t}\) for both algorithms. For the actions snapshot, we use a fixed \(\sigma=\sqrt{T}\) instead to better capture the four actions.
\begin{figure}[h]
    \centering
    \includegraphics[width=0.65\linewidth]{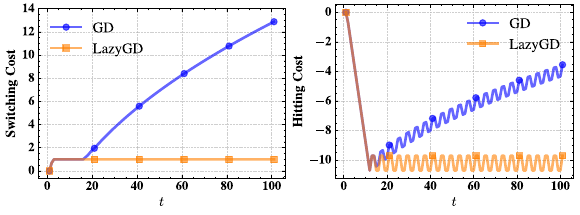}
    \caption{Switching and hitting cost in Example $(i)$.}
\end{figure}

\noindent\textbf{Example $(ii)$.} We consider a 2-dimensional test with horizon \(T=101\). The cost sequence \(g_1,\dots,g_T\in\mathbb{R}^2\) is axis-aligned: \(g_t=(-1,0)\) for \(t=1,\dots,11\), and for \(t=12,\dots,101\) we set \(g_t = (-1,(-1)^t)\), i.e.\ the second coordinate alternates in sign producing \((-1,1),(-1,-1),\dots\). This yields a piecewise-constant initial segment followed by rapid vertical oscillations with abrupt sign flips. Decisions \(\vec{x}_t\) lie in the \(\ell_2\)-ball \(\{\vec{x}\in\mathbb{R}^2:\|\vec{x}\|_2\le 1\}\). Again, the regularization rate is fixed to \(\sigma=\sqrt{t}\) for both algorithms. For the actions snapshot, we use a fixed \(\sigma=\sqrt{T}\) instead to better capture the four actions.

\begin{figure}[h]
    \centering
    \includegraphics[width=0.65\linewidth]{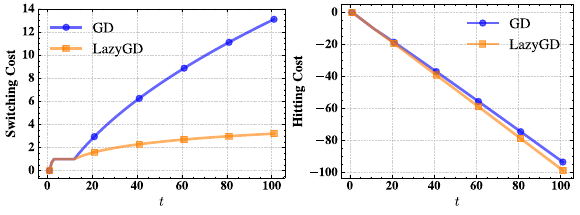}
    \caption{Switching and hitting cost in Example $(ii)$.}
\end{figure}
The message of the two examples is:  Greedy GD ($k=1$) is not necessarily good for SOCO results. In this scenario, stepping in the direction of the most recent gradient causes worse movement \emph{and} hitting cost than aggregating all past gradients.

\section{\textsc{$k$-lazyGD} SIMULATIONS}
We complement our theory with simulations illustrating the behavior of \eqref{k-lazy}\footnote{The code for the experiments can be found on the following repository  \url{https://github.com/Naram-m/k-lazy}.}.  
Our goal is to verify that partial laziness can substantially reduce switching cost without degrading hitting performance.  
In particular, we expect the switching cost to be empirically comparable to \eqref{eq:lazy-gd} and the hitting cost close to \eqref{eq:gd}, leading to overall performance that improves upon the standard SOCO baseline \eqref{eq:gd}. 

For each algorithm, we report three quantities:   
\begin{align}
    &\hspace{-4mm}\text{\emph{switching cost:}}\quad \sum_{t=1}^T \|\vec x_t - \vec x_{t-1}\|,
    \\
    &\hspace{-4mm}\text{\emph{dynamic regret:}} \quad \sum_{t=1}^T \big(f_t(\vec x_t) - f_t(\vec u_t)\big),
    \\
    &\hspace{-4mm}\text{\emph{total regret:}}\;
\sum_{t=1}^T \Big(f_t(\vec x_t) - f_t(\vec u_t)\Big)
+ \sum_{t=1}^T \Big(\|\vec x_t - \vec x_{t-1}\| - \|\vec u_t - \vec u_{t-1}\|\Big)= \R_T-\! \sumTO \|\vec u_t - \vec u_{t-1}\| .
\label{eq:total_regret}
\end{align}
Note that~\eqref{eq:total_regret} differs from the regret $\R_T$ defined in the main text in that it subtracts the comparator’s path length.  
This simplification was adopted in the theoretical analysis for cleaner presentation, since $\R_T$ still upper bounds~\eqref{eq:total_regret}.  
For the numerical results, however, we plot the full expression~\eqref{eq:total_regret}.

\subsection{Shifting Stochastic Sequences}
In this experiment, gradients $\vec g_t$ are normalized to satisfy $\|\vec g_t\|_2 = 1$, so $G \doteq 1$.  
We consider linear costs
\[
f_t(\vec x) \doteq \langle \vec g_t, \vec x \rangle,
\qquad \vec g_t \in [-1,1]^5,\;\|\vec g_t\|_2 = 1,
\]
with decisions constrained to the $\ell_2$ unit ball
\[
\X \doteq \{\vec x \in \mathbb{R}^5 : \|\vec x\|_2 \le 1\}.
\]

\paragraph{Sequence generation.}
Let $T_{\mathrm{phase}} \doteq 4000$, $P \doteq 15$.  
We generate $P$ phases with alternating mean directions $\mu_p \in \{+1,-1\}$ (starting positive), and sample within each phase i.i.d. Gaussian gradients with variance $10$ per coordinate.  
Each sampled row is then $\ell_2$-normalized (i.e., projected Gaussian).
Concatenating the $P$ phases yields a horizon of length $T_{\text{tot}} \doteq P\,T_{\mathrm{phase}}$. This construction is interesting because it is reflective of the Stochastically Extended Adversary (SEA) model: environment \emph{is} stochastic, perhaps with high variance, but with  \emph{adversarial} distribution shifts (see more details on the SEA model in \cite{sarah-between}).

\begin{figure}[h!]
    \centering
    \includegraphics[width=0.5\linewidth]{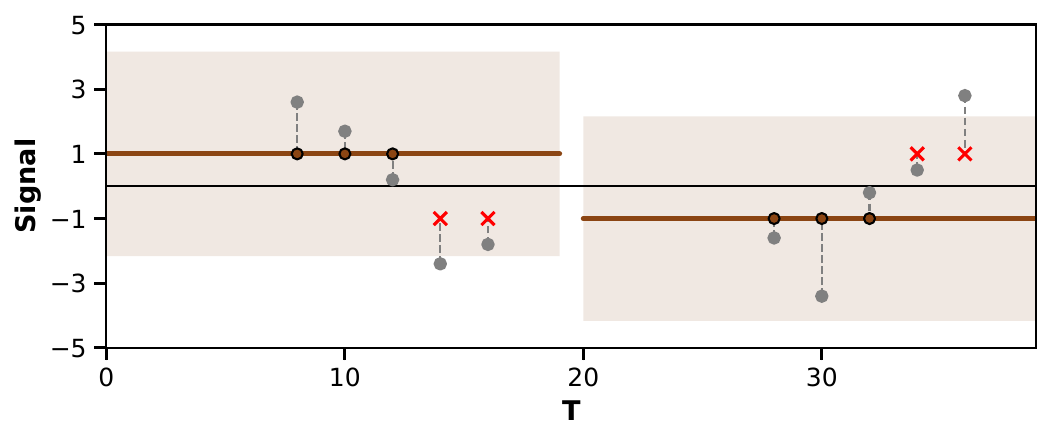}
    \vspace{-4mm}
    \caption{Example cost sequence. $1$ coordinate, $2$ phases of length $20$. Few samples of each phase and their projections on $\{1,-1\}$ are shown. Red `x' is a point of opposite sign to the phase.}
\end{figure}

\paragraph{Comparator.}
The comparator $\{\vec u_t\}$ is piecewise-constant across phases.  
In phase $p$, $\vec u_t$ is fixed to a unit $\ell_2$ vector pointing opposite to the phase mean (equal-mass over coordinates), and switches at phase boundaries; hence the path length $P_T=28$ ($2$ at each switch, with $14$ total switches).

\paragraph{Algorithms.}
We instantiate \textsc{GD}, \textsc{LazyGD}, and \textsc{$k$-lazyGD} with laziness parameters $k \in \{65,150,300,1500\}$.  
All algorithms operate in $\X$ on the same gradient sequence $\{\vec g_t\}$ and use an identical regularization schedule, $
\sigma = \sqrt{\nicefrac{t}{60}} \;\approx\; \sqrt{\nicefrac{t}{2\tau}},
$
where $\tau=30$ is an upper bound on the comparator path length.  
Since the horizon $T$ is unknown, each algorithm simply substitutes the current time $t$ for $T$. Other approaches, e.g., the doubling trick, are possible; the important point is that all methods are treated consistently.  The sole distinction is that \textsc{$k$-LazyGD} varies in $k$; all other settings are kept the same. 
Note that $\sqrt{T/\tau} \approx 50$ (recall $k^\star = \Theta(\sqrt{T/P_T})$), so the first two $k$ values are representative of this range.  
In the experiments, we observe that $k \in \{65,150,300\}$ all outperform the \ref{eq:gd} benchmark, achieving improvements of up to $45\%$.\footnote{If the ensemble framework were employed, it would automatically select the best-performing $k$ for the realized sequence, regardless of the minimax-optimal $k^\star$. While $k^\star$ is a minimax choice, in easier sequences other values of $k$ may perform better, which is precisely the motivation for Sec. $5$.}

\begin{figure}[h]
    \centering
    \includegraphics[width=0.99\linewidth]{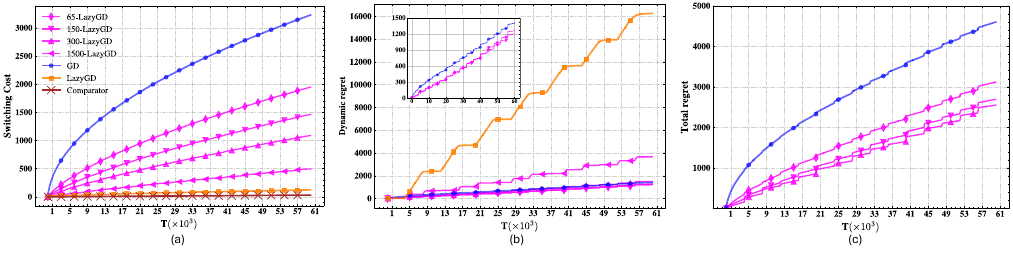}
    \caption{Shifting stochastic sequences. 
    $(a)$ Switching cost, $(b)$ hitting cost, and $(c)$ total regret for \textsc{GD}, \textsc{LazyGD}, and \textsc{$k$-LazyGD} with different $k$.}
    \label{fig:STOCH}
\end{figure}

\subsection{Corrupted Sequences}
We next consider a deterministic counterpart to the stochastic phases above.  
Gradients $\vec g_t$ are normalized to satisfy $\|\vec g_t\|_2 = 1$, so $G \doteq 1$.  
We consider linear costs
\[
f_t(\vec x) \doteq \langle \vec g_t, \vec x \rangle,
\qquad \vec g_t \in [-1,1]^5,\;\|\vec g_t\|_2 = 1,
\]
with decisions constrained to the $\ell_1$ unit ball
\[
\X \doteq \{\vec x \in \mathbb{R}^5 : \|\vec x\|_1 \le 1\}.
\]

\paragraph{Sequence generation.}In each phase, gradients are predominantly aligned with one sign for long stretches (here of length $100$), but are interspersed with short bursts of the opposite sign (length $10$).  
The phase length is $2000$, and a phase is labeled positive or negative according to its dominant sign, while the bursts act as controlled perturbations.  
All gradients are $\ell_2$-normalized, and concatenating $P=10$ alternating phases yields a horizon of length $T_{\text{tot}} \doteq P\,T_{\mathrm{phase}}$.  

This construction is interesting because it isolates the effect of structured, adversarially placed bursts, without the variability introduced by sampling. It therefore serves as a complementary stress test for how the algorithms respond to disruptive fluctuations.  

\begin{figure}[h!]
    \centering
    \includegraphics[width=0.5\linewidth]{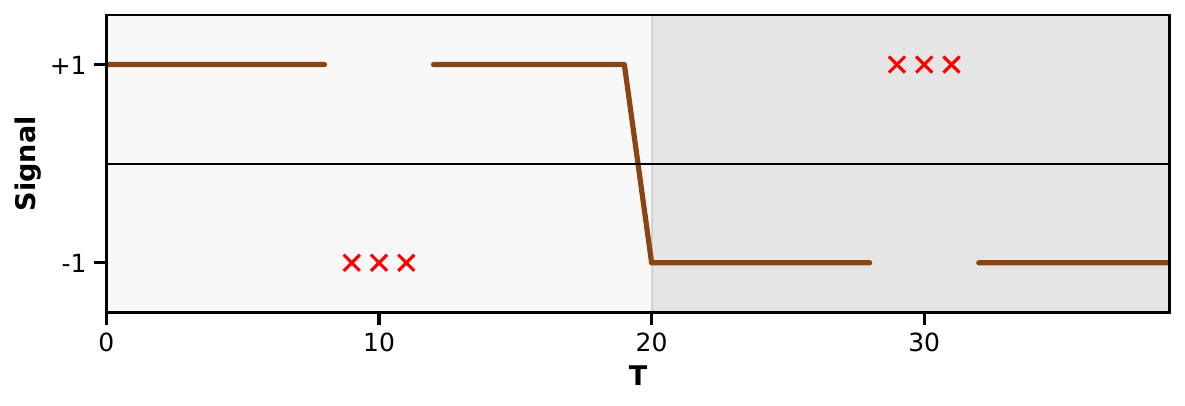}
    \caption{Example cost sequence. $1$ coordinate, 2 phases of length $20$. Red `x' is the corruption burst.}
\end{figure}

\paragraph{Comparator.}
As before, the comparator $\{\vec u_t\}$ is piecewise-constant across phases, always pointing opposite to the phase mean.  
Switches occur at phase boundaries, giving path length $P_T=9\,\frac{2}{\sqrt{5}}$ ($\frac{2}{\sqrt{5}}$ at each of the $P=9$ switches, recall $\vec u$ is unit $\ell_1$ norm here).  

\paragraph{Algorithms.}
We evaluate the same set of algorithms as in the stochastic case: \textsc{GD}, \textsc{LazyGD}, and \textsc{$k$-lazyGD} with $k \in \{50,150,500,1500\}$.  
All methods share the same regularization schedule: $
\sigma = \sqrt{\nicefrac{t}{16}} \;\approx\; \sqrt{\nicefrac{t}{2\tau}}$ (recall $P_T\leq \tau$), and \emph{differ only in the laziness parameter} $k$. 
Note that $\sqrt{T/\tau} \approx 50$ (Recall $k^\star=\Theta(\sqrt{T/P_T})$), and hence the first two $k$ values are representative of this range.

\begin{figure}[h]
    \centering
    \includegraphics[width=0.99\linewidth]{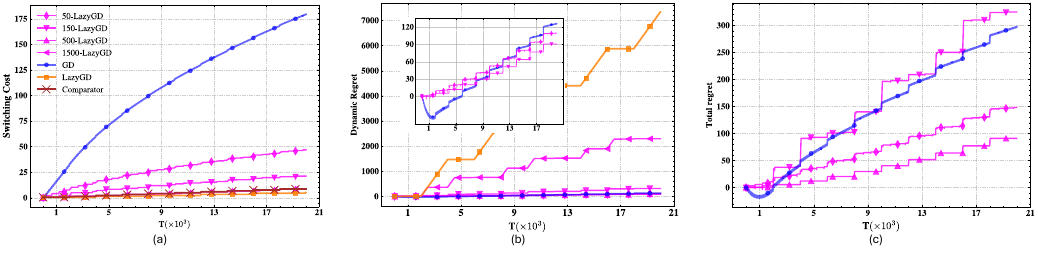}
\caption{Corrupted phases. 
    $(a)$ Switching cost, $(b)$ dynamic (hitting) regret, and $(c)$ total regret for \textsc{GD}, \textsc{LazyGD}, and \textsc{$k$-LazyGD} with different $k$.}    \label{fig:CP}
\end{figure}

\subsection{Takeaway from Experiments}
The simulations highlight three key observations:  
$(i)$ \textsc{$k$-lazyGD} can achieve substantial reductions in switching cost across different choices of $k$ (see parts~$(a)$).
$(ii)$ In many cases, these savings come with little deterioration in hitting cost, and in some cases even yield improvements (parts~$(b)$).
$(iii)$ The reduction in switching mostly dominates the minor loss in tracking, resulting in lower overall regret (parts~$(c)$).
Overall, the results show that controlled laziness offers a favorable trade-off, yielding both stability and responsiveness. 

The advantages of $k$-lazy algorithms are most pronounced in settings with many \emph{undue movements} (movements that add switching cost without improving hitting performance).  
Both experimental scenarios share this property.  
In the stochastic case, chasing individual random gradients induces movement but rarely improves tracking, since subsequent samples tend to cancel out and only the mean direction matters.  
In the corrupted-phase case, reacting to short bursts similarly increases movement, yet the transient gains are offset by the losses incurred when returning to the baseline.  

\paragraph{The worst case for \textsc{$k$-lazyGD}.} Of course, one can construct sequences that are adversarial for $k$-lazy methods, where \emph{every} switch is informative and delaying movement is always worse than immediate movements (this is the lower-bound construction). Though we argue this is less practical than the SEA or corrupted sequences above, we provide this case for transparency. 
The setup is similar to that of corrupted sequences (with $5$ phases of length $300$ each), except there are \emph{no} outliers; each phase is a \emph{pure} sign (and the comparator is of opposite sign). Here $\sigma = \sqrt{\nicefrac{t}{2\tau}}, \tau = 4,\, k=19 = \lfloor\sqrt{T/ \tau}\rfloor$. Note that the final regret amount is still in the order of $\sqrt{2R\,(P_T{+}1)\,T}$.

\begin{figure}[h]
    \centering
    \includegraphics[width=0.9\linewidth]{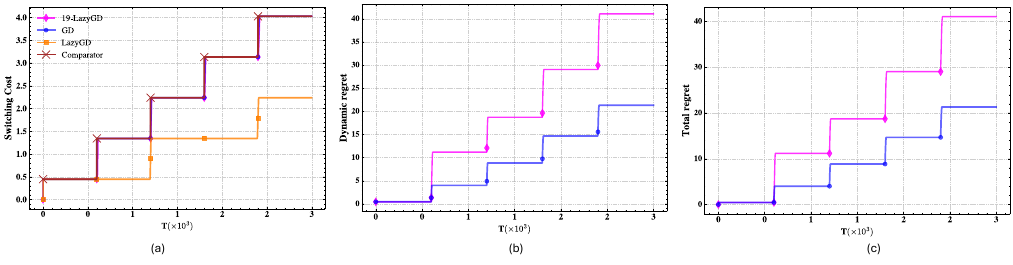}
\caption{Worst-case for \textsc{$k$-lazyGD} methods: every switch is the beginning of a new phase: no outliers due to randomness or deterministically. 
    $(a)$ Switching cost, $(b)$ dynamic (hitting) regret, and $(c)$ total regret for \textsc{GD}, \textsc{LazyGD}, and \textsc{$k$-LazyGD} with different $k$.}    \label{fig:WC}
\end{figure}
Note that the existence of such sequences 
does not undermine the findings: the ensemble framework can already include the greedy update as the special case $k=1$, ensuring robustness across all scenarios. In practice, this means that when undue movements dominate, larger $k$ values yield stability benefits, while in worst-case settings the ensemble naturally falls back to the fully greedy strategy.

\subsection{The Ensemble Framework: \textsc{SAder} with OGD vs. \textsc{$k$-lazyGD} Base Learners}
As discussed earler, the optimal slack parameter $k$ is not known in advance. We therefore learn it online via an ensemble approach. Specifically, we adopt the Smoothed Ader (SAder) meta-learner of \cite{zhang2021revisiting} and use it to aggregate a set of base learners with different slack values.

We compare two ensembles:
$(i)$ the original \textsc{SAder}, which aggregates a set of OGD base learners with different learning rates, and 
$(ii)$ \textsc{SAder-$k$}, which uses the \emph{same} SADER meta-learner (with the same meta learning rate and the same learning-rate set), but replaces each OGD base learner with a $k$-lazyOGD base learner. 

\paragraph{SAder (OGD experts).}
The base learners are OGD algorithms with learning rates
\[
\eta \in \left\{ 
\frac{1}{\sqrt{T}}, 
\frac{2}{\sqrt{T}}, 
\ldots, 
\frac{16}{\sqrt{T}} 
\right\}.
\]

\paragraph{SAder-$k$ (\ref{k-lazy} experts).}
We use the same learning-rate set,
and associate to each $\eta$ a laziness slack
$k = \frac{5\sigma}{2} = \frac{5}{2\eta}$. Thus, each OGD expert is replaced by its corresponding \textsc{$k$-lazyGD} variant.

\subsubsection{Shifting Stochastic Sequences}

\begin{figure}[h]
    \centering
    \includegraphics[width=0.9\linewidth]{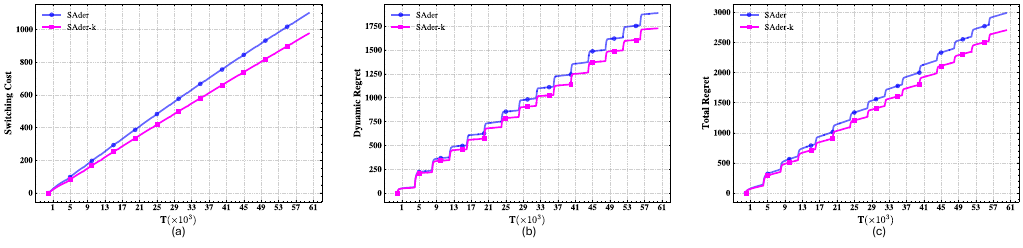}
\caption{Shifting stochastic sequences with ensemble over greedy (\textsc{SAder}) vs. \textsc{$k$-lazy} (\textsc{SAder-k}) learners. 
    $(a)$ Switching cost, $(b)$ dynamic (hitting) regret, and $(c)$ total regret. }    \label{fig:sader-k-stoch}
\end{figure}

We follow the shifting stochastic phases setup from above.
Figure~\ref{fig:sader-k-stoch} reports switching cost, hitting (dynamic) regret, and their sum (total regret).
\textsc{SAder-$k$} improves both components ($\sim 8.5\%$ in hitting costs, and $11.4\%$ in improvement cost).

These gains are consistent with the SEA-like structure of the sequence: within each phase, the variance in gradients induce frequent, often uninformative OGD updates, whereas \textsc{$k$-lazyGD} filters  such fluctuations (by simply aggregating) and reduces unnecessary switching.

\subsubsection{Corrupted Sequences}
\begin{figure}[h]
    \centering
    \includegraphics[width=0.9\linewidth]{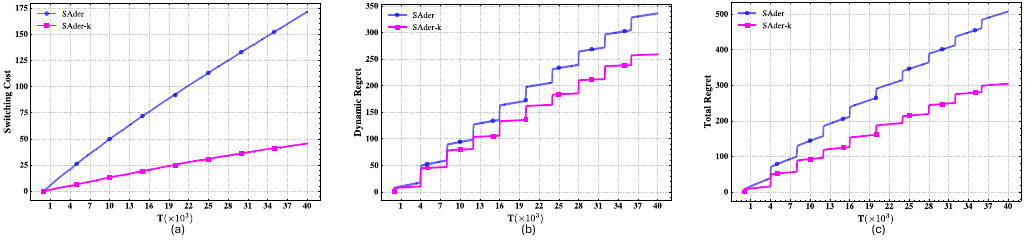}
\caption{Corrupted sequences with ensemble over greedy (\textsc{SAder}) vs. \textsc{$k$-lazy} (\textsc{SAder-k}) learners. 
    $(a)$ Switching cost, $(b)$ dynamic (hitting) regret, and $(c)$ total regret. }    \label{fig:sader-k-det}
\end{figure}
We next follow the corrupted phases setup. Figure~\ref{fig:sader-k-det} summarizes the results.
Here, \textsc{SADER-$k$} yields a notable reduction in switching cost alongside improved hitting performance: ($\sim 23\%$ in hitting costs, and $74\%$ in improvement cost)

The improvement is driven by the ability of \textsc{$k$-lazyGD} experts to ignore short corruption bursts that would otherwise trigger expensive, transient movements in greedy OGD experts; the meta learner can then concentrate weight on these more stable experts, resulting in lower total regret.

\section{MORE DETAILS ON THE LOWER BOUND}
\label{app:LB-details}
The idea behind the lower bound is to construct a sequence that makes any $k$-lazy algorithm behave as if it were fully $T$-lazy.  
This is done by forcing the iterate to return to the feasible set every $k$ steps, thereby preventing pruning during the following $k-1$ steps, and so on. 

Within each such $k$-step interval, and since pruning occurs between intervals, the learner accumulates a loss of at least $-\frac{k}{2}$, while the comparator achieves $-k$ (as long as it still has two switches available).  
Hence, the regret \emph{per interval} is roughly $\frac{k}{2}$, and the total regret is on the order of $\frac{k}{2}\cdot \frac{P_T}{2}$.  
The formal proof in the main text spells out the index arithmetic needed to make this precise.

\subsection{Odd $k$}
When $k$ is odd, we use the modified square shape 
\begin{align}
\label{eq:cost-const-app}
g_t = \begin{cases}
+1, & \text{for } t \in \left[(i-1)k+1, (i-1)k + \frac{k-1}{2}\right] \quad \text{(First half)} 
\\
-1, & \text{for } t \in \left[(i-1)k + \frac{k-1}{2}+1, ik-1\right] \quad \text{(Second half)}
\\0, &\text{for } t = ik \quad \text{(Residual)}
\end{cases}
\end{align}
$\text{for } i = 1, \dots, \lfloor T/k\rfloor .$
This construction maintains the two properties that the lower bound used. Mainly, it is still $k$ periodic: 
That is, $\forall i\in\{1,\dots,\lfloor \frac{T}{k}\rfloor\},j\in\{0,\dots, T\!-\!ik\}$: 
\begin{align}
    \label{eq:sum-base-period-app}
    \sum_{t=j+1}^{j+ik}\!g_t = 0.\; 
\end{align}
Moreover, since the sequence starts with the positive half, the sum over any window is non-negative:
\begin{align}
    \sum_{t=1}^{j} g_t \geq 0,\quad  \forall j\in[T].
    \label{eq:lb-non-negativity}
\end{align}
Moreover, the losses within an active interval differ slightly for the learner and the comparator.  
Specifically, the learner incurs $-\tfrac{k-1}{2}$, while the comparator incurs $-(k-1)$, since the final step contributes zero cost.  
Hence, the regret over an active interval is
\begin{align}
    -\frac{k-1}{2} - \bigl(-(k-1)\bigr) \;=\; \frac{k-1}{2},
\end{align}
which yields the same lower bound for all $k \geq 2$ (noting that $k=1$ corresponds to \textsc{GD}, for which this argument does not apply).

Note that combining our lower bound with the universal one $\Omega(\sqrt{T(P_T+1)})$ from \cite[Thm. 2]{zhang2018adaptive} results in an overall lower bound of: 
\begin{align}
    \label{eq:lower-bound-max}
    \R_T = \Omega\left(\max\left(\sqrt{T(P_T+1)}, k\,P_T \right)\right)
\end{align}

\section{ADDITIONAL RELATED WORK}
\label{app:related-work}
\subsection{SOCO, Adaptive Adversaries, and Policy Regret.}

This subsection connects the SOCO framework to related concepts in the online learning literature, in a way that, to our knowledge, does not appear compiled in one place, especially for the SOCO model.  Our aim is to clarify why the cost is naturally decomposed into hitting and movement terms, why the updates optimize only the hitting part $f_t$, and how this nevertheless yields control over the joint hitting{+}switching objective. We also show that other convex and Lipschitz movement costs can be incorporated.
The discussion below focuses on the fully online (no look-ahead) variant of SOCO.

\paragraph{Absorbing switching into the per-round loss.}
A natural idea is to absorb the movement penalty into the per-round loss (as it stays convex in $\vec x$) and apply a standard OCO algorithm. Concretely, define
\begin{align}
    j_t(\vec x)\;\doteq\; f_t(\vec x)\;+\;\|\vec x-\vec x_{t-1}\|.
\end{align}
If an algorithm guarantees sublinear static regret on $\{j_t\}$, this seems to yield small hitting and switching costs simultaneously. The problem is that the induced benchmark is \emph{misaligned} with the SOCO objective. Indeed,
\begin{align}
\sumT \bigl(j_t(\vec x_t) - j_t(\vec u_t)\bigr)
= \sumT\!\Big( f_t(\vec x_t)-f_t(\vec u_t)\Big)
\;+\; \sumT\!\Big(\|\vec x_t-\vec x_{t-1}\| - \|\vec u_t-\vec x_{t-1}\|\Big),
\label{eq:misaligned}
\end{align}
which compares our switching to the \emph{comparator’s switching measured against our own history} $\vec x_{t-1}$, rather than its history $\vec u_{t-1}$. As first noted in \cite{arora2012online}, this corresponds to a \emph{reactive/non-oblivious} setup: the loss mapping $j_t:\X\to\mathbb{R}$ is \emph{parameterized by the learner’s realized decision} $\vec x_{t-1}$.

In SOCO, the resulting mismatch term is \emph{negative} and may be dropped. That is, this misaligned quantity in \eqref{eq:misaligned} can still be bounded by $\R_T$ similar to what we did in \eqref{eq:total_regret}. For a \emph{single fixed $k$}, such looseness can still suffice to analyze SOCO. However, this approach breaks down in \emph{ensemble} (expert/meta) designs: to tune $\sigma, k$ online, we require the algebraic cancellation between meta-regret and base-expert regret (see the common terms in Theorem.~8, especially the cancellation between \eqref{eq:cancel-1} and \eqref{eq:cancel-2}). That cancellation fails if per-round losses reference the learner’s own trajectory (via $\vec x_{t-1}$) rather than an external benchmark. 

Hence, embedding switching inside $j_t(\vec x)$ is brittle; even when it does not alter a fixed $(\sigma, k)$ analysis, it breaks meta-learning over $(\sigma, k)$. This motivates the standard SOCO practice of analyzing hitting regret against $\{f_t\}$ and \emph{separately} controlling  $\|\vec x_t-\vec x_{t-1}\|$. A parallel route to the same two-part analysis is via policy regret.

\paragraph{Policy regret fixes the benchmark.}
The remedy is to explicitly model dependence on the previous action and study \emph{policy regret} for an $m\!=\!1$ memory loss. Define the bivariate loss
\begin{align}
    j_t(\vec y,\vec x)\;\doteq\; f_t(\vec x)+\|\vec x-\vec y\|, 
    \qquad (\vec y,\vec x)\in \X\times\X,
\end{align}
and evaluate
\begin{align}
\sumT \underbrace{j_t(\vec x_{t-1},\vec x_t)}_{\text{our policy at $t$}}
\;-\;
\underbrace{j_t(\vec u_{t-1},\vec u_t)}_{\text{comparator policy at $t$}}
\;=\;
\sumT\!\Big( f_t(\vec x_t)-f_t(\vec u_t)\Big)
\;+\; \sumT\!\Big(\|\vec x_t-\vec x_{t-1}\|-\|\vec u_t-\vec u_{t-1}\|\Big),
\end{align}
which exactly recovers the SOCO dynamic-regret-with-switching objective. This places us in the \emph{oblivious-with-memory} model: the loss mapping $j_t:\X\times\X\to\mathbb{R}$ is \emph{independent of the learner’s realized decision} (hence non-reactive), and dependence on $(\vec x_{t-1},\vec x_t)$ arises only through \emph{evaluation}, not through the \emph{definition} of $j_t$ after observing the learner’s past. Such memoryful but oblivious adversaries were first handled in \cite{anava2015online}.

\paragraph{Unary relaxation and why we optimize only $f_t$.}
As observed by \cite{anava2015online}, memory losses admit a Lipschitz relaxation. For any loss $l_t$ that is $L$-Lipschitz in its first argument,
\begin{align}
    l_t(\vec y,\vec x)
    \;\le\;
    \underbrace{l_t(\vec x,\vec x)}_{\doteq\;\tilde l_t(\vec x)\; \text{unary form}}
    \;+\; L\,\|\vec y-\vec x\|. \label{eq:unary-relatxation}
\end{align}
This yields the following recipe:
(i) run any OCO method on the unary losses $\tilde j_t(\vec x)$, and
(ii) ensure successive decisions are stable (small $\|\vec x_t-\vec x_{t-1}\|$).
Together, these two properties imply a policy-regret guarantee, hence a SOCO guarantee.

For SOCO, $j_t(\vec y,\vec x)$ is $1$-Lipschitz in $\vec y$,\footnote{By the triangle inequality, $\big|\,\|\vec x-\vec y\|-\|\vec x-\vec y'\|\,\big|\le \|\vec y-\vec y'\|$ for fixed $\vec x$.}
and its unary form is
\[
j_t(\vec x,\vec x)
\;=\; f_t(\vec x)+\|\vec x-\vec x\|
\;=\; f_t(\vec x).
\]
Therefore, it suffices to run an OCO algorithm on $f_t$ with appropriate (static/dynamic) regret guarantees, provided the algorithm also ensures small movement. 

This perspective clarifies why SOCO algorithms, classical GD, dual-averaging / lazy-GD, and our partially-lazy variants, use only gradients of $f_t$: the hitting part is handled by OCO guarantees, while the switching part is handled by stability (via non-expansiveness and regularization). Our $k$-lazy design strengthens this effect, as quantified in Propositions~3 and~4 in the main text, and \ref{prop:variance} presented later in the appendix.

\paragraph{On other switching-cost forms.}  
From \eqref{eq:unary-relatxation}, we can see that the fixed $\ell_2$ norm can be replaced by any other norm (not necessarily Mahalanobis norms $\|\cdot\|_{A_t}$), provided the movement cost is Lipschitz with respect to that norm and the constant $L$ is adjusted accordingly\footnote{Convexity is also required for the meta learning framework.}. Moreover, by properties of Euclidean projection, the switching cost remains bounded in the $\ell_2$ norm, which in turn controls all other norms by equivalence of norms.

The setting in the main paper corresponds to a fixed movement coefficient \(\lambda \|x_{t+1}-x_t\|\) with \(\lambda=1\).\footnote{Equivalently, one may view this as a switching-cost function that is \(1\)-Lipschitz with respect to the underlying norm. More generally, the same argument applies to any fixed \(L\)-Lipschitz switching penalty, with the constants adjusted by \(L\).} The analysis extends to any fixed \(\lambda>0\): the movement term in the dynamic FTRL decomposition is simply multiplied by \(\lambda\), and optimizing the regularization parameter accordingly introduces the corresponding \(\sqrt{\lambda}\) dependence in the final bound.  A concurrent work~\citep{esposito2026parameter} studies a complementary setting with time-varying movement coefficients \(\{\lambda_t\}_{t=1}^T\) and proves an adaptive bound of the form $\tilde{\mathcal{O}}(\sqrt{(1+P_T)(T+\sum_t\lambda_t)})$. Their results are developed for unconstrained domains and are therefore comparator-adaptive. Here, we focus on the different question of the possibility of (partial) laziness and the impact thereof on the switching cost in bounded domains, where the projection operator interacts favorably with a (partially) lazily aggregated state, an aspect not addressed in their framework.

\subsection{Relation to other Phased Updates}

The phased structure of the update rule in~\textsc{$k$-lazyGD} is reminiscent of blocking methods, which also operate in phases~\citep{merhav2002sequential, chen2020minimax}. These methods adopt the ``blocking argument'': the time horizon $T$ is partitioned into $S$ (or $k$) equally sized blocks, and the cumulative loss of each block is treated as a single loss function for a new OCO problem with only $S$ rounds. While conceptually analogous, the implementation is critically different. The blocking strategy implies a piecewise-constant approach where \emph{a single decision} is made and held constant for the duration of each block, with an update occurring only at the block boundaries. In contrast, the update in~\textsc{$k$-lazyGD} generates a new decision vector at every time step. This decision follows the regular FTRL update, but adjusted to be initialized differently at the beginning of each phase.

A second point of contrast is with the literature on OCO with delayed feedback~\citep{joulani2016delay, flaspohler2021online}, which also use ``stale'' information from $t-d_{t}$ (or $t-n_{t}$) where the $d_t$ is an external, potentially adversarial, imposed delay. In the delayed-feedback setting, the gradient $\vec g_t$ is not available at the end of round $t$ but arrives at some future time $t+d_t$. The update in~\textsc{$k$-lazyGD} operates under a different paradigm. It exists within the standard, non-delayed OCO framework where the gradient $\vec g_t$ is revealed immediately. The ``laziness'' or ``delay'' inherent in the~\textsc{$k$-lazyGD} update is not an external constraint to be overcome but rather a deliberate, deterministic \emph{algorithmic mechanism}. The algorithm has access to all gradients $\vec g_1, \dots, \vec g_t$ when computing $\vec x_{t+1}$ but chooses to anchor its update to a past iterate $\vec x_{t-n_t}$ and use an accumulation of recent gradients thereafter in order to avoid over-reactivity to noisy or anomalous gradients.

\section{\textsc{$k$-LazyGD} ANALYSIS}
\label{app:proofs}
\subsection{Proof of Lemma $6$}

\paragraph{Lemma $\mathbf{6}$.}
Let $\{f_{t}(\cdot), \vec u_t\}_{t=1}^T$ be an arbitrary set of functions and comparators within $\mathcal{X}$, respectively. Let $r(\cdot)$ be strongly convex regularizer such that 
\[
\vec{x}_{t+1} \doteq \argmin_{\vec x} h_{0:t}(\vec{x}) 
\] is well-defined, where $h_0(\vec{x}) \!\doteq\!{I}_{\mathcal{X}}(\vec x)+r(\vec x)$,  $\forall t \geq 1$:
\begin{align}
       h_t(\vec{x}) \doteq \dtp{\vec p_t}{\vec x} \quad \vec p_t \in \partial \bar f_t (\vec x_t).
\end{align}
Then, the algorithm that selects the actions $\vec{x}_{t+1}, \forall t $ achieves the following dynamic regret bound:
    \begin{align}
    &\mathcal{R}_T \leq \sum_{t=1}^T\big(\overbrace{ h_{0:t}(\vec x_t) - h_{0:t}(\vec x_{t+1}) }^{(\mathbf{I})}\big) \,+\sumTO\overbrace{\|\vec x_{t+1} - \vec x_t\|}^{(\mathbf{II})} +\sum_{t=1}^{T-1} \big(\overbrace{h_{0:t}(\vec u_{t+1}) - h_{0:t}(\vec u_{t})}^{(\mathbf{III})}\big) 
    \ +\  r(\vec{u}_1).
\end{align}

\begin{proof}
We start from a quantity that is the dynamic regret w.r.t. the $h_t(\cdot)$ functions. Then we decompose it further. 
\begin{align}
    \sumT& h_t(\vec x_t) - \sumT h_t(\vec u_t) 
    \\
    &= \sumT \left(h_{0:t}(\vec x_t) - h_{0:t-1}(\vec x_t)\right) - \left( \sumT \left(h_{0:t}(\vec u_t) - h_{0:t-1} (\vec u_t)\right)\right) 
    \\
    &= \sumT h_{0:t}(\vec x_t) - \sumT h_{0:t-1}(\vec x_t) - \left( \sumT h_{0:t}(\vec u_t) - \sumT h_{0:t-1}(\vec u_t)\right)
    \\
    &= \sumT h_{0:t}(\vec x_t) - \sum_{t=0}^{T-1} h_{0:t}(\vec x_{t+1}) - \left( h_{0:T}(\vec u_T)+ \sum_{t=1}^{T-1} h_{0:t}(\vec u_t) - \sum_{t=0}^{T-1} h_{0:t}(\vec u_{t+1})\right)
    \\
    &\leq \sumT h_{0:t}(\vec x_t) - \sum_{t=1}^{T-1} h_{0:t}(\vec x_{t+1}) - \left( h_{0:T}(\vec x_{T+1})+\sum_{t=1}^{T-1} h_{0:t}(\vec u_t) - \sum_{t=1}^{T-1} h_{0:t}(\vec u_{t+1})-r(\vec u_1)\right)
    \\
    &= \sumT h_{0:t}(\vec x_t) - \sum_{t=1}^{T} h_{0:t}(\vec x_{t+1}) - \left(\sum_{t=1}^{T-1} h_{0:t}(\vec u_t) - \sum_{t=1}^{T-1} h_{0:t}(\vec u_{t+1})\right)+r(\vec u_1).
\end{align}
The inequality holds because of the update rule for each $\vec x_{t+1}$ for any $t$. I.e., 
\[h_{0:T}(\vec x_{T+1})\leq h_{0:T}(\vec u_{T}).
\] Also, $h_0(\vec {u}_{1}) = r(\vec u_1)$.

Writing $h_t$ of the LHS explicitly, and adding the total switching cost to both sides, we get
\begin{align}
    \sumT &\dtp{\vec p_t}{\vec x_t - \vec u_t} +\sumtsw
    \leq \sumT \left( h_{0:t}(\vec x_t) - h_{0:t}(\vec x_{t+1}) \right)+ \sum_{t=1}^{T-1} \left( h_{0:t}(\vec u_{t+1}) -  h_{0:t}(\vec u_t) \right) 
    \\
    &\quad +\sumtsw + r(\vec u_1).
\end{align}
Lastly, by convexity we have that 
\begin{align}
    f_t(\vec x_t) - f_t(\vec u_t) =  \bar f_t(\vec x_t) - \bar f_t(\vec u_t) \leq  \dtp{\vec p_t}{\vec x_t - \vec u_t}
\end{align}
which completes the proof
\end{proof}

We now state the helper Lemma~\ref{lemma:min-of-linearized}, which provides an additional characterization of the \textsc{$k$-LazyGD} iterate using their FTRL from. Specifically, we show that $\vec{\hat {x}}_t$ is the minimizer not only of the original update rule in $(3)$ in the main paper, but also of a related linearized expression.
\begin{lemma}
\label{lemma:min-of-linearized}
For any $\vec x_t \in \X$ satisfying 
\[
\vec x_t \doteq \arg\min_{\vec{x}} h_{0:t-1}(\vec{x}),
\]
it also holds that
\[
\vec x_t \doteq \arg\min_{\vec{x}} \Big(h_{0:t-1}(\vec{x}) + \dtp{\vec g^I_t}{\vec x}\Big),
\]
where $\vec g_t^I$ is chosen according to~\eqref{eq:pruning-cond-n-app}.
\end{lemma}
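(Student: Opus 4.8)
The plan is to characterize both minimizers through their first-order optimality conditions and show that the pruning choice of $\vec g^I_t$ leaves these conditions unchanged, so that the minimizer cannot move. First I would write the regularized leader explicitly: with $r(\vec x)=\tfrac{\sigma}{2}\|\vec x\|^2$, we have $h_{0:t-1}(\vec x) = I_\X(\vec x) + \tfrac{\sigma}{2}\|\vec x\|^2 + \dtp{\vec p_{1:t-1}}{\vec x}$. Since $\vec x_t$ minimizes this, Fermat's rule together with the subdifferential sum rule gives $\vec 0 \in \sigma\vec x_t + \vec p_{1:t-1} + \partial I_\X(\vec x_t)$, i.e.
\[
-(\sigma \vec x_t + \vec p_{1:t-1}) \in \cone(\vec x_t).
\]

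Next I would write the optimality condition for the perturbed objective $h_{0:t-1}(\vec x) + \dtp{\vec g^I_t}{\vec x}$, namely $-(\sigma \vec x_t + \vec p_{1:t-1} + \vec g^I_t) \in \cone(\vec x_t)$, and verify it using the two cases of \eqref{eq:pruning-cond-n-app}. When $\vec g^I_t = 0$, this is identical to the condition above and holds immediately. When $\vec g^I_t = -\vec p_{1:t-1} - \sigma \vec x_t$, the regularization and accumulated-linear contributions cancel exactly, so the required inclusion collapses to $\vec 0 \in \cone(\vec x_t)$, which is always true since every normal cone contains the origin.

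Finally, because the $\tfrac{\sigma}{2}\|\cdot\|^2$ term makes $h_{0:t-1}(\cdot) + \dtp{\vec g^I_t}{\cdot}$ strongly convex, its minimizer is unique; hence verifying the optimality condition suffices to conclude $\vec x_t = \arg\min_{\vec x}\big(h_{0:t-1}(\vec x) + \dtp{\vec g^I_t}{\vec x}\big)$, as claimed.

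The only point requiring care is that the nonzero case of $\vec g^I_t$ is invoked exactly under $\vec y_t \notin \X \,\land\, n_t = 0$, which forces $\vec x_t$ onto the boundary so that $\cone(\vec x_t)$ is nontrivial and the chosen $\vec g^I_t$ is a genuine element of $\partial I_\X(\vec x_t)$; this validity was already established below \eqref{eq:pruning-cond-n}. The main (and rather mild) obstacle is keeping the subdifferential calculus rigorous at the boundary, but since the pruning rule is designed precisely so that the subgradient terms annihilate, no real difficulty arises and the cancellation is the heart of the argument.
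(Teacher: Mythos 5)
Your proposal is correct and follows essentially the same route as the paper's proof: both verify the first-order (normal-cone) optimality condition for the perturbed objective, split on the two cases of $\vec g_t^I$ from \eqref{eq:pruning-cond-n-app}, and in the pruning case observe that the cancellation reduces the condition to $\vec 0 \in \cone(\vec x_t)$, which always holds. Your added remarks on sufficiency via convexity/uniqueness and on the validity of $\vec g_t^I$ as a normal-cone element only make explicit what the paper leaves implicit.
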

\begin{proof}
We are given that $\vec x_t = \arg\min_{\vec{x} \in \mathcal{X}} h_{0:t-1}(\vec{x})$. By the first-order optimality condition for constrained minimization (e.g., \citep[Thm.~3.67]{beck-book}), it follows that
\begin{align}
    -\nabla h_{1:t-1}(\vec{x}_t) \in \cone(\vec{x}_t). \label{eq:given}
\end{align}

We now consider the objective $h_{0:t-1}(\vec{x}) + \langle \vec g^I_t, \vec x \rangle$ and verify that $\vec x_t$ also minimizes this function. The corresponding optimality condition is:
\begin{align}
    -\nabla h_{1:t-1}(\vec{y}) - \vec{g}_t^I \in \cone(\vec{y}),
\end{align}
which, when evaluated at $\vec y = \vec x_t$, becomes:
\begin{align}
    -\nabla h_{1:t-1}(\vec{x}_t) - \vec{g}_t^I \in \cone(\vec{x}_t). \label{eq:to-show-op-cond}
\end{align}

We now inspect the two possible values of $\vec g_t^I$ as defined in~\eqref{eq:pruning-cond-n-app}:
\begin{itemize}
    \item[(i)] If $\vec g_t^I = 0$, then \eqref{eq:to-show-op-cond} reduces to \eqref{eq:given}, which holds by assumption.
    \item[(ii)] If $\vec g_t^I = -(\vec p_{1:t-1} + \sigma \vec x_t)$, then
    \begin{align}
        -\nabla h_{1:t-1}(\vec x_t) - \vec g_t^I 
        &= -\vec p_{1:t} - \sigma \vec x_t  - \vec g_t^I  = 0.
    \end{align}
    and hence \eqref{eq:to-show-op-cond} becomes $0 \in \cone(\vec x_t)$, which always holds.
\end{itemize}
In both cases, $\vec x_t$ satisfies the optimality condition for minimizing $h_{0:t-1}(\vec{x}) + \langle \vec g^I_t, \vec x \rangle$ over $\mathcal{X}$, and is therefore a valid minimizer.
\end{proof}

\subsection{Proof of Lemma $7$}
\textbf{Lemma $\mathbf{7}$}
Under the Lipschitzness of the loss functions and compactness of the domain, \ref{k-lazy} iterates guarantee the following bounds on each part of the dynamic FTRL lemma for any $t$:
\begin{itemize}
    \renewcommand\labelitemi{}
  \item $\mathbf{(I)} \le \min\!\big(G\delta_t,\, \tfrac{G^2}{2\sigma}\big)$,
  \item $\mathbf{(II)} \le \tfrac{G}{\sigma}$,
  \item $\mathbf{(III)} \le (2R\sigma + kG)\,\dt$,
\end{itemize}
\vspace{2mm}
\textbf{Lemma $\mathbf{7}$.a}
\[
\mathbf{(I)}  \;\leq\; \min\!\left( G\delta_t,\, \frac{G^2}{2\sigma}\right).
\]
\begin{proof}
By  strong convexity of $h_{0:t}$, for any $\vec s_t \in \partial h_{0:t}(\vec x_t)$,
\begin{align}
h_{0:t}(\vec x_t) - h_{0:t}(\vec x_{t+1})
\;\leq\;
\langle \vec s_t, \vec x_{t+1}-\vec x_t\rangle - \frac{\sigma}{2}\delta_t^2.
\label{eq:sc-actions}
\end{align}
From Lemma~\ref{lemma:min-of-linearized}, the optimality condition of the update step at $\vec x_t$ gives
$0 \in \partial h_{0:t-1}(\vec x_t) + \vec g_t^I$
That is, $-\vec g_t^I \in \partial h_{0:t-1}(\vec x_t)$. Since $\vec p_t = \vec g_t^I + \vec g_t \in \partial h_t(\vec x_t)$, it follows that
\[
\vec g_t \!=\! -\vec g_t^I + (\vec g_t^I+\vec g_t) \in \partial h_{0:t-1}(\vec x_t) + \partial h_t(\vec x_t) \subseteq \partial h_{0:t}(\vec x_t),
\]
where the last step uses the subdifferential sum rule.
Thus we may take $\vec s_t = \vec g_t$ in \eqref{eq:sc-actions}, giving
\begin{align}
\mathbf{(I)} \;\leq\; \langle \vec g_t, \vec x_{t+1}-\vec x_t\rangle \leq G\delta_t. \label{eq:static1}
\end{align}
For the alternative bound, we use 
$ a x - \frac{b}{2}x^2 \;\leq\; \frac{a^2}{2b}, \quad a,b>0$ to obtain
\begin{align}
\mathbf{(I)} \;\leq\; \|\vec g_t\|\delta_t - \frac{\sigma}{2}\delta_t^2
\;\leq\; \frac{G^2}{2\sigma}.
\label{eq:static2}
\end{align}
Combining \eqref{eq:static1} and \eqref{eq:static2}, we get the result 
\end{proof}

\textbf{Lemma $\mathbf{7.b}$}
\[
\mathbf{(II)} \;\leq\; \frac{G}{\sigma}.
\]
To uniformly bound the switching cost and prove this lemma, we recall convex conjugates and some of their key properties.  
With pruning in play, non-regular behavior arises at block boundaries when $n_t=k-1$, (when $t$ is a multiple of $k$). By leveraging the smoothness of the conjugate function, we can handle all time steps uniformly, including those boundary cases.
\paragraph{Definition.}  
The convex conjugate of a function $r$ is defined as
\begin{equation}
    r^\star(\vec g) = \sup_{\vec x \in \mathbb{R}^d} \left\{ \langle \vec g, \vec x \rangle - r(\vec x) \right\}.
\end{equation}

\paragraph{Properties.}  
Let $f$ be a $1$- strongly convex function w.r.t some norm $ \| \cdot \| $
From the classical results of \citet[Lemma 15]{ShalevShwartz2007}, the following hold:
\begin{itemize}
    \item $f^\star (\cdot)$ is differentiable and $1$-smooth with respect to the dual norm $\|\cdot\|_\star$.
    \item $\mathrm{argmin}_{\vec x} \big\{ \langle \vec v, \vec x \rangle + f(\vec x) \big\} = \nabla f^\star(-\vec v)$.
\end{itemize}

Note that, by the above definition and properties, we can write \ref{k-lazy} in yet another way as
\begin{align}
    \vec x_t = \nabla h_0^\star(-\vec p_{1:t-1}).
\end{align}

\begin{proof} (of \textbf{Lemma $7$.b})
The regularizer $h_0$ is $1$-strongly convex w.r.t. the scaled norm $\|\cdot\|_s \doteq \sqrt{\sigma}\|\cdot\|$, hence its conjugate $h_0^\star$ is $1$-smooth w.r.t. $\|\cdot\|_s$.  
By Lemma~\ref{lemma:min-of-linearized}, we may also write
\[
\vec x_t = \nabla h_0^\star(-\vec p_{1:t-1} - \vec g_t^I),
\]
so that\footnote{The additional characterization of Lemma~\ref{lemma:min-of-linearized} is used only for $\vec x_t$, not for $\vec x_{t+1}$.}
\[
\|\vec x_{t+1} - \vec x_t\|_s
= \bigl\|\nabla h_0^\star(-\vec p_{1:t}) - \nabla h_0^\star(-\vec p_{1:t-1}-\vec g_t^I)\bigr\|_s
\;\leq\; \|\vec g_t\|_{s,\star}
= \frac{\|\vec g_t\|}{\sqrt{\sigma}}
\;\leq\; \frac{G}{\sqrt{\sigma}}.
\]
The first inequality follows from the smoothness of $h_0^\star$, and the second from the Lipschitzness assumption.
Hence, 
\[
\sqrt{\sigma}\ \|\vec x_{t+1} - \vec x_t\| \leq \frac{G}{\sqrt{\sigma}},
\] which implies the bound (by multiplying both sides by the positive $1/\sqrt{\sigma}$).
\end{proof}

\textbf{Lemma $\mathbf{7}$.c}
\[
\mathbf{(III)} \;\leq\; (2R\sigma+kG)\dt
\]
\begin{proof}
    From strong convexity
\begin{align}
    \mathbf{(III)} \leq \|\vec q_t\| \|\vec u_{t+1} - \vec u_t\| 
    - \frac{\sigma_{1:t}}{2} \|\vec u_{t+1} - \vec u_t\|^2, \label{eq:h-convex-bound}
\end{align}
where $\vec q_t \in \partial h_{0:t}(\vec u_{t+1})$.

To investigate $\vec q_t$, we expand the subdifferential:
\begin{align}
    \partial h_{0:t}(\vec x) = \vec p_{1:t} + \sigma \vec x + \cone(\vec x).
\end{align}
Choosing the zero vector from $\cone(\vec u_{t+1})$, we obtain:
$\vec q_t = \vec p_{1:t} + \sigma \vec u_{t+1}.$
Since $\|\vec u_{t+1}\| \leq R$, it follows that
\begin{align}
    \|\vec q_t\| \leq \|\vec p_{1:t}\| + R \sigma.
\end{align}

Substituting into \eqref{eq:h-convex-bound}, this yields\footnote{Dropping the negative quadratic may seem wasteful, motivating the use of $ax - \frac{b}{2}x^2 \leq \frac{a^2}{2b}$. However, this hides $x$ as a complexity term. Fixing $x$, even to its minimizer $a/b$, makes the bound linear regardless.} the bound:
\begin{align}
    \textbf{(III)} \leq \left( \|\vec p_{1:t}\| + R \sigma \right) \|\vec u_{t+1} - \vec u_t\|. \label{eq:q_t}
\end{align}
We now derive a bound on the state norm $\|\vec p_{1:t}\|$.
Recall that in the proof of Theorem $1$, we have shown that:
\begin{align}
    - \frac{\vec p_{1:t}}{\sigma} \;=\; \vec x_{t-n_t} - \frac{\vec g_{t-n_t:t}}{\sigma},
\end{align}
of which a direct consequence (from triangular inequality), is 
\begin{align}
    \|\vec p_{1:t} \| &\leq \sigma \|\vec x_{t-n_t}\| +\|\vec g_{t-n_t:t}\| 
     \leq R\sigma+\!\!\sum_{s=t-n_t}^t\!\!\|\vec{g}_s\|
     \\
     & \leq R\sigma + (n_t+1)G \leq R\sigma +kG \label{eq:bounded-state}
\end{align}
The results follows form \eqref{eq:q_t} and \eqref{eq:bounded-state}
\end{proof}
\subsection{Proof of Theorem $5$}
\label{app:proof-thm-5}
\textbf{Theorem $\mathbf{5}$}. For any sequence of convex losses and comparators, the dynamic regret with switching cost of \textsc{$k$-lazyGD} satisfies
\begin{align}
    &\R_T \leq \sumT \min\big( G\delta_t,\, \frac{G^2}{2\sigma}\big) + (2R\sigma+kG)P_T 
+ \frac{\sigma R^2}{2} + \sumT \min(\delta_t, \frac{G}{\sigma}), 
\end{align}
where $\delta_t\!\doteq\!\|\vec x_{t+1}\!-\!\vec x_t\|$.
Moreover, setting  $k = \lfloor c\sqrt{2RT/P_T}\rfloor, c\geq1$ and 
$\sigma = \sigma^\star \doteq \sqrt{(G^2{+}2G)T\,/\,(4RP_T{+}R^2)}$ gives $\R_T = \mathcal{O}\big(\sqrt{(P_T+1)T}\big). 
$
\begin{proof}
Starting from the result of Lemma $6$, and substituting the bounds for each part from Lemma $7$, we obtain 
\begin{align}
    \R_T &\leq \sumT \min\big( G\delta_t,\, \frac{G^2}{2\sigma}\big)\;+\; (2R\sigma+kG)P_T\;+\;\frac{\sigma R^2}{2}\;+\;\sumT \min(\delta_t, \frac{G}{\sigma}), 
    \\
    &\leq \frac{G^2T}{2\sigma}+\frac{\sigma}{2}R^2 + 2R\sigma P_T + k\,GP_T +\frac{G}{\sigma}T \label{eq:sigma-bound}
    \\
    &\;=\; \sqrt{T}\,\sqrt{(G^2+2G)(R^2+4RP_T)} + G\sqrt{T P_T}.
\end{align}
where we used that $r(\vec u_1) \leq \sigma \frac{R^2}{2}$. The second inequality follows by selecting the second branch of the $\min$ terms, and setting $k\leq c\sqrt{T/P_T}$. The last equality follows by substituting the optimal $\sigma$: $\sigma^\star = \sqrt{\frac{T\,(G^2+2G)}{\,R^2+4RP_T\,}}$.
\end{proof}

\textbf{Remark.}
\textit{Note on the choice of $k$.} As noted in Footnote~11, the choice
$ k=\lfloor c\sqrt{T/P_T}\rfloor $
implicitly assumes $P_T>0$. In the static case $P_T=0$, one may simply take $k=T$. More generally, this is also harmless whenever $P_T\le T^{-1/2}$, since then
$ kP_T \le TP_T \le \sqrt{T},$
so the overall regret order in \eqref{eq:lower-bound-max} is unchanged.

\section{THE ENSEMBLE FRAMEWORK ANALYSIS}
\label{app:meta}
As mentioned in the paper, we use the meta learner of SAder from \cite{zhang2021revisiting}, but over an ensemble that vary \emph{both} the regularization/learning rate and the laziness slack (see Fig.\ref{fig:meta} below). For completeness, we provide their algorithm and its guarantee below.
\begin{figure}[h]
    \centering
    \includegraphics[width=0.45\textwidth]{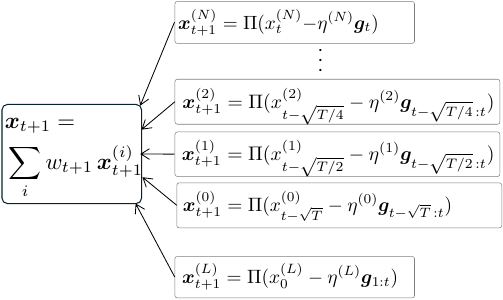}
\caption{Illustration of the meta-learning ensemble at a round $t>\sqrt{T}$. Each base learner corresponds to a different pair of hyperparameters: a learning rate/regularization scale and a laziness slack. Negative indices are interpreted as $0$, and non-integer indices are rounded down so that all update ranges are well defined. The last lazy learner $(L)$ corresponds to the fully lazy endpoint, i.e., the static case $P_T=0$; see the remark in Appendix~\ref{app:proof-thm-5}.}
    \label{fig:meta}
\end{figure}
\begin{algorithm}[h]
	\caption{\small{SAder}}
	\label{alg:LADER}
    \textbf{Input}: A step size $\beta$, the set $\mathcal{H}$ containing the regularization/laziness slack for each expert. \\
    \textbf{Output}: $\{\boldsymbol{x}_t\}_{t=1}^T$.
    \begin{algorithmic}[1] 
    \STATE Activate the set of experts $\{E^{\sigma}|{\sigma}\in \mathcal{H}\}$
    \\
    \STATE Maintain $\sigma^{(1)}\geq \sigma^{(2)} \geq \dots \sigma^{(N)}$, and initialize $w_1^{\sigma^{(i)}}=\frac{C}{i(i+1)}$.
    \FOR{$t=1, 2, \dots, T$}
        \STATE Receive $\vec x_t^{\sigma}$ from $E^{\sigma}, \forall \sigma \in \mathcal{H}$
        \STATE Output the weighted average $\vec x_t = \sum_{\sigma\in\mathcal{H}} w_t^{\sigma} \vec x_t^\sigma$
        \STATE Observe loss $f_t(\cdot)$ and record $\vec g_t\in\partial f_t(\vec x_t)$
        \STATE Updates the weight of each expert by \eqref{eq:hedge-update-per-expert}
        \STATE Send $\vec g_t$ to all experts.
    \ENDFOR
\end{algorithmic}
\end{algorithm}

The SAder algorithm follows the following weight update rule:
\begin{align}
\label{eq:hedge-update-per-expert}
\hspace{-10mm}w_{t+1}^{\sigma} = \frac{1}{M} w_{t}^{\sigma}e^{-\beta\ell_t(\vec x_t^\sigma, \vec x_{t-1}^\sigma)},\quad M\doteq \sum_{\sigma \in \mathcal{H}} w_{t}^{\sigma}e^{-\beta\ell_t(\vec x_t^\sigma,\vec x_{t-1}^\sigma) }, \quad \text{where } \ell_t(\vec x, \vec y) \doteq \dtp{\vec g_t}{\vec x - \vec x_t} + \|\vec x - \vec y\|.
\end{align}

\begin{lemma}[Ensemble regret]\cite[Lemma~3]{zhang2021revisiting}
\label{lemma:zhang-revisiting}
Run Algorithm~\ref{alg:LADER} with
\[
\beta \doteq \frac{2}{(2G+1)D}\sqrt{\frac{2}{5T}},
\]
over the expert set of \textsc{$k$-lazyGD} learners:
\begin{align}
    \mathcal H \doteq \Bigl\{ \tfrac{B}{2^{i-1}} : i=1,\dots,N \Bigr\}, 
    \, N \doteq \Bigl\lceil \tfrac{1}{2}\log_2(8T-7)\Bigr\rceil+1.
\end{align}
indexing each \textsc{$k$-lazyGD} expert directly by \(\sigma \in \mathcal H\) and setting \(k = \max(1,\lfloor\tfrac{2R}{G}\sigma\rfloor)\).
Let \(\vec x_t\) be the meta-learner’s decisions and \(\vec x^{\sigma}_t\) those of expert \(\sigma\).
Then, for any \(\sigma \in \mathcal H\),
\begin{align}
\sum_{t=1}^T \Big(\langle \vec g_t, \vec x_t \rangle + \|\vec x_{t+1}-\vec x_t\|\Big)
\;-\;
\sum_{t=1}^T \Big(\langle \vec g_t, \vec x^{\sigma}_t \rangle + \|\vec x^{\sigma}_{t+1}-\vec x^{\sigma}_t\|\Big)
\;\leq\;
(2G+1)\,2R \,\Big(\ln\big(1/w_1^{\sigma}\big)+1\Big)\sqrt{\frac{5}{8}T}.
\end{align}
\end{lemma}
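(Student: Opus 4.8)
The statement is a meta-regret bound for the exponential-weights aggregator (SAder): it compares the meta-learner's linearized hitting cost plus switching cost against that of any fixed base expert $\sigma$. My plan is to view the per-round surrogate values $\ell_t^\sigma \doteq \ell_t(\vec x_t^\sigma,\vec x_{t-1}^\sigma) = \dtp{\vec g_t}{\vec x_t^\sigma-\vec x_t} + \|\vec x_t^\sigma-\vec x_{t-1}^\sigma\|$ as the losses of a prediction-with-expert-advice instance, run the standard fixed-step exponential-weights potential argument on them, and then translate the resulting surrogate-loss regret back into the claimed hitting-plus-switching regret of the aggregated decision $\vec x_t = \sum_\sigma w_t^\sigma\vec x_t^\sigma$.

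First I would pin down the range of the surrogate loss: using $\|\vec g_t\|\le G$ and $\|\vec x\|\le R$ on $\X$ (so the diameter is $D=2R$), the linear part lies in $[-2GR,2GR]$ and the norm part in $[0,2R]$, so $\ell_t(\cdot,\cdot)$ takes values in an interval of width $\rho=(2G+1)D$. Next I would apply fixed-step Hedge telescoping with Hoeffding's lemma on the log-partition function, which for any competitor $\sigma$ gives $\sum_t\sum_{\sigma'}w_t^{\sigma'}\ell_t^{\sigma'} - \sum_t\ell_t^{\sigma} \le \tfrac{\ln(1/w_1^\sigma)}{\beta} + \tfrac{\beta\rho^2 T}{8}$, where the non-uniform prior $w_1^\sigma$ supplies the $\ln(1/w_1^\sigma)$ factor. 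The crucial simplification is the cancellation of the linear part after averaging: since $\vec x_t=\sum_{\sigma'}w_t^{\sigma'}\vec x_t^{\sigma'}$, we have $\sum_{\sigma'}w_t^{\sigma'}\dtp{\vec g_t}{\vec x_t^{\sigma'}-\vec x_t}=0$, so the weighted surrogate loss collapses to the weighted expert switching $\sum_{\sigma'}w_t^{\sigma'}\|\vec x_t^{\sigma'}-\vec x_{t-1}^{\sigma'}\|$. Substituting the explicit form of $\ell_t^\sigma$ and rearranging turns the Hedge inequality into a bound on the hitting regret $\sum_t\dtp{\vec g_t}{\vec x_t-\vec x_t^\sigma}$ plus the gap between the weighted-expert switching and expert $\sigma$'s switching.

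The main obstacle is that the left-hand side of the target involves the meta-learner's \emph{own} switching $\|\vec x_{t+1}-\vec x_t\|$, whereas the averaged surrogate only produces the weighted expert switching. To close this gap I would decompose
\[
\vec x_t-\vec x_{t-1} = \sum_{\sigma'}w_t^{\sigma'}(\vec x_t^{\sigma'}-\vec x_{t-1}^{\sigma'}) + \sum_{\sigma'}(w_t^{\sigma'}-w_{t-1}^{\sigma'})\vec x_{t-1}^{\sigma'},
\]
and take norms to obtain $\|\vec x_t-\vec x_{t-1}\| \le \sum_{\sigma'}w_t^{\sigma'}\|\vec x_t^{\sigma'}-\vec x_{t-1}^{\sigma'}\| + R\|w_t-w_{t-1}\|_1$, i.e. the weighted-expert switching lower-bounds the meta switching up to a weight-drift term. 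I would then control the total drift $\sum_t\|w_t-w_{t-1}\|_1$ via stability of exponential weights: the one-step multiplicative factor $e^{-\beta\ell}/Z_t$ (with $Z_t$ the Hedge normalizer) lies in $[e^{-\beta\rho},e^{\beta\rho}]$, so $\|w_{t+1}-w_t\|_1\le e^{\beta\rho}-1=O(\beta\rho)$ and hence $\sum_t\|w_t-w_{t-1}\|_1=O(\beta\rho T)=O(\sqrt T)$ for the prescribed $\beta$. This is the delicate part, as it is exactly what prevents the aggregate from moving more than the experts it mixes; the forward/backward index mismatch between $\|\vec x_{t+1}-\vec x_t\|$ and the surrogate's $\|\vec x_t^\sigma-\vec x_{t-1}^\sigma\|$ is handled by reindexing with $O(R)$ boundary corrections.

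Finally I would substitute $\beta=\tfrac{2}{(2G+1)D}\sqrt{2/(5T)}$, which balances $\tfrac{\ln(1/w_1^\sigma)}{\beta}$ against the Hoeffding and weight-drift terms: the former becomes the leading $(2G+1)\,2R\,\ln(1/w_1^\sigma)\sqrt{5T/8}$, while the two lower-order $O(R\sqrt T)$ contributions combine into the additive $+1$ inside the parenthesis, giving the claimed $(2G+1)\,2R\,(\ln(1/w_1^\sigma)+1)\sqrt{5T/8}$. The grid $\mathcal H$ and the identification $k=\sigma$ enter only through the prior $w_1^\sigma$ and through the fact that every base learner is a legitimate $\X$-valued online algorithm fed the common gradient $\vec g_t$; the meta-analysis is entirely agnostic to the internal $k$-lazy mechanics, which is precisely what lets the SAder guarantee apply verbatim to our ensemble.
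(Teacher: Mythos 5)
The paper never proves this statement: Lemma~\ref{lemma:zhang-revisiting} is imported verbatim as a black box from \citet[Lemma~3]{zhang2021revisiting} (``For completeness, we provide their algorithm and its guarantee below''), and the only paper-specific content is the applicability check that you correctly isolate in your closing paragraph --- each \textsc{$k$-lazyGD} expert is just an $\X$-valued online algorithm fed the common gradient $\vec g_t$, the meta-analysis never inspects the internal $(k,\sigma)$ mechanics, and the lemma enters the paper only through the cancellation of the expert's surrogate cost between \eqref{eq:cancel-1} and \eqref{eq:cancel-2} in the proof of the ensemble theorem. So where the paper defers, you supply an actual argument, and your reconstruction is in substance the proof strategy of the cited result: fixed-step Hedge on the switching-augmented surrogate $\ell_t$, whose range you compute correctly as width $\rho=(2G+1)D$ with $D=2R$; exact cancellation of the linear part under the weighted average since $\vec x_t=\sum_{\sigma'}w_t^{\sigma'}\vec x_t^{\sigma'}$; and the decomposition of the meta-learner's own movement into weighted expert movement plus a weight-drift term controlled by Hedge stability. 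The inequalities chain in the right direction, and the prescribed $\beta$ gives $1/\beta=\rho\sqrt{5T/8}$, which is exactly the leading coefficient in the claim.

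Two constant-level caveats, neither fatal to the approach. First, your drift estimate $\|w_{t+1}-w_t\|_1\le e^{\beta\rho}-1$ is loose by a second-order term: with the linearized per-step bound $\beta\rho$, the Hoeffding term $\beta\rho^2T/8=\tfrac{\rho}{4}\sqrt{2T/5}$ plus the drift contribution $R\,\beta\rho\,T=D\sqrt{2T/5}\le\rho\sqrt{2T/5}$ sum to at most $\tfrac{5\rho}{4}\sqrt{2T/5}=\rho\sqrt{5T/8}$, i.e.\ they exhaust the ``$+1$'' budget \emph{exactly}; the extra $(\beta\rho)^2T=\Theta(1)$ hidden in $e^{\beta\rho}-1$ would overshoot it by an additive $O(R)$, so landing on the stated constant requires a first-order drift bound (or absorbing the remainder using $D<\rho$ strictly when $G>0$). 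Second, the index shift between your surrogate's $\|\vec x_t^{\sigma}-\vec x_{t-1}^{\sigma}\|$ and the claim's $\|\vec x_{t+1}^{\sigma}-\vec x_t^{\sigma}\|$ costs $O(R)$ boundary terms that must live inside that same budget; you flag this but should carry it explicitly. As an order-$\sqrt{T}$ meta-regret argument your proposal is correct and self-contained; as a derivation of the exact displayed constant it needs this tighter bookkeeping, which is precisely what the proof in \citet{zhang2021revisiting} supplies and why the paper quotes the lemma rather than reproving it.
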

\vspace{6mm}

\subsection{Proof of Theorem $8$}
\textbf{Theorem $\mathbf{8}$}.
For any comparator sequence, running the meta-learner of \citep[``SAder'']{zhang2021revisiting} over a set of \textsc{$k$-lazyGD} experts from $\mathcal H$, guarantees
\[
    \R_T = \mathcal{O}\!\left(\sqrt{(P_T+1)T}\right).
\]

\begin{proof}{(of \textbf{Theorem} $\mathbf{8}$)}
    Recall the regret bound for any \textsc{$k$-lazyGD} expert (i.e., for any $\sigma$) from \eqref{eq:sigma-bound}:
    \begin{align}
        \label{eq:any-sigma-bound-expert}
        \sum_{t=1}^T \Big(\langle \vec g_t, \vec x^{\sigma}_t \rangle + \|\vec x^{\sigma}_{t+1}-\vec x^{\sigma}_t\|\;-\; f_t(\vec u_t) \Big) \leq \frac{G^2T}{2\sigma}+\frac{\sigma}{2}R^2 + 2R\sigma P_T + k GP_T +\frac{G}{\sigma}T.
    \end{align}
In the main paper, we defined the expert $\sigma^{(s)}$ where 
\begin{align}
    \sigma^{(s)}=\frac{B}{2^{s-1}}, \quad s\doteq \frac{1}{2}\lfloor\log\big( 1 + 4P_T/R\big)\rfloor +1
\end{align}
writing the above bound in \eqref{eq:any-sigma-bound-expert} for expert $\sigma^{(s)}$, and recalling $k = \max(1,\lfloor\tfrac{2R}{G}\sigma\rfloor)$
    \begin{align}
        \sum_{t=1}^T &\Big(\langle \vec g_t, \vec x^{\sigma^{(s)}}_t \rangle + \|\vec x^{\sigma^{(s)}}_{t+1}-\vec x^{\sigma^{(s)}}_t\|\;-\; f_t(\vec u_t) \Big) \leq\frac{(\nicefrac{G^2}{2}+G)T}{\sigma^{(s)}} + \sigma^{(s)} \left(\nicefrac{R^2}{2}+2RP_T+\max(\tfrac{GP_T}{\sigma^{(s)}}, 2RP_T)\right)
        \\ 
        &\leq \frac{(\nicefrac{G^2}{2}+G)T}{\sigma^{\star}} + 2\sigma^{\star} \left(\nicefrac{R^2}{2}+2RP_T+\max(\tfrac{GP_T}{\sigma^{\star}}, 2RP_T)\right)\;\leq\; \frac{(\nicefrac{G^2}{2}+G)T}{\sigma^{\star}} + 2\sigma^{\star} \left(\nicefrac{R^2}{2}+2RP_T+2\sqrt{2}RP_T\right)
        \\
        &= \sqrt{(G^2/2+G)T}\,\Big(\sqrt{2RP_T+R^2/2} +\,\frac{5RP_T+R^2/2}{\sqrt{RP_T+R^2/4}}\Big) \; = \; \mathcal{O}\big(\sqrt{(P_T+1)T}\big)
        \label{eq:cancel-2}
        \end{align}
where the second inequality follows since 
\begin{align}
    \frac{B}{2^{s-1}} \,\geq\, \frac{B}{2^{s^*-1}} \,\geq\, \frac{B}{2^s},
    \;\iff\; 
     \sigma^{(s)}  \,\geq\, \sigma^\star \;\geq\; \frac{\sigma^{(s)}}{2},\qquad s^\star \doteq \tfrac{1}{2}\log\!\big(\sqrt{1+4P_T/R}\big)+1.
\end{align}
The third inequality follows from $\sigma^\star \geq \frac{G}{2\sqrt{2}R}$
Lastly, since expert $\sigma^{(s)}$ is in the set $\mathcal{H}$ by construction, we can write the result of Lemma. \ref{lemma:zhang-revisiting}, for this specific expert:
\begin{align}
\hspace{-10mm}\sum_{t=1}^T \Big(\langle \vec g_t, \vec x_t \rangle + \|\vec x_{t+1}-\vec x_t\|\Big)
-
\sum_{t=1}^T \Big(\langle \vec g_t, \vec x^{(\sigma^s)}_t \rangle + \|\vec x^{(\sigma^s)}_{t+1}-\vec x^{(\sigma^s)}_t\|\Big)
\leq
(2G+1)\,2R \,\Big(\ln\big(1/w_1^{\sigma}\big)+1\Big)\sqrt{\frac{5}{8}T}.
\label{eq:cancel-1}
\end{align}
Summing \eqref{eq:cancel-1} and \eqref{eq:cancel-2} cancels the $\sum_{t=1}^T \Big(\langle \vec g_t, \vec x^{\sigma^{(s)}}_t \rangle + \|\vec x^{\sigma^{(s)}}_{t+1}-\vec x^{\sigma^{(s)}}_t\|\Big)$ term. The proof is then completed using 
\begin{align}
    w_1^{\sigma^s} \geq \frac{1}{(s+1)^2}
\end{align}
and hence
\begin{align}
    \ln(\nicefrac{1}{w_1^{\sigma^s}}) \leq 2\ln(s+1)\leq 2 \ln(\log(\sqrt{1+\nicefrac{6P_T}{R}}+1))
\end{align}
\end{proof}

\section{ADDITIONAL DETAILS ON LAZINESS ADVANTAGES}

\subsection{Variance-based Switching Cost}
\label{app:variance-based-sc}
Beyond the stability properties formalized in Propositions~3 and~4, laziness also yields a variance-based switching bound.  
unlike those results, this bound requires no additional assumptions on the domain or on the magnitude/direction of the accumulated gradients.  
Instead of tying movement to the magnitude of the most recent gradient, it relates switching to its deviation from the running mean. This is a classical result for FTRL based on \citet[Lemma~6]{hazan2010extracting}.

\vspace{2mm}
\begin{proposition}
\label{prop:variance}
For any convex domain and sequence of gradients, fix $0\leq n_t< k-1$ (i.e., within a lazy phase).  Let $\vec G_t \doteq \vec g_{t-n_t:t-1}$.
and define the average state
\begin{align}
    \vec \mu_t = \frac{\vec G_t}{n_t}  
\end{align}
where $\vec G_t$ is the sum of gradients accumulated in the current lazy block.  
Then
\begin{align}
    \|\vec x_{t+1} - \vec x_t\|
    \;\leq\; \frac{5}{4\sigma}\|\vec g_t - \vec \mu_t\| + \frac{4R}{n_t}.
\end{align}
\end{proposition}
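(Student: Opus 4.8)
The plan is to realize this as the classical gradient-deviation stability bound for FTRL (in the spirit of \citet[Lemma~6]{hazan2010extracting}), but carried out directly on the projection form so that the bounded radius $R$ can be exploited. First I would invoke the identity established in the proof of Theorem~\ref{thm:equivalence}: within a phase no pruning occurs, so $-\vec p_{1:t-1}/\sigma = \vec x_{t-n_t} - \vec G_t/\sigma$ and $-\vec p_{1:t}/\sigma = \vec x_{t-n_t} - (\vec G_t + \vec g_t)/\sigma$, which exhibits $\vec x_t = \Pi(\vec x_{t-n_t} - \vec G_t/\sigma)$ and $\vec x_{t+1} = \Pi(\vec x_{t-n_t} - (\vec G_t+\vec g_t)/\sigma)$. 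Writing $\vec G_t = n_t\vec\mu_t$, I would introduce the hypothetical ``all-mean'' iterate $\vec z \doteq \Pi(\vec x_{t-n_t} - (n_t+1)\vec\mu_t/\sigma)$, i.e.\ the point we would reach if the incoming gradient equalled the running mean, and split $\|\vec x_{t+1}-\vec x_t\| \le \|\vec x_{t+1}-\vec z\| + \|\vec z - \vec x_t\|$ via the triangle inequality.

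The first term is the \emph{deviation} part and is routine: the projection arguments of $\vec x_{t+1}$ and $\vec z$ differ exactly by $(\vec g_t - \vec\mu_t)/\sigma$, so non-expansiveness of $\Pi$ (Appendix~B) gives $\|\vec x_{t+1}-\vec z\| \le \|\vec g_t - \vec\mu_t\|/\sigma$. This already accounts for the deviation-from-mean scaling of the bound; the constant $5/4$ would come from the slightly lossier conjugate-smoothness route used for Lemma~7.b, or can simply be absorbed as slack.

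The crux is the second, \emph{mean-scaling} term $\|\vec z - \vec x_t\|$, where both arguments lie on the same ray from $\vec x_{t-n_t}$ in direction $-\vec\mu_t$ and differ only by a factor $\lambda = (n_t+1)/n_t$ in their gradient component. Naive non-expansiveness or the conjugate smoothness of Lemma~7.b only bound this by $\|\vec\mu_t\|/\sigma \le G/\sigma$, which carries no $1/n_t$ decay and is therefore useless here; the gain must come from the bounded domain. The step I would use is a normal-cone rescaling: setting $\vec\nu \doteq (\vec x_{t-n_t}-\vec G_t/\sigma) - \vec x_t \in \cone(\vec x_t)$ for the residual of the projection defining $\vec x_t$, the point $\vec x_t + \lambda\vec\nu$ still projects onto $\vec x_t$ (since $\cone(\vec x_t)$ is closed under nonnegative scaling), while a direct computation shows it differs from the argument defining $\vec z$ by exactly $(\lambda-1)(\vec x_t - \vec x_{t-n_t})$. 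Non-expansiveness then yields $\|\vec z - \vec x_t\| \le (\lambda-1)\|\vec x_t - \vec x_{t-n_t}\| \le 2R(\lambda-1) = 2R/n_t$, using $\|\vec x_t\|,\|\vec x_{t-n_t}\|\le R$. Combining the two parts gives $\|\vec x_{t+1}-\vec x_t\| \le \|\vec g_t-\vec\mu_t\|/\sigma + 2R/n_t$, which is at least as strong as the stated bound.

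The main obstacle is precisely this mean-scaling estimate: the statement becomes true only because the \emph{primal} iterates are confined to a ball of radius $R$, so scaling the dual state radially (as adding the mean does) moves the projection by an amount that decays like $1/n_t$ rather than remaining at the worst-case $G/\sigma$. The normal-cone trick is what converts ``scale the dual point'' into ``move by at most $(\lambda-1)$ times the primal displacement,'' and it is the step I would state most carefully; matching the exact constants $5/4$ and $4R$ of the proposition, for instance if one instead argues through $h_0^\star$ together with a cruder ray estimate, is then only bookkeeping.
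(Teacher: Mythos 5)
Your proof is correct, and it takes a genuinely different route from the paper's. The paper argues by a case split on the accumulated-state size: when $\|\vec G_t\| \le 4R\sigma$ it applies the conjugate-smoothness bound of Lemma~7.b and splits $\vec g_t = (\vec g_t - \vec \mu_t) + \vec \mu_t$; when $\|\vec G_t\| \ge 4R\sigma$ it runs a planar trigonometric argument (law of sines in the triangle $0, \vec x_t, \vec y_t$, with $\vec g_t$ decomposed along and across the ray), which moreover invokes a lower bound $\sigma \ge G/(2\sqrt{2}R)$ inherited from the optimal tuning. You avoid both the case analysis and that extra assumption on $\sigma$: your counterfactual ``all-mean'' iterate $\vec z$ cleanly isolates the deviation part, handled by plain non-expansiveness, from the mean-scaling part, and your normal-cone rescaling step is valid --- the residual $\vec \nu$ of the projection defining $\vec x_t$ lies in $\cone(\vec x_t)$, the cone is closed under nonnegative scaling so $\Pi(\vec x_t + \lambda \vec \nu) = \vec x_t$ by the characterization in~\eqref{eq:st2}, and the algebra $(\vec x_t + \lambda\vec\nu) - \big(\vec x_{t-n_t} - \lambda\vec G_t/\sigma\big) = (\lambda-1)(\vec x_{t-n_t} - \vec x_t)$ checks out with $\lambda = (n_t+1)/n_t$. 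The result is
\begin{align}
    \|\vec x_{t+1} - \vec x_t\| \;\le\; \frac{1}{\sigma}\|\vec g_t - \vec \mu_t\| + \frac{2R}{n_t},
\end{align}
which is strictly tighter than the stated $\tfrac{5}{4\sigma}\|\vec g_t - \vec\mu_t\| + \tfrac{4R}{n_t}$, holds for any convex domain (matching the proposition's claim, whereas the paper's Case-2 geometry is most natural for ball-like domains), and uses only projection optimality and non-expansiveness. What the paper's argument buys in exchange is the intermediate estimate $R\|\vec g_t\|/\|\vec G_t\|$ --- movement decaying with the \emph{size} of the accumulated state --- which ties this proposition to the stability picture of Proposition~4; your route gives no such state-magnitude interpretation, but as a proof of the stated inequality it is cleaner, more general, and constant-wise stronger. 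One cosmetic point: like the paper, you should exclude $n_t = 0$ (there $\vec\mu_t$ is undefined and the bound is vacuous), and your appeal to ``the slightly lossier conjugate-smoothness route'' to explain the constant $5/4$ is unnecessary --- your constant $1$ simply dominates it.
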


\begin{wrapfigure}{r}{0.5\textwidth} 
    \centering
    \includegraphics[width=0.45\textwidth]{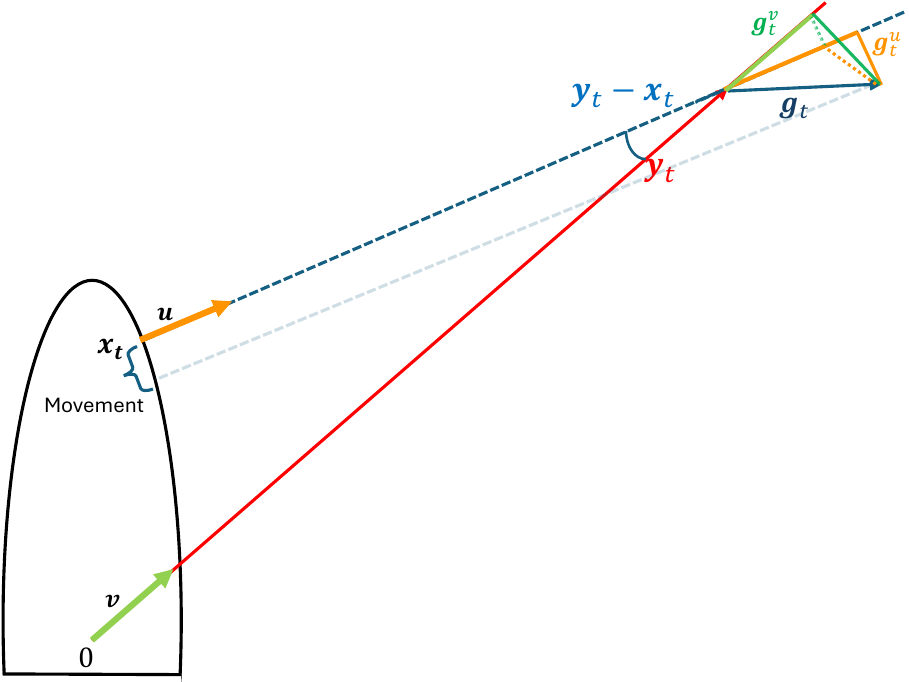}
    \caption{Geometric illustration for Prop.~3.}
    \label{fig:prop3}
\end{wrapfigure}

\begin{proof}
     
We distinguish two cases based on the growth of $\vec G_t$

\textbf{Case 1: $\|\vec G_t\| \leq 4R\sigma$}:
    \begin{align}
        \|\vec x_t - \vec x_{t+1}\| &\stackrel{\text{Lem. 7.b}}{\leq} \frac{1}{\sigma}\|\vec g_t\| = \frac{1}{\sigma} \|\vec g_t- \vec\mu_t+\vec \mu_t\|
        \\
        &\leq\frac{1}{\sigma}\|\vec g_t-\vec \mu_t\| + \frac{1}{\sigma} \|\vec \mu_t\|
        \\
        &\leq \frac{1}{\sigma}\|\vec g_t-\vec \mu_t\| + \frac{4R\|\vec \mu_t\|}{\|\vec G_t\|} = \frac{1}{\sigma}\|\vec g_t-\vec \mu_t\| + \frac{4R}{n_t} 
    \end{align}

\textbf{Case 2: $\|\vec G_t\| \geq 4R\sigma$}:
Here, we show that 
\begin{align}
     \|\vec x_t - \vec x_{t+1}\| \leq \frac{1}{\sigma}\|\vec g_t-\vec \mu_t\| + \frac{R\|\vec g_t\|}{\|\vec G_t\|}
\end{align}
From which the result follows since by the case assumption: 
\begin{align}
    \frac{R\|\vec g_t\|}{\|\vec G_t\|} \leq \frac{R\|\vec g_t-\vec \mu_t\| + R\|\vec \mu_t\|}{\|\vec G_t\|} \leq \frac{1}{4\sigma}\|\vec g_t-\vec \mu_t\|  + \frac{R}{n_t}.
\end{align}
Consider the triangle $0, \vec x_t, \vec y_t$, and note that by the sines law 
\begin{align}
\sin(\theta)=\frac{\|\vec x_t\|\sin(\varphi)}{\|\vec y_t\|} \leq \frac{\sigma R}{\|\vec G_t\|},  \label{eq:sin-bound}  
\end{align}
where $\theta$ is the angle between $\vec y_t$ and $\vec y_t-\vec x_t$, or equivalently, between their unit vectors $\vec v$ and $\vec u$, respectively (see Fig. \ref{fig:prop3}). Let $\vec g_t = \vec g^{\vec{v}}_t + \vec g^{\vec{u}}_t$ be the component of the gradient along the said directions.

Consider the point $\vec y_t - \frac{1}{\sigma}\vec g_t^{\vec u}$, we know that: 

\noindent$(i)$ it is outside $\vec X$: 
\begin{align}
    \|\vec y_{t}-\frac{1}{\sigma}\vec g^{\vec{u}}_t\| \geq \|\vec y_{t}\| - \frac{1}{\sigma} \|\vec g^{\vec{u}}_t\| \geq \frac{1}{\sigma}\|\vec G_t\| - \frac{G}{\sigma} \geq 4R - \frac{G}{\sigma} \stackrel{(\sigma\geq \nicefrac{G}{2\sqrt{2}R})}{\geq} R. 
\end{align}
Where we used a crude lower bound on $\sigma=\sqrt{(G^2+2G)T/(4RP_T+R^2)}$.

$(ii)$ its projection to $\X$ is $\vec x_t$ since  it is on the same line as $\vec y_t - \vec x_t$.
Thus, 
\begin{align}
    \|\vec x_t - \vec x_{t+1}\| \leq \|\vec y_{t+1} - (\vec y_{t}-\frac{1}{\sigma}\vec g^{\vec{u}}_t)\|= \frac{1}{\sigma}\|\vec g_t- \vec g^{\vec{u}}_t\|.
\end{align}
Lastly,  
\begin{align}
\frac{1}{\sigma}\|\vec g_t- \vec g^{\vec{u}}_t\| &\stackrel{(a)}{\leq}\frac{1}{\sigma}\|\vec g_t- \vec z\| \stackrel{(b)}{\leq}\frac{1}{\sigma}\|\vec g_t- \vec g^{\vec v}_t\| + \frac{1}{\sigma}\|\vec g^{\vec v}_t - \vec z\|
\\
&\stackrel{(c)}{\leq}\frac{1}{\sigma}\|\vec g_t- \vec \mu_t\| + \frac{1}{\sigma}\|\vec g^{\vec v}_t - \vec z\| = \frac{1}{\sigma}\|\vec g_t- \vec \mu_t\| + \frac{1}{\sigma}\|\vec g^{\vec v}_t \|\sin(\theta)
\\
&\stackrel{(d)}{\leq} \frac{1}{\sigma}\|\vec g_t- \vec \mu_t\| + \frac{R\|\vec g_t\|}{\|\vec G_t\|},
\end{align}
where $(a)$ holds $\forall \vec z$ in the span $\vec u$, $(b)$ by the triangular inquality, $(c)$ since $\vec \mu_t$ is in the span of $\vec v$, and $\vec g_t^{\vec v}$ is the projection of $\vec v_t$ onto the span of $\vec v$, and $(d)$ from $\|\vec g^{\vec v}_t\|\leq\|\vec g_t\|$ and \eqref{eq:sin-bound}.
\end{proof}

\textbf{Remarks.}  
For the fully lazy case ($k=T$), the result shows that switching can be bounded as  
\[
\Theta\!\left(\,\|\vec g_t - \tfrac{1}{t}\vec g_{1:t}\| + \tfrac{1}{t}\,\right),
\]  
namely in terms of the deviation of the most recent gradient from the running mean, rather than its raw magnitude.  
In the $k$-lazy case, the same principle applies phase by phase, with the bound expressed relative to the average within each lazy block.  
This perspective, originally from \citet[Lemma~6]{hazan2010extracting}, shows an additional laziness advantage that naturally attenuates unnecessary movement by exploiting gradient concentration, without requiring the additional domain or gradient assumptions of Propositions~3 and~4.

\subsection{On the Contractive Property of the Projection Map}
\label{app:contractive-property}
In the main text, we used a simple but useful property of the normalization map 
$\vec{x} \mapsto \frac{\vec{x}}{\|\vec{x}\|}$, which is basically the Euclidean projection onto the unit $\ell_2$ ball.
For completeness, we restate it here with a proof.

Intuitively, the projection of points lying outside the $\ell_2$ ball becomes \emph{closer} as those points move farther away from the ball's boundary. 
This expresses the \emph{contractive} behavior of the projection operator with respect to the distance from the feasible region. 
For clarity, we present the result for the unit ball ($R=1$); the same argument applies to any radius $R > 0$ by simple scaling.
\begin{lemma}
Let $f: \mathbb{R}^n \setminus \{0\} \to \mathbb{R}^n$ be the mapping $f(\vec x) = \frac{\vec x}{\|\vec x\|}$. The Jacobian of $f$, denoted $J_f(\vec x)$ or equivalently $D f(\vec x)$, has an operator norm $\|J_f(\vec x)\|_{\text{op}} = \frac{1}{\|\vec x\|}$.
\end{lemma}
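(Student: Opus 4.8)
The plan is to compute the Jacobian explicitly, recognize it as a scalar multiple of an orthogonal projector, and then read off its largest singular value. Write $r \doteq \|\vec x\|$ and $f_i(\vec x) = x_i / r$. Applying the quotient rule componentwise gives
\begin{align}
\frac{\partial f_i}{\partial x_j}
= \frac{\delta_{ij}\, r - x_i\,(x_j/r)}{r^2}
= \frac{1}{r}\Bigl(\delta_{ij} - \frac{x_i x_j}{r^2}\Bigr),
\end{align}
so that in matrix form
\begin{align}
J_f(\vec x) = \frac{1}{r}\Bigl(I - \frac{\vec x\,\vec x^\top}{r^2}\Bigr)
= \frac{1}{r}\bigl(I - \hat{\vec x}\,\hat{\vec x}^\top\bigr),
\qquad \hat{\vec x} \doteq \frac{\vec x}{r}.
\end{align}

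The key observation is that $P \doteq I - \hat{\vec x}\,\hat{\vec x}^\top$ is exactly the orthogonal projector onto the hyperplane perpendicular to $\vec x$, since $\hat{\vec x}\,\hat{\vec x}^\top$ is the rank-one projector onto $\mathrm{span}(\vec x)$ (note $\|\hat{\vec x}\|=1$). Being a projector, $P$ is symmetric and idempotent, hence diagonalizable with eigenvalues in $\{0,1\}$: the eigenvalue $0$ on $\mathrm{span}(\hat{\vec x})$ and the eigenvalue $1$ on its $(n{-}1)$-dimensional orthogonal complement. Therefore the operator norm of $P$ equals its largest eigenvalue, namely $1$ (using $n \ge 2$, which is the relevant regime for a ball domain).

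Since the operator norm is absolutely homogeneous, $\|J_f(\vec x)\|_{\mathrm{op}} = \tfrac{1}{r}\,\|P\|_{\mathrm{op}} = \tfrac{1}{r} = \tfrac{1}{\|\vec x\|}$, as claimed. I anticipate no genuine obstacle here: the only point requiring a word of care is verifying that $P$ is a projector rather than merely a symmetric matrix, which pins the top eigenvalue to exactly $1$ (and not something larger) and thus yields equality rather than just an upper bound on the operator norm. A one-line alternative that avoids spectral language is to note that for any unit vector $\vec v$, $\|P\vec v\|^2 = \|\vec v\|^2 - \langle \hat{\vec x},\vec v\rangle^2 \le 1$, with equality attained whenever $\vec v \perp \hat{\vec x}$, giving the same conclusion directly.
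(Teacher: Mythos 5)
Your proof is correct and follows essentially the same route as the paper's: both compute the Jacobian explicitly, identify it as $\tfrac{1}{\|\vec x\|}\bigl(I - \hat{\vec x}\,\hat{\vec x}^\top\bigr)$, i.e.\ a scalar multiple of the orthogonal projector onto $\vec x^\perp$, and read off the operator norm. If anything, your version is slightly more careful than the paper's on the equality claim, since you explicitly exhibit a unit vector $\vec v \perp \hat{\vec x}$ attaining the bound (and note the $n \ge 2$ requirement), whereas the paper passes from the upper bound $\le 1/\|\vec x\|$ to equality without spelling out the attainment.
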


\begin{proof}
Write $f(\vec x) = \vec x\, \|\vec x\|^{-1} = g(\vec x)\, h(\vec x)$ with $g(\vec x)=\vec x$ and $h(\vec x)=\|\vec x\|^{-1}$. 
For any $\vec v \in \mathbb{R}^n$, the product rule gives
\[
J_f(\vec x)\vec v = (J_g(\vec x)\vec v) h(\vec x) + g(\vec x)(J_h(\vec x)\vec v).
\]
Since $J_g(\vec x)=I$, we get $J_g(\vec x)\vec v=\vec v$. The scalar function $h(\vec x)=(\dtp{\vec x}{\vec x})^{-1/2}$ has gradient $\nabla h(\vec x)=-\vec x/\|\vec x\|^3$, so $J_h(\vec x)\vec v = \dtp{\nabla h(\vec x)}{\vec v} = -\tfrac{\dtp{\vec x}{\vec v}}{\|\vec x\|^3}$. Substituting,
\[
J_f(\vec x)\vec v = \tfrac{1}{\|\vec x\|}\vec v - \tfrac{\vec x(\dtp{\vec x}{\vec v})}{\|\vec x\|^3}
= \tfrac{1}{\|\vec x\|}\bigl(\vec v - \vec{\hat{x}}\dtp{\vec{\hat{x}}}{\vec v}\bigr),
\]
where $\vec {\hat{x}}=\vec x/\|\vec x\|$. The map $\vec v \mapsto \vec v - \vec{\hat{x}}\dtp{\vec{\hat{x}}}{\vec v}$ is the orthogonal projection onto $\vec x^\perp$. Thus
\[
\|J_f(\vec x)\vec v\| = \tfrac{1}{\|\vec x\|}\|P_{x^\perp} \vec v\| \le \tfrac{1}{\|\vec x\|}\|\vec v\|.
\]
Taking the supremum over $\|\vec v\|=1$ yields $\|J_f(\vec x)\|_{\mathrm{op}} \le 1/\|\vec x\|$. Hence $\|J_f(\vec x)\|_{\mathrm{op}} = 1/\|\vec x\|$.
\end{proof}

\end{document}